\documentclass{article}

\PassOptionsToPackage{dvipsnames,table}{xcolor}

\usepackage{arxiv}
\usepackage{microtype}
\usepackage[numbers,sort&compress]{natbib}

\usepackage{amsmath,amsfonts,bm}

\def\eqref#1{equation~\ref{#1}}

\def\1{\bm{1}}

\DeclareMathAlphabet{\mathsfit}{\encodingdefault}{\sfdefault}{m}{sl}
\SetMathAlphabet{\mathsfit}{bold}{\encodingdefault}{\sfdefault}{bx}{n}

\newcommand{\E}{\mathbb{E}}

\newcommand{\R}{\mathbb{R}}

\DeclareMathOperator*{\argmax}{arg\,max}
\DeclareMathOperator*{\argmin}{arg\,min}

\usepackage{hyperref}
\usepackage{url}
\usepackage{amsmath}
\usepackage{amssymb}
\usepackage{amsthm}
\usepackage{booktabs}
\usepackage{tabularx}
\usepackage{graphicx}
\usepackage{xspace}
\usepackage{wrapfig}
\usepackage{diagbox}
\usepackage{multirow}
\usepackage{array}
\usepackage{placeins}
\usepackage{algorithm}
\usepackage{algpseudocode}

\usepackage[most]{tcolorbox}

\usepackage{tikz}

\theoremstyle{plain}
\newtheorem{theorem}{Theorem}
\newtheorem{proposition}{Proposition}
\newtheorem{lemma}{Lemma}
\newtheorem{corollary}{Corollary}
\theoremstyle{definition}
\newtheorem{definition}{Definition}
\newtheorem{assumption}{Assumption}

\newtheoremstyle{myremark}
  {\topsep}      %
  {\topsep}      %
  {\itshape}     %
  {}             %
  {\bfseries}    %
  {.}            %
  {0.5em}        %
  {}             %

\theoremstyle{myremark}
\newtheorem{remark}{Remark}

\newtcolorbox{contributionbox}{
  enhanced,
  breakable,
  colback=blue!3,
  colframe=blue!45!black,
  boxrule=0.6pt,
  arc=2pt,
  left=6pt,
  right=6pt,
  top=6pt,
  bottom=6pt,
  before skip=8pt,
  after skip=8pt
}

\definecolor{rqblue}{HTML}{5677C8}
\definecolor{rqbluebg}{HTML}{EAF0FF}
\definecolor{rqcyan}{HTML}{4F91A8}
\definecolor{rqcyanbg}{HTML}{E8F5F8}
\definecolor{rqpurple}{HTML}{8A4FA3}
\definecolor{rqpurplebg}{HTML}{F3EAF7}
\newcommand{\rqbadge}[3]{%
  \tcbox[on line, boxsep=0pt,
    left=2.5pt, right=2.5pt, top=0.5pt, bottom=0.5pt,
    arc=2pt, boxrule=0.45pt, colframe=#1, colback=#2]{%
    \textcolor{#1}{\strut\scriptsize\bfseries #3}%
  }%
}
\newcommand{\rqone}{\rqbadge{rqblue}{rqbluebg}{RQ1}\enspace}
\newcommand{\rqtwo}{\rqbadge{rqcyan}{rqcyanbg}{RQ2}\enspace}

\newcommand*\circledblue[1]{%
\tikz[baseline=(char.base)]{
  \node[shape=circle, draw=NavyBlue!60, fill=NavyBlue!10, thick, inner sep=1pt] (char) {\scriptsize\textsf{#1}};
}}

\renewcommand{\E}{\mathbb{E}}

\let\argmax\relax
\let\argmin\relax
\DeclareMathOperator*{\argmax}{arg\,max}
\DeclareMathOperator*{\argmin}{arg\,min}

\newcommand{\M}{\mathcal{M}}
\newcommand{\X}{\mathcal{X}}

\newcommand{\DeltaK}{\Delta_K}

\newcommand{\ADB}{A^{\mathrm{DB}}}
\newcommand{\AhatDB}{\widehat{A}^{\mathrm{DB}}}

\newcommand{\Exploit}{\operatorname{Expl}}

\renewcommand{\1}{\mathbf{1}}

\newcommand{\Om}{\Omega}
\newcommand{\Qcal}{\mathcal{Q}}
\newcommand{\Pcal}{\mathcal{P}}

\newcommand{\elb}{\underline{e}}
\newcommand{\mucol}{\bm{\mu}}
\newcommand{\ecol}{\bm{e}}

\newcommand{\Lcal}{\mathcal{L}}

\newcommand{\method}{\textsc{NashEval}\xspace}

\definecolor{lightgreen}{RGB}{144,238,144}
\definecolor{lightred}{RGB}{255,182,193}

\title{Context-dependent agent evaluation with \mbox{orthogonal} equilibrium learning}

\author{%
Haorui Ma\thanks{Corresponding authors: Haorui Ma (\texttt{H.Ma@lmu.de}) and Stefan Feuerriegel (\texttt{feuerriegel@lmu.de}).}
\& Stefan Feuerriegel\footnotemark[1] \\
AI in Management, LMU Munich \\
Munich, Germany \\
Munich Center for Machine Learning, Germany \\
\texttt{\{H.Ma,feuerriegel\}@lmu.de}
\AND
Jiangmeng Li, Yi Li \& Fanjing Xu \\
Institute of Software \\
Chinese Academy of Sciences \\
\texttt{jiangmeng2019@iscas.ac.cn}
\And
Zehua Zang \\
Peking University
}

\begin{document}
\maketitle

\begin{abstract}
Many applications require to evaluate agents under contextual information (e.g., a prompt, task, or user group). We study how to perform such context-dependent agent evaluation from offline feedback. Existing score-based models for this purpose (e.g., Bradley--Terry) impose a transitive preference ordering, which fails to reflect collective preferences when human judgements are heterogeneous. Inspired by social choice theory, we frame evaluation as a contextual game between two players, each selecting a distribution over agents as the strategy to receive greater collective preference than the other. Then, the support of the Nash equilibrium defines a context-specific set of winners. However, learning context-specific equilibria from offline logs is difficult because each context reveals human feedback on only a subset of agents, and, hence, a na\"ive plug-in estimator can therefore be biased. To address these challenges, we propose \method, a general framework for robust contextual equilibrium learning. \method first constructs debiased estimates of the contextual payoff matrix that characterizes the game. \method then learns the context-to-equilibrium mapping with a tailored orthogonal loss, which avoids the need to solve a separate game for each context. We show theoretically that errors in estimating the nuisance functions underlying the payoff matrix affect the risk of the learned equilibrium (i.e., exploitability) only through higher-order terms. Across various experiments, \method improves robustness of equilibrium learning and consistently identifies the set of top-performing agents across contexts. 

\end{abstract}

\section{Introduction}

Many applications require agents to be evaluated together with contextual information, such as the prompt, task, language, domain, or user group~\citep[e.g.,][]{Xu.2025context, Frick.2025}.. For example, in model routing, the goal is not to identify a single globally best language model, but to select an appropriate model for a particular prompt~\citep{Ong.2025}. More generally, agents may have different strengths across tasks and domains~\citep{Frick.2025}, and human preferences can also vary across users and settings~\citep{Haghtalab.2026}. Here, we study \textbf{context-dependent agent evaluation} from selectively observed relative feedback (e.g., pairwise preferences or partial rankings) collected from a population of voters with heterogeneous preferences to identify, for each context, a set of agents that best reflects collective preference of the population (see Figure~\ref{fig:concept}).

\begin{figure}
\centering
\includegraphics[width=\linewidth]{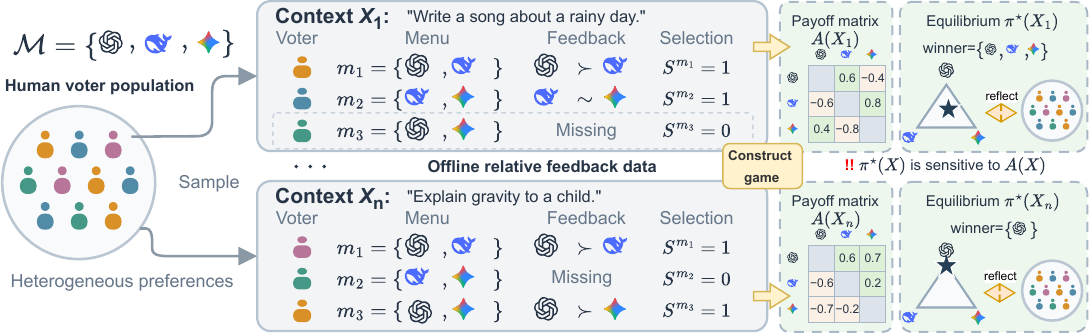}
\caption{\textbf{Agent evaluation is subject to two challenges:} (1) preferences sampled from a large population of voters are heterogeneous and, thus, often intransitive; and (2) the capabilities of agents vary across contexts. $\Rightarrow$ We propose a novel framework called \method for this task by framing evaluation as solving a contextual, two-player zero-sum game, constructed from the preferences.}
\label{fig:concept}
\end{figure}

In this paper, we propose \method, a game-theoretic framework for contextual agent evaluation. Our framework is flexible and accommodates any form of relative feedback that can be represented as ordered partitions (e.g.. pairwise preferences, partial rankings, and winner-of-subset feedback). For a given context, we formulate the evaluation as a regularized two-player zero-sum game, in which each player selects a distribution over agents and seeks to outperform the other according to the collective preference. The support of the corresponding Nash equilibrium then defines a context-specific set of winners.

To learn this equilibrium from selectively observed feedback, our framework proceeds in two stages. First, we construct a novel \textit{debiased payoff matrix (DPM)} that is robust to nuisance misspecification. Second, we use the estimated payoff to introduce a tailored orthogonal loss to directly learn the context-to-equilibrium mapping. Thereby, we amortize equilibrium computation rather than solving a separate game for every context. We show theoretically that that the robustness guarantees of the DPM transfer to the learned equilibrium; i.e., errors in estimating the nuisance functions underlying the payoff matrix affect the risk of the learned equilibrium (i.e., exploitability) only through higher-order terms and, under suitable conditions, can therefore achieve quasi-oracle convergence rate. As a by-product, we derive a general equilibrium learning framework that is robust to nuisance misspecifications.

Our main \textbf{contributions}
are: \textbf{(i)}~We propose \method, a novel game-theoretic framework for learning context-dependent agent evaluators from logged data with relative feedback. \textbf{(ii)}~We derive strong theoretical guarantees for our equilibrium learner in the form of Neyman orthogonality, bounds on the average exploitability, and a quasi-oracle convergence rate. \textbf{(iii)}~We demonstrate the effectiveness of our \method framework on both simulated and real-world datasets.

\section{Related work\protect\footnote{We provide an extended related work in Appendix~\ref{sec:related-work}.}}

\textbf{Social choice theory:} This literature studies collective choice under potentially intransitive preferences~\citep{Fishburn.1984a,Brandt.2017}. Concepts like Condorcet \citep{Condorcet.1785} and Copeland winners \citep{Copeland.1951} may fail to identify a unique or robust winner~\citep{Condorcet.1785,Laslier.1997,Tideman.1987}. We build on this literature by developing a game-theoretic solution concept for context-dependent agent evaluation.

\textbf{Agent evaluations:} There are several approaches for related but \emph{different} settings (see Table~\ref{tab:related}). Traditionally, evaluation methods aggregate feedback across contexts to produce a single \textit{global} leaderboard~\citep[e.g.,][]{Chiang.2024, Frauen.2026}, and, more recently, there are extensions to incorporate contextual information~\citep[e.g.,][]{Ong.2025,Frick.2025,Chiang.2025dueling}. However, these approaches rely on score-based models \citep[e.g., Bradley--Terry;][]{Bradley.1952}, and therefore assume \textit{transitive} preferences, meaning that if $A \succ B$ and $B \succ C$, then $A \succ C$. Importantly, this assumption can be restrictive when feedback is collected from a heterogeneous population, since aggregating individual preferences can result in intransitive collective preferences~\citep{Condorcet.1785,Kramer.1973}, particularly for subjective, open-ended tasks~\citep{Haghtalab.2026}. As a result, models that assume such transitivity can be misspecified and produce biased (or even conflicting) evaluations that do not reflect the underlying heterogeneous. To the best of our knowledge, approaches for context-dependent agent evaluation with intransitive collective preferences are missing.

\textbf{Game-theoretic approaches:} A common approach is to frame agent evaluation as a two-player zero-sum game between two dueling evaluation policies that compete head-to-head for voter preference~\citep{Yue.2012}. The game is characterized by an antisymmetric \emph{payoff matrix (PM)} constructed from the relative feedback, and its Nash equilibrium is a distribution over agents for which support is interpreted as the set of winners. Because no evaluation policy strictly beats this equilibrium, it is the best evaluation achievable under intransitive feedback and has typically favorable properties (e.g., Condorcet-consistent, meaning that when a Condorcet winner exists, the equilibrium places all probability on that agent). In the dueling bandit literature, such equilibrium concepts are referred to as the \emph{von Neumann winner} (vNw)~\citep{Dudik.2015} and, social choice theory, as the \emph{maximal lottery}~\citep{Kreweras.1965,Fishburn.1984a}. Recently, \citet{Lanctot.2023,Khalaf.2026} study maximal lotteries for \textit{global} agent evaluation. In contrast, approaches that learns the context-specific equilibrium for agent evaluation are still missing.

\begin{wraptable}{r}{0.5\textwidth}
\centering
\caption{Key works on agent evaluation from relative feedback.}
\label{tab:related}
\scriptsize
\setlength{\tabcolsep}{3pt}
\begin{tabularx}{\linewidth}{@{}l>{\raggedright\arraybackslash}X|>{\raggedright\arraybackslash}X@{}}
\toprule
\diagbox[width=9em]{\textbf{Preference}}{\textbf{Evaluator}} & \textbf{Global} & \textbf{Contextual} \\
\midrule
\textbf{Transitive} & \cellcolor{lightred!40}\citep{Chiang.2024,Frauen.2026} & \cellcolor{lightred!40}\citep{Ong.2025,Frick.2025} \\
\cmidrule(lr){1-3}
\textbf{Intransitive} & \cellcolor{lightred!40}\citep{Lanctot.2023,Khalaf.2026} & \multicolumn{1}{|c}{\cellcolor{lightgreen!40}\textbf{\method (ours)}} \\
\bottomrule
\end{tabularx}
\end{wraptable}

\emph{\textbf{Why context-dependent agent evaluation is challenging?}} A major difficulty of context-dependent evaluation is that, for a given context, a voter is shown only a small subset of agents, and thus the feedback is often \textit{partially observed}. This means that the contextual PM must be \emph{estimated} using nuisance functions\footnote{The term ``nuisance functions'' refers to components that facilitate estimation of statistical target by modeling observable distributions from data but are not themselves of direct interest.}. When these nuisance functions are misspecified, the estimated PM can be biased, which leads to a suboptimal approximation of the equilibrium (see Proposition~\ref{prop:stability}). Hence, a key challenge is to learn the contextual equilibrium in a way that is \textit{robust to errors in the nuisance function}.

\section{Preliminaries}

When preferences are aggregated across a population, they may become intransitive even if each individual's preferences are transitive (see Appendix~\ref{example:arising-cycles} for an example). In such cases, a single deterministic winner may not exist, which motivates game-theoretic solution concepts (i.e., lotteries over winners) that can represent collective preferences~\citep{Fishburn.1984a,Brandt.2017}.

\textbf{Preference game:} Let $A\in[-1,1]^{K\times K}$ denote the collective preference matrix with entries$A_{jk}=P(j\succ k)-P(k\succ j)$, which satisfies $A=-A^\top$. We interpret $A$ as the payoff matrix of a symmetric two-player zero-sum game, where the two players play strategies $\pi,q\in\Delta_K$. Player $\pi$ receives payoff $\Pcal(\pi,q)=\pi^\top A q$, and player $q$ receives $-\Pcal(\pi,q)$. The corresponding \textit{Nash equilibrium} is given by the following minimax formulation~\citep{Neumann.1928}:
\begin{align}
    \label{def:vnw}
    (\pi^\star, q^\star) = \left(\argmax_{\pi} \min_{q} \Pcal(\pi, q),\ \argmin_{q} \max_{\pi} \Pcal(\pi, q)\right).
\end{align}
By symmetry, both players share the same equilibrium strategy, and the solution is called \emph{von Neumann winner}~\citep{Dudik.2015} or \emph{maximal lottery}~\citep{Kreweras.1965,Fishburn.1984a}.

The equilibrium is well suited to intransitive preferences because it depends only on pairwise comparisons and always exists by the minimax theorem. Additional properties, including Condorcet consistency and related consistency axioms, are discussed in Appendix~\ref{sec:benefits-vnw}.

\section{Context-dependent agent evaluation from selective feedback}
\label{sec:problem}

We now formulate context-dependent agent evaluation from offline, selectively observed relative feedback as a contextual game and define its equilibrium.

\textbf{Setting:} $\bullet$~\emph{Agents}: Let $\M=\{1,\dots,K\}$ denote the candidate agents, which perform tasks specified by the context $X\in\X$ (e.g., a prompt, task, language, domain, or user group). A \emph{menu} $m \subset \M$ is a subset of agents being compared, and let $\Qcal$ denote the collection of all possible menus. For each context, $S^m\in\{0,1\}$ indicates whether menu $m$ was selected to generate the feedback, $Y_m$. 

$\bullet$~\emph{Relative feedback}: The feedback $Y_m$ is \emph{relative}, represented by an \emph{ordered partition} of the selected menu, i.e., 
\begin{align}
Y_m=(B_1\succ\cdots\succ B_r),
\qquad
B_i\ne\emptyset,\quad
B_i\cap B_j=\emptyset\ \text{ for }i\ne j,
\quad
\bigcup_{i=1}^r B_i=m,
\end{align}
where $r$ is the number of blocks, and where agents within the same block are tied and earlier blocks are preferred. The above various forms of relative feedback, including pairwise preferences, partial rankings, and winner-of-subset feedback (see Table~\ref{tab:modalities}). For example, pairwise feedback corresponds to $|m|=2$.

$\bullet$~\emph{Logged data}: We write the logged data as $O=\big(X,\ \{S^m\ :\ m\in\Qcal\},\ \{S^m Y_m\ :\ m\in\Qcal\}\big)$ and observe i.i.d. samples $\mathcal{D}_n=\{O_i\}_{i=1}^n \sim\mathbb{P}$. 

\textbf{Pairwise outcomes:} For each menu $m$, the feedback $Y_m$ naturally induces a pairwise outcome for every $\{j, k\} \subset m$:
\begin{align}
    \label{eq:verdict}
    Z_{jk}^m = \1\{j\succ_{Y_m}k\}-\1\{k\succ_{Y_m}j\}\in\{-1,0,1\},
\end{align}
which satisfies $Z^m_{jk}=-Z^m_{kj}$ and where $Z^m_{jk}=0$ denotes a tie. We are then interested in the \emph{context-dependent mean pairwise outcome}, defined as $g^m_{jk}(x)=\E[Z^m_{jk}\mid X=x]$, which quantifies the average collective preference of agent $j$ over agent $k$ under context $x$ and menu $m$. 

\textbf{Contextual payoff:} We then aggregate context-dependent mean pairwise outcomes across menus into a contextual payoff matrix (PM).
\begin{definition}[Contextual payoff matrix]
\label{def:payoff}
\em Let $\Qcal_{jk}=\{m\in\Qcal:\{j,k\}\subseteq m\}$ be the set of menus containing the pair $\{j,k\}$. Let $\mathcal{F}=\{f_{jk}:\ [-1,1]^{\Qcal_{jk}}\to[-1,1]\mid j, k \in \M, j\neq k \}$ denote set of smooth, odd functions that serve as menu aggregators, with $f_{kj}=f_{jk}$ for every $j\ne k$. Then, the contextual payoff matrix is defined by aggregating the menu-specific mean outcomes $g^m_{jk}(x)$ via the menu aggregators $f_{jk} \in \mathcal{F}$:
\begin{align}
    \label{eq:payoff}
    A(x) = \big(A_{jk}(x)\big)_{j,k\in\M},\quad \text{where }
    A_{jk}(x)=f_{jk}\Big(\big\{g^m_{jk}(x)\ :\ m\in\Qcal_{jk}\big\}\Big)\quad(j\ne k),
\end{align}
with $A_{jj}(x)=0$ for every $j\in\M$.
\end{definition}

\begin{remark}
The contextual PM characterizes the expected pairwise preferences under a given context, which reflects the collective preference of the population. Since $g^m_{kj}(x)=-g^m_{jk}(x)$, we have $A_{kj}(x)=-A_{jk}(x)$. Thus, $A(x)$ is skew-symmetric.
\end{remark} 
\begin{remark}
A simple instantiation of the aggregators is the uniformly weighted combination $f_{jk}(\textbf{g}_{jk})=\frac{1}{|\Qcal_{jk}|}\sum_{m\in\Qcal_{jk}}g_{jk}^m$. In practice, users can customize the aggregator to their needs, for example, to assign more or less weight to feedback from certain menus. See Appendix~\ref{app:instantiations} for more instantiations.
\end{remark}

\textbf{Contextual equilibrium:} Based on the contextual PM, we formulate a regularized contextual game and define its equilibrium. 
\begin{definition}[Contextual game and its equilibrium]
\label{def:contextual-game}
\em
Let $\Om$ be continuously differentiable on a neighborhood of $\Delta_K$ and $\kappa$-strongly convex on $\Delta_K$, where $\kappa>0$. We consider a \emph{regularized two-player zero-sum game}, where the expected payoff between two players with strategies $\pi, q \in \Delta_K$ is given by $F_\Om(\pi,q;x)=\pi^\top A(x)q-[\Om(\pi)-\Om(q)]$. The regularized contextual equilibrium is the minimax solution of the optimization problem:
\begin{align}
    \label{eq:contextual-equilibrium}
    \pi^\star(x) = \argmax_{\pi\in\DeltaK}\ \min_{q\in\DeltaK}\ F_\Om(\pi,q;x) := \pi^\top A(x)q-[\Om(\pi)-\Om(q)].
\end{align}
\end{definition}
Since $F_\Om(\pi,q;x) = -F_\Om(q,\pi;x)$, the game is symmetric and zero-sum. Strong convexity implies that the associated self-play operator is strongly monotone, which ensures uniqueness of the equilibrium and which also ensures Lipschitz continuity to perturbations of the PM~\citep{Dontchev.2009, Sokota.2022} (see Appendix~\ref{app:game-theory} for details). 

Further, the regularization term can be used to control the equilibrium according to constraints in practice. For example, one can incorporate deployment considerations such as agent cost or encourage properties such as sparsity. In practice, a common choice is to combine a linear term for the deployment cost with an $L_2$ regularizer; i.e., $\Om(\pi) = \frac{\lambda}{2}\|\pi\|_2^2 + \sum_{j=1}^K c_j \pi_j,$ where $c_j$ is the cost of deploying agent $j$.

\textbf{Target.} Our aim to learn the \textit{context-to-equilibrium mapping} $x\mapsto\pi^\star(x)$ defined in Eq.~(\ref{eq:contextual-equilibrium}). Here, uniqueness of the equilibrium ensures that this mapping is well-defined. We then interpret the support of the contextual equilibrium (i.e. $\text{supp}(\pi^\star(x)) = \{k: \pi^\star_k(x) > 0\}$) as the set of winning agents for the given context $x$.

We make standard identifiability assumptions (see Assumption~\ref{ass:identifiability}), such as positivity and missing at random~\citep{Rubin.1974}, so that we can identify the contextual PM from observed data through the context-dependent mean outcomes $\mu^m_{0,jk}(x)$:
\begin{align}
    \label{eq:mu-to-g}
    g^m_{jk}(x) = \E[Z^m_{jk}\mid X=x] = \E[Z^m_{jk} \mid X=x, S^m=1] := \mu^m_{0,jk}(x) \quad\text{for }m\in\Qcal_{jk}.
\end{align}
Hence, identification of the contextual PM, and thus its equilibrium, follows from Eqs.~(\ref{eq:payoff}) and~(\ref{eq:contextual-equilibrium}) (see Appendix~\ref{app:proof-identification} for details).

\textbf{Nuisances:} We refer to $\mu^m_{0,jk}(x)$ and $e^m_0(x)$ as \emph{nuisance functions} and write $\eta_0=(\mucol_0,\ecol_0)$, where (i)~$\mucol_0=(\mu_0^m)_{m\in\Qcal}$, $\mu_0^m=(\mu^m_{0,jk})_{j,k\in m,\,j\ne k}$, is the collection of the outcome regressions and (ii)~$\ecol_0=(e_0^m)_{m\in\Qcal}$  are the selection propensities across menus.

\textbf{Stability:} We adapt the stability result for equilibrium mappings~\citep{Dontchev.2009} to our context-dependent setting (see Appendix~\ref{app:payoff-stability} for the derivation and definition of $\|\cdot\big\|_\star$.).
\begin{proposition}[Contextual payoff stability]
\label{prop:stability}
For any two contexts $x,x'\in\X$,
\begin{align}
    \label{eq:stability}
    \big\|\pi^\star(x)-\pi^\star(x')\big\|
    \le \kappa^{-1}\big\|(A(x)-A(x'))\pi^\star(x')\big\|_\star 
    \le \kappa^{-1}\big\|A(x)-A(x')\big\|_\star.
\end{align}
\end{proposition}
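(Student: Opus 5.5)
The plan is to write the regularized equilibrium as the solution of a variational inequality and then use strong monotonicity of the self-play operator. For a fixed context $x$, consider the operator
\[
G_x(\pi) = \nabla\Om(\pi) - A(x)\pi .
\]
Because $A(x)$ is skew-symmetric, $\langle A(x)(\pi-\pi'),\pi-\pi'\rangle = 0$. Combined with the $\kappa$-strong convexity of $\Om$, this gives
\[
\langle G_x(\pi)-G_x(\pi'),\pi-\pi'\rangle \ge \kappa\|\pi-\pi'\|^2 ,
\]
so $G_x$ is $\kappa$-strongly monotone on $\Delta_K$.

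The first step characterizes $\pi^\star(x)$. By symmetry of the game, $(\pi^\star(x),\pi^\star(x))$ is a saddle point of $F_\Om$. The first-order optimality condition for the maximizing player, with $q=\pi^\star(x)$ held fixed, reads
\[
\langle A(x)\pi^\star(x) - \nabla\Om(\pi^\star(x)),\ \pi-\pi^\star(x)\rangle \le 0 \quad \text{for all } \pi\in\Delta_K .
\]
Equivalently, $\langle G_x(\pi^\star(x)),\pi-\pi^\star(x)\rangle\ge 0$ for all $\pi\in\Delta_K$. This is exactly where I use that $F_\Om$ is concave in $\pi$ for fixed $q$ and that $\Om$ is $C^1$ near $\Delta_K$. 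I will also check that the saddle point is attained at the diagonal, which follows from uniqueness of the equilibrium together with symmetry.

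The second step compares two contexts. Write $\pi=\pi^\star(x)$ and $\pi'=\pi^\star(x')$. Test the variational inequality for $x$ at $\pi'$, and the one for $x'$ at $\pi$, and add the two:
\[
\langle G_x(\pi)-G_{x'}(\pi'),\ \pi'-\pi\rangle \ge 0 .
\]
Decompose $G_x(\pi)-G_{x'}(\pi') = [G_x(\pi)-G_x(\pi')] - (A(x)-A(x'))\pi'$. Strong monotonicity then gives
\[
\kappa\|\pi-\pi'\|^2 \le \langle (A(x')-A(x))\pi',\ \pi'-\pi\rangle \le \|(A(x)-A(x'))\pi'\|_\star\,\|\pi-\pi'\| .
\]
Here $\|\cdot\|_\star$ is the dual norm of $\|\cdot\|$, the norm in which $\Om$ is strongly convex. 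Dividing by $\|\pi-\pi'\|$ (the case $\pi=\pi'$ is trivial) yields the first inequality.

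For the second inequality, I interpret $\|A(x)-A(x')\|_\star$ as the induced operator norm from $(\R^K,\|\cdot\|)$ restricted to the simplex into the dual norm. Then $\|M\pi'\|_\star \le \|M\|_\star$ whenever $\|\pi'\|\le 1$, which holds for $\pi'\in\Delta_K$ under the $\ell_1$, $\ell_2$, or any norm dominated by $\ell_1$.

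The main obstacle is the first step: justifying that the symmetric saddle point is characterized by the one-sided variational inequality with $q$ fixed at $\pi^\star$. One also has to fix the norm conventions so that the final operator-norm bound is valid on the simplex. I would settle both by appealing to the definitions in Appendix~\ref{app:payoff-stability}.
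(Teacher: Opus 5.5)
Your proposal is correct and takes essentially the same route as the paper. You characterize each equilibrium by the self-play variational inequality (the paper's Lemma~\ref{lem:selfplay}), test the two inequalities at each other's solutions and add them, use skew-symmetry and the strong monotonicity of $\nabla\Om$ to get $\kappa\|\pi^\star(x)-\pi^\star(x')\|^2$, and finish with dual H\"older and $\|\pi^\star(x')\|\le 1$; the only cosmetic difference is that you fold skew-symmetry into strong monotonicity of the full operator $\nabla\Om(\pi)-A(x)\pi$, whereas the paper splits off $\langle A d,d\rangle=0$ explicitly.
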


Proposition~\ref{prop:stability} shows that the equilibrium is Lipschitz continuous with respect to changes in the contextual PM, which implies that errors in the contextual PM propagate directly to the equilibrium. Hence, in order to accurately learn the contextual equilibrium, we need to accurately estimate the contextual PM $A(x)$. However, this is challenging: some pairwise entries of the PM are missing, and, because $A(x)$ depends on estimated nuisance functions, a na\"ive plug-in estimator can inherit first-order bias from the nuisance estimation error. To remove this first-order bias, we propose a novel Neyman-orthogonal equilibrium learning method in the next section.

\section{Orthogonal equilibrium learning}
\label{sec:method}

\textbf{Overview:} Our \method framework consists of {two stages} (see Figure~\ref{fig:framework}): in Stage~\circledblue{1}, we estimate the contextual payoff matrix $\hat A(x)$ from the logged feedback, and in Stage~\circledblue{2}, we use estimated contextual payoff matrix to learn the context-to-equilibrium mapping. 

Importantly, we show in Section~\ref{sec:naive-equilibrium} that a na\"ive, plug-in equilibrium learner is subject to the so-called \emph{plug-in bias}, which can lead to a suboptimal equilibrium. As a remedy, we develop a Neyman-orthogonal equilibrium learner, and, to do so, we first propose a \emph{debiased payoff matrix (DPM)} that is robust to nuisance misspecification (Section~\ref{subsec:dpme}), which we use in Stage~\circledblue{1}. In Section~\ref{sec:learning-equilibrium}, we present a tailored orthogonal loss based on DPM for learning the contextual equilibrium in Stage~\circledblue{2}. Finally, we show that our \method framework has favorable theoretical guarantees, such as Neyman-orthogonality and a quasi-oracle convergence rate (Section~\ref{subsec:theoretical-benefits}).

\subsection{Why a na\"ive plug-in equilibrium learner is problematic}
\label{sec:naive-equilibrium}

\textbf{Plug-in equilibrium learner.} A natural approach is to first estimate the outcome regressions $\widehat\mu^m_{jk}$ using an arbitrary machine learning method and then ``plug'' them into the menu aggregator. This yields 
\begin{equation}
\label{eq:plugin-payoff-estimator}
\widehat A^{\mathrm{PI}}_{jk}(x)
= \frac{1}{|\Qcal_{jk}|}\sum_{m\in\Qcal_{jk}}\widehat\mu^m_{jk}(x), \quad \text{where } \widehat\mu^m_{jk}(x)=\widehat \E[Z^m_{jk}\mid X=x,S^m=1].
\end{equation}
where we consider the mean aggregator for simplicity. We can then learn the equilibrium $\widehat\pi^{\mathrm{PI}}(x)$ using the estimated contextual PM $\widehat A^{\mathrm{PI}}(x)$. 

However, by Proposition~\ref{prop:stability}, errors in the estimated contextual PM propagate directly to the learned equilibrium. In particular, the error of the equilibrium is Lipschitz-bounded by the error of the contextual PM estimator: $\|\widehat\pi^{\mathrm{PI}}(x)-\pi^\star(x)\| \leq \frac{1}
{\kappa|\Qcal_{jk}|}\sum_{m\in\Qcal_{jk}}\| \Delta \mu^m(x) \|$).
Hence, errors in $\widehat\mu^m$ affect the estimated contextual PM (and therefore the learned equilibrium) at first order. In semiparametric statistics, such sensitivity to direct plug-in estimation is commonly referred to as \emph{plug-in bias}~\citep{Kennedy.2023}. The effect can be particularly pronounced when $\kappa$ is small.

\textbf{Key idea: orthogonal learning.} To reduce such sensitivity to nuisance estimation errors, \method builds on orthogonal statistical learning~\citep{Chernozhukov.2018,Foster.2023}. Let $\widehat\eta$ denote the estimated nuisance functions and suppose the equilibrium mapping $\pi_\theta$ is learned by minimizing
\begin{align}
    \widehat\theta = \argmin_{\theta \in \Theta} \mathcal{L}(\theta; \widehat\eta) := \E\left[\ell(\pi_\theta(X), A(X;\widehat\eta))\right]
\end{align}
Our goal thus is to construct a loss for which the gradient with respect to $\theta$ is locally insensitive to errors in the nuisance functions. Formally, a loss is \emph{Neyman-orthogonal} if
\begin{align}
    \label{eq:orthogonality-def}
    D_\eta D_\theta \mathcal{L}(\theta; \eta_0)[\widehat\theta-\theta,\widehat\eta-\eta_0] = 0, \quad \forall \theta \in \Theta
\end{align}
where $D_\eta$ and $D_\theta$ are the Gateaux derivatives w.r.t. $\eta$ and $\theta$, under directions $h_\theta = \widehat\theta- \theta$ and $h_\eta = \widehat\eta-\eta_0$. Hence, orthogononality means that the gradient $D_\theta\mathcal{L}$ is robust to perturbations $h_\eta$ around its true value. Intuitively, orthogonality thus removes the first-order effect of nuisance estimation errors on the learning objective, thus allowing nuisance errors to enter the final estimation error only through higher-order terms~\citep{Foster.2023}.

In the following, we adapt this idea to both stages of \method. However, doing so is nontrivial and requires a tailored orthogonalization for each stage. For Stage~\circledblue{1}, we construct a debiased payoff matrix that is orthogonal to nuisance estimation errors. For Stage~\circledblue{1}, we use this estimator to define an orthogonal loss for learning the contextual equilibrium. In Section~\ref{sec:learning-equilibrium}, we formally establish the property of Neyman-orthogonality.

\subsection[Stage~1: Debiased payoff matrix]
{Stage~\protect{\circledblue{1}}: Debiased payoff matrix}
\label{subsec:dpme}

In this section, we construct the debiased payoff matrix  (DPM) for Stage~\circledblue{1 } that is robust to nuisance estimation errors. We begin with the average payoff $\psi_{jk}=\E_X[A_{jk}(X)]=\E_X\big[f_{jk}(\{\mu^m_{0,jk}(X)\}_{m\in\Qcal_{jk}})\big]$. The efficient influence function (EIF) of the average payoff provides a correction (i.e., debiasing term) to the plug-in payoff estimator, and, using this correction, we can construct a debiased PM that removes first-order sensitivity to nuisance estimation errors.

\begin{definition}[Debiased payoff matrix]
\label{def:debiased-payoff-matrix}
\em
For generic nuisances $\eta=(\mucol,\ecol)$, we define the debiased payoff matrix as ()
\begin{align}
    \ADB_{jk}(x;\eta)
    :=\E_0\big[\Gamma_{jk}(O;\eta)\mid X=x\big], (j\neq k)
    \qquad
    \ADB(x;\eta):=\big(\ADB_{jk}(x;\eta)\big)_{j,k\in\M} ,
    \label{eq:debiased-payoff-surface}
\end{align}
with diagonal entries $A_{jj}(x)$ set to 0.
Here, $\Gamma_{jk}(O;\eta)$ is the EIF-based pseudo-outcome of the payoff (see Appendix~\ref{app:proof-eif} for details):
\begin{align}
    \Gamma_{jk}(O;\eta)
    = \underbrace{f_{jk}\!\left(\{\mu^m_{jk}(X)\}_{m\in\Qcal_{jk}}\right)}_{\text{plug-in term}}
    +\sum_{m\in\Qcal_{jk}} \underbrace{
    \frac{\partial f_{jk}}{\partial u_m}\!\left(\{\mu^{m'}_{jk}(X)\}_{m'\in\Qcal_{jk}}\right)
    \frac{S^m}{e^m(X)}\big\{Z^m_{jk}-\mu^m_{jk}(X)\big\}}_{\text{debiasing term}} \notag
\end{align}
\end{definition}
\begin{theorem}[Properties of the debiased payoff matrix]
\label{thm:DPME}
Under Assumptions~\ref{ass:identifiability}--\ref{ass:smoothness}, we have: (i) $\ADB_{jk}(x;\eta_0) =\E_0\big[\Gamma_{jk}(O;\eta_0)\mid X=x\big]=A_{jk}(x)$; and (ii) for every perturbation $h=(h_\mu,h_e)$, the debiased payoff is Neyman-orthogonal w.r.t. $\eta$: $D_\eta \ADB_{jk}(x;\eta)[h]\big|_{\eta=\eta_0}=0.$
\begin{proof}
    See Appendix~\ref{app:proof-neyman}.
\end{proof}
\end{theorem}

The theorem shows that the debiased payoff matrix recovers the true contextual PM when the nuisances are correctly specified, and that it is locally insensitive to nuisance perturbations. More precisely, under smoothness condition, we can show that the the pointwise bias induced by nuisance misspecification is bounded by second-order nuisance errors:
 \begin{align}
    \big|\ADB_{jk}(x;\eta) - A_{jk}(x)\big|
    \le \frac{L_f}{2}\big\|\Delta\mu(x)\big\|_2^2
    +\sum_{m\in\Qcal_{jk}}
    \left|\frac{\partial f_{jk}}{\partial u_m}\big(\mu(x)\big)\right|
    \frac{|\Delta e^m(x)|}{e^m(x)}\,|\Delta\mu^m(x)|.
    \label{eq:reminder}
\end{align}
where $\nabla f_{jk}$ is $L_f$-Lipschitz. We formally derive this bound in Appendix~\ref{app:proof-bias}. Compared with plug-in estimator, nuisance errors affect the debiased payoff matrix only through a squared outcome-error term and an outcome--propensity error product. In practice, given fitted nuisances $\widehat\eta=(\widehat{\mucol},\widehat{\ecol})$, we construct DPM estimator $\AhatDB(x;\widehat\eta)$ by regressing $\Gamma_{jk}(O;\widehat\eta)$ on $X$ for each pair $(j,k)$ with $j\ne k$, and set $\AhatDB_{jj}:=0$.

\subsection[Stage~2: Orthognal equilibriamy learning]{Stage~\protect{\circledblue{2}}: Orthognal equilibriamy learning}
\label{sec:learning-equilibrium}

\begin{figure}
\centering
\includegraphics[width=0.8\linewidth]{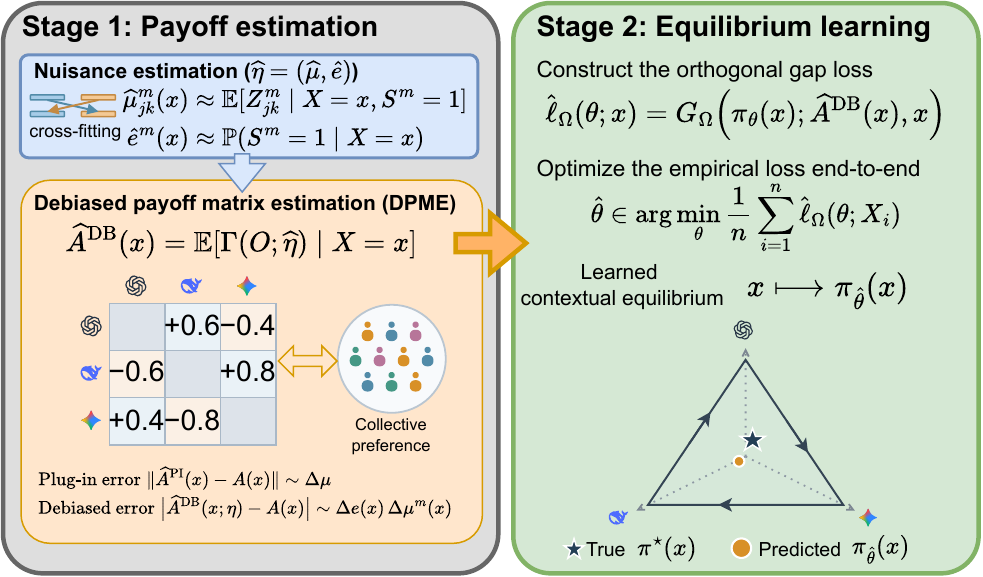}
\caption{\textbf{Overview of our \method.} In Stage~\protect\circledblue{1}, we estimate the contextual payoff matrix using the debiased payoff matrix estimator (DPME). In Stage~\protect\circledblue{2}, we use the estimated payoff matrix to learn the context-to-equilibrium mapping via an orthogonal loss.}
\label{fig:framework}
\end{figure}

In Stage~\circledblue{2}, we learn the the context-to-equilibrium mapping $\pi_\theta(\cdot)$ using the estimated contextual PM from Stage~\circledblue{1}. A straightforward approach would be to solve the estimated contextual games fore each context and then use the resulting equilibria as  labels. However, repeatedly solving such games is computationally expensive and thus \emph{not} scalable.

Instead, we directly learn the context-to-equilibrium mapping using a tailored, differentiable training loss using gradient-based optimation. To construct this loss, we build upon the \emph{gap function}, which reformulates equilibrium computation as a differentiable optimization problem for a fixed payoff matrix ~\citep{Auslender.1973, Fukushima.1992} (see Appendix~\ref{subsec:gap-function} for a background). We extend the gap function to our contextual setting with stochastic payoffs, and, to do so, we combine it with our DPM from Stage~\circledblue{1} to obtain a orthogonal loss for learning $\pi^\star(\cdot)$:
\par\vspace{3\baselineskip}\WFclear
\begin{contributionbox}
\begin{definition}[Orthogonal gap loss]
\label{def:gap-loss}
\em
For a generic nuisance $\eta$, let $\ADB(x;\eta)$ be the population pseudo-outcome regression target in Eq.~(\ref{eq:debiased-payoff-surface}). We define the population orthogonal gap loss as
\begin{align}
    \Lcal(\theta;\eta)=\E_X\Big[G_\Om\big(\pi_\theta(X);\ADB(X;\eta),X\big)\Big],
    \label{eq:gap-loss}
\end{align}
where $G_\Om$ is the regularized gap function, defined as
\begin{align}
    G_\Om(\pi;A,x)
    =\max_{q\in\DeltaK}\big\{-F_\Om(\pi,q;x)\big\}
    =\max_{q\in\DeltaK}\Big\{
    \big\langle -A\pi,\ \pi-q\big\rangle
    +\Om(\pi)-\Om(q)
    \Big\} .
    \label{eq:gap}
\end{align}
\end{definition}
\begin{theorem}[Differentiability and Neyman-orthogonality]
\label{thm:gap-loss-orthogonal}
Under Assumptions~\ref{ass:boundedness}--\ref{ass:strong-convexity}, the orthogonal gap loss $\mathcal{L}(\theta;\eta)$ is differentiable with respect to $\theta$ for every fixed nuisance $\eta$ satisfying Assumption~\ref{ass:boundedness}. Under Assumption~\ref{ass:identifiability}, the loss is also universally Neyman-orthogonal: for every $\theta\in\Theta$ and every pair of directions $(h_\theta,h_\eta)$ with bounded $h_\eta$, it holds that
\begin{align}
        D_\eta D_\theta\Lcal(\theta;\eta_0)[h_\theta,h_\eta]=0,
        \qquad \forall\theta\in\Theta.
        \label{eq:gap-orthogonality}
    \end{align}
\end{theorem}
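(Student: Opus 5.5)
The proof has two parts: differentiability via Danskin's theorem, and orthogonality via the chain rule combined with Theorem~\ref{thm:DPME}(ii).

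\textbf{Differentiability.} Fix $x$ and a bounded payoff matrix $A=\ADB(x;\eta)$. Write
\[
G_\Om(\pi;A,x)=\langle -A\pi,\pi\rangle+\Om(\pi)+\max_{q\in\DeltaK}\{\langle A\pi,q\rangle-\Om(q)\}.
\]
Since $A$ is skew-symmetric, the first term vanishes. The maximand is continuous, and by $\kappa$-strong convexity of $\Om$ it is strictly concave in $q$. Hence the maximizer $q^\dagger(\pi;A)$ is unique, and Danskin's theorem makes the max term differentiable in $\pi$ with gradient $A^\top q^\dagger(\pi;A)$. Therefore
\[
\nabla_\pi G_\Om(\pi;A,x)=\nabla\Om(\pi)+A^\top q^\dagger(\pi;A).
\]
This gradient is continuous in $(\pi,A)$, because $q^\dagger$ is the prox/argmax of a strongly concave problem and is Lipschitz in the linear term $A\pi$. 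Assumption~\ref{ass:smoothness} (differentiable parametrization $\theta\mapsto\pi_\theta(x)$ with integrable derivative) and the boundedness of $A$, $\nabla\Om$ and $D_\theta\pi_\theta$ on $\DeltaK$ from Assumption~\ref{ass:boundedness} then allow differentiation under $\E_X$ by dominated convergence. The result is
\[
D_\theta\Lcal(\theta;\eta)[h_\theta]=\E_X\big[\langle \nabla\Om(\pi_\theta(X))+\ADB(X;\eta)^\top q^\dagger(\pi_\theta(X);\ADB(X;\eta)),\ D_\theta\pi_\theta(X)[h_\theta]\rangle\big].
\]

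\textbf{Orthogonality.} Perturb $\eta$ along $\eta_0+t h_\eta$ and differentiate the expression above at $t=0$. The nuisance enters only through $A_t(x):=\ADB(x;\eta_0+th_\eta)$. The integrand is a map $\Phi(\pi,A)$ that is continuous and, where differentiable, locally Lipschitz in $A$. So the chain rule (or a difference-quotient argument) gives that the derivative is $\E_X[\partial_A\Phi\,[\dot A(X)]]$ with $\dot A(x)=D_\eta\ADB(x;\eta)[h_\eta]|_{\eta_0}$. By Theorem~\ref{thm:DPME}(ii), which relies on Assumption~\ref{ass:identifiability}, $\dot A(x)=0$ pointwise. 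More robustly, without differentiating $\Phi$:
\[
|\Phi(\pi,A_t)-\Phi(\pi,A_0)|\le C\|A_t-A_0\|,
\]
using the Lipschitz continuity of $q^\dagger$ in $A$ with constant of order $\kappa^{-1}$, as in Proposition~\ref{prop:stability}. Since $\|A_t(x)-A_0(x)\|=o(t)$ by Theorem~\ref{thm:DPME}(ii), dividing by $t$ and applying dominated convergence yields $D_\eta D_\theta\Lcal(\theta;\eta_0)[h_\theta,h_\eta]=0$ for every $\theta$.

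\textbf{Main obstacle.} The delicate step is exchanging the limit with the expectation: showing that $t^{-1}\|A_t(X)-A_0(X)\|$ converges to $0$ while dominated by an integrable function. This needs a Taylor expansion of $\Gamma_{jk}$ in $\eta$ with a remainder uniform over $x$. That in turn requires bounded $h_\eta$, propensities bounded away from zero along the path (positivity), and $C^2$ aggregators $f_{jk}$ (Assumption~\ref{ass:smoothness}). Since the explicit remainder has the form in \eqref{eq:reminder}, quadratic in $t$, I would use it to supply the uniform $O(t^2)$ bound directly.
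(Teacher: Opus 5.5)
Your proposal is correct and follows essentially the same route as the paper. For differentiability you both use Danskin's theorem plus dominated convergence via the policy-Jacobian envelope; for orthogonality you both use a $\kappa^{-1}$-Lipschitz bound on the regularized best response (hence on the policy gradient) in the payoff matrix, combine it with the uniform $O(t^2)$ bound on $\|\ADB(x;\eta_0+th_\eta)-A(x)\|$ from Eq.~(\ref{eq:reminder}), and divide by $t$.

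One correction: for a generic nuisance, or a direction $h_\eta$ that is not antisymmetric in $(j,k)$, $\ADB(x;\eta)$ need not be skew-symmetric. You should therefore not drop the term $\langle -A\pi,\pi\rangle$, and your gradient should read $\nabla\Om(\pi)-(A+A^\top)\pi+A^\top q^\dagger(\pi;A)$, as in the paper. This term is a smooth quadratic in $\pi$ and linear in the payoff, so neither your differentiability argument nor your Lipschitz bound is affected.
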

\begin{proposition}[Oracle recovery under realizability]
\label{prop:oracle-recovery}
Let Assumptions~\ref{ass:identifiability}, \ref{ass:smoothness}, and~\ref{ass:strong-convexity} hold, and let us assume the policy class is realizable, i.e., there exists
$\theta_0\in\Theta$ such that
$\pi_{\theta_0}(x)=\pi^\star(x)$ for almost every $x$.
Then, under the oracle nuisance $\eta_0$, we have
\begin{align}
\argmin_{\theta\in\Theta}\Lcal(\theta;\eta_0)
=
\big\{\theta\in\Theta:
\pi_\theta(x)=\pi^\star(x)
\text{ for almost every }x\big\},
\end{align}
and the minimum population gap risk equals zero.
\end{proposition}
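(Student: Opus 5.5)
We show three things in order: the gap function is nonnegative, it vanishes exactly at the regularized equilibrium, and this pointwise fact lifts to the population risk. By Theorem~\ref{thm:DPME}(i), $\ADB(x;\eta_0)=A(x)$ for every $x$. Hence $\Lcal(\theta;\eta_0)=\E_X[G_\Om(\pi_\theta(X);A(X),X)]$, and we can work with the true contextual payoff matrix throughout.

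\textbf{Step 1: nonnegativity.} Fix $x$ and $\pi\in\DeltaK$. Choosing $q=\pi$ inside the maximum in Eq.~(\ref{eq:gap}) gives $\langle -A\pi,0\rangle+\Om(\pi)-\Om(\pi)=0$. Therefore $G_\Om(\pi;A(x),x)\ge 0$ for every $\pi$, and so $\Lcal(\theta;\eta_0)\ge 0$ for every $\theta$.

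\textbf{Step 2: zero set.} We show that $G_\Om(\pi;A(x),x)=0$ if and only if $\pi=\pi^\star(x)$. Write $G_\Om(\pi)=\max_q\{-F_\Om(\pi,q;x)\}=-\min_q F_\Om(\pi,q;x)$. So $G_\Om(\pi)=0$ means $\min_q F_\Om(\pi,q;x)=0$.

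We first compute the game value. Skew-symmetry gives $F_\Om(\pi,q)=-F_\Om(q,\pi)$. The function is concave in $\pi$ (linear minus strongly convex) and convex in $q$, on compact convex sets, so Sion's minimax theorem applies. Combined with the symmetry, the value is $v=\max_\pi\min_q F=\min_q\max_\pi F=-v$, hence $v=0$.

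It follows that $G_\Om(\pi)=0$ exactly when $\pi$ attains the max-min, i.e.\ $\pi$ is a maximin strategy. By Definition~\ref{def:contextual-game}, the maximin strategy is $\pi^\star(x)$. Its uniqueness follows from $\kappa$-strong convexity, through strong concavity of $\pi\mapsto\min_q F_\Om(\pi,q;x)$. This function is the minimum of $\kappa$-strongly concave functions, so it is itself $\kappa$-strongly concave, and its maximizer is unique. This gives the claimed equivalence. If needed, we can sharpen it to the quantitative bound $G_\Om(\pi)\ge \tfrac{\kappa}{2}\|\pi-\pi^\star(x)\|^2$, obtained from the same strong concavity and $v=0$, but the equivalence is all the statement requires.

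\textbf{Step 3: lift to the population.} Under realizability, $\Lcal(\theta_0;\eta_0)=\E_X[G_\Om(\pi^\star(X);A(X),X)]=0$. Combined with Step 1, the minimum risk is zero and the argmin consists of exactly those $\theta$ with $\Lcal(\theta;\eta_0)=0$.

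For any such $\theta$, the integrand is nonnegative and has zero expectation, so it vanishes $P_X$-almost surely. By Step 2, this means $\pi_\theta(X)=\pi^\star(X)$ almost surely. Conversely, if $\pi_\theta=\pi^\star$ almost everywhere, then the loss is zero. This establishes both inclusions.

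\textbf{Main obstacle.} The delicate point is Step 2, in particular confirming that the minimax value equals zero and that the maximizer is unique on the simplex boundary. We handle it with the Sion/symmetry argument and the strong-concavity argument above. A minor technicality is measurability of $x\mapsto G_\Om(\pi_\theta(x);A(x),x)$ and integrability of the loss. These follow from boundedness of $A$ (Assumption~\ref{ass:smoothness} and the $[-1,1]$ range of the aggregators) and continuity of the maximum over the compact set $\DeltaK$.
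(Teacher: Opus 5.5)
Your proof is correct. Its outer structure is exactly the paper's: reduce to the true payoff via Theorem~\ref{thm:DPME}(i), use that the gap is nonnegative and vanishes exactly at $\pi^\star(x)$, get zero risk at $\theta_0$ from realizability, and conclude that every minimizer has a nonnegative integrand with zero mean, hence $\pi_\theta=\pi^\star$ almost surely.

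The only genuine divergence is your Step~2.

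\emph{The paper's route.} The paper does not re-derive the zero set in this proof; it cites Proposition~\ref{prop:gap-loss-properties}. That proof shows $G_\Om(\pi;A,x)=0$ iff $q=\pi$ minimizes $q\mapsto\pi^\top Aq+\Om(q)$. It rewrites this as the first-order condition $\langle -A\pi+\nabla\Om(\pi),q-\pi\rangle\ge0$. It then identifies that condition with the unique equilibrium, using the self-play variational inequality (Lemma~\ref{lem:selfplay}) and the Sion/strong-monotonicity argument of Proposition~\ref{prop:exist}.

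\emph{Your route.} You work directly from the max--min definition in Eq.~(\ref{eq:contextual-equilibrium}). Antisymmetry plus Sion gives value zero, so $G_\Om(\pi)=-\min_q F_\Om(\pi,q;x)$ vanishes exactly at maximin strategies. Uniqueness follows because $\pi\mapsto\min_q F_\Om(\pi,q;x)$ is a pointwise infimum of $\kappa$-strongly concave functions.

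\emph{What each buys.} Your argument is shorter and self-contained. It never uses $\nabla\Om$, only continuity and strong convexity of $\Om$. It also lands directly on $\pi^\star(x)$ as defined, rather than on a symmetric saddle point that must then be matched to the definition. The paper's variational-inequality route costs more at this point, but it yields the first-order characterization, the Danskin gradient, and the quadratic-growth bound Eq.~(\ref{eq:gap-to-equilibrium}), which are reused in Theorem~\ref{thm:gap-loss-orthogonal} and the stability analysis.

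Your optional sharpening $G_\Om(\pi)\ge\frac{\kappa}{2}\|\pi-\pi^\star(x)\|^2$ is also valid, by strong concavity at the maximizer, and matches Eq.~(\ref{eq:gap-to-equilibrium}).
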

\begin{proof}
    We prove Theorem~\ref{thm:gap-loss-orthogonal} and Proposition~\ref{prop:oracle-recovery} in Appendices~\ref{app:proof-gap-loss-orthogonal} and~\ref{app:proof-oracle-recovery}, respectively.
\end{proof}
\end{contributionbox}

Intuitively, the gap function measures the largest advantage available to an optimal opponent against a candidate strategy $\pi$ and equals zero at equilibrium. We replace the fixed contextual PM $A$ with the debiased payoff matrix $\ADB(X;\eta)$, which depends on the nuisance function $\eta$, and average over contexts to obtain a population gap risk. Theorem~\ref{thm:gap-loss-orthogonal} shows that our proposed loss is universally Neyman-orthogonal and therefore locally insensitive to nuisance estimation errors. 

\textbf{Procedure:} In practice, training \method framework proceeds in two stages. \textbf{(1)}~We estimate the nuisance functions $\widehat\eta$ using $K_\textrm{CF}$-fold cross-fitting~\citep{Chernozhukov.2018}. For each fold, we fit the nuisance functions on the remaining $K_\textrm{CF}-1$ folds and evaluate the pseudo-outcomes $\Gamma_{jk}(O;\widehat\eta)$ on the held-out fold. We then regress the cross-fitted pseudo-outcomes on $X$ to obtain $\AhatDB(\cdot;\widehat\eta)$. \textbf{(2)}~We minimize the empirical orthogonal gap loss to obtain $\pi_{\widehat \theta}$:
\begin{align}
    \widehat\theta
    \in\argmin_{\theta\in\Theta}\widehat{\Lcal}_n(\theta),
    \quad \text{where }
    \widehat{\Lcal}_n(\theta)
    :=\frac{1}{n}\sum_{i=1}^n
    G_\Om\!\left(
        \pi_\theta(X_i);\AhatDB(X_i;\widehat\eta),X_i
    \right).
    \label{eq:empirical-gap-loss}
\end{align}
At inference time, the learned parameter $\widehat\theta$ is used to predict the contextual equilibrium $\pi_{\widehat\theta}(x_\star)$ for a new context $x_\star\in X$. The corresponding support $\text{supp}(\pi_{\widehat\theta}(x_\star))$ then provides the predicted set of winners that best align with the collective preference. For a downstream decision-making task with some utility function $\mathcal{U}(k, x)$, one can select a deterministic winner via $\argmax_{k \in \text{supp}(\pi_{\widehat\theta}(x_\star))} \mathcal{U}(k, x_\star)$.

\subsection{Theoretical guarantees}
\label{subsec:theoretical-benefits}

We provide theoretical guarantees for the learned equilibrium under our empirical orthogonal gap loss. To measure the quality of the equilibrium, we introduce the \emph{regularized exploitability}, which is the regularized analogue of a standard measure of equilibrium quality in two-player zero-sum games and which is commonly used to evaluate how close a learned policy is to the true Nash equilibrium~\citep{Zinkevich.2007}. For a symmetric game with context $x$, the exploitability of the learned policy $\pi_{\widehat\theta}$ under the true contextual payoff matrix $A(x)$ is given by the regularized gap function
\begin{align}
    \label{eq:exploitability}
    \Exploit_{A(x)}\big(\pi_{\widehat\theta}(x)\big)
    :=G_\Om\big(\pi_{\widehat\theta}(x);A(x),x\big).
\end{align}
We define the the corresponding average exploitability as $\Exploit(\pi_{\widehat\theta}):=\E_X[\Exploit_{A(X)}(\pi_{\widehat\theta}(X))]$, where $X\sim P_X$.

\begin{theorem}[Empirical error bound for average exploitability]
\label{thm:avg-exploit}
Let Assumptions~\ref{ass:identifiability}--\ref{ass:strong-convexity} and~\ref{ass:cross-fitting}--\ref{ass:estimation-rates} in Appendix~\ref{app:proof-assumptions} hold. Further, let $r_\mu,r_e\to0$ be the nuisance convergence rates and $e_A$ the payoff regression error in Assumption~\ref{ass:estimation-rates}. Let $\widehat{\Lcal}$ be the population counterpart of $\widehat{\Lcal}_n$. Then, the empirical minimizer $\widehat\theta$ in Eq.~(\ref{eq:empirical-gap-loss}) satisfies
\begin{equation}
\Exploit(\pi_{\widehat\theta})\le4\inf_{\theta\in\Theta}\Exploit(\pi_\theta)+4\sup_{\theta\in\Theta}\big|\widehat{\Lcal}(\theta)-\widehat{\Lcal}_n(\theta)\big|+O_p\!\left(\frac{K^2}{\kappa}\big\{e_A^2+r_\mu^4+\elb^{-2}r_\mu^2r_e^2\big\}\right).
\label{eq:avg-exploit}
\end{equation}
\end{theorem}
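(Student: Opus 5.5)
The plan is to compare the true exploitability risk $\Exploit(\theta):=\E_X[G_\Om(\pi_\theta(X);A(X),X)]$ with the surrogate $\widehat\Lcal(\theta)=\E_X[G_\Om(\pi_\theta(X);\AhatDB(X;\widehat\eta),X)]$, uniformly in $\theta$. The key structural fact is a perturbation bound for the regularized gap. For two payoff matrices $A,B$, write $q_A(\pi)$ for the best response attaining the maximum in Eq.~(\ref{eq:gap}). Since the max defining $G_\Om$ is over $\kappa$-strongly concave functions of $q$ whose linear part is $q^\top A\pi$, the map $A\mapsto G_\Om(\pi;A,x)$ is convex. Its gradient in $A$ is $-\pi\pi^\top + q_A(\pi)\pi^\top$, by Danskin's theorem applied to $\langle -A\pi,\pi-q\rangle$. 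Skew-symmetry of $A$ gives $\pi^\top A\pi=0$, so the first term drops. Because $q_A(\pi)$ is $\kappa^{-1}$-Lipschitz in $A\pi$ (strong convexity of $\Om$, as in Proposition~\ref{prop:stability}), we obtain the second-order expansion
\[
\bigl|G_\Om(\pi;B,x)-G_\Om(\pi;A,x)-q_A(\pi)^\top(B-A)\pi\bigr|\le \tfrac{1}{2\kappa}\|(B-A)\pi\|_\star^2 .
\]

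\textbf{Step 1 (the linear term).} At the true equilibrium the best response is $q_A(\pi^\star)=\pi^\star$. For general $\pi$, the linear term is bounded by $\|(B-A)\pi\|_\star$, which is only first order. To avoid a first-order remainder I would use a decomposition and Young's inequality. Write $\Delta(x)=\AhatDB(x;\widehat\eta)-A(x)$. Then
\[
|\widehat\Lcal(\theta)-\Exploit(\theta)|\le \E_X\bigl[\|\Delta(X)\|\bigr]
\]
is first order, and combined with $\Exploit\ge\tfrac{\kappa}{2}\E\|\pi_\theta-\pi^\star\|^2$ this gives
\[
\E[q^\top\Delta\pi]\le \tfrac{1}{4}\bigl(\Exploit(\theta)+\Exploit(\theta')\bigr)+O\bigl(\kappa^{-1}\E\|\Delta\|_\star^2\bigr).
\]
This absorption is where the constant $4$ arises. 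Concretely, I would show that for all $\theta$,
\[
\tfrac{1}{2}\Exploit(\theta)-C\kappa^{-1}\E\|\Delta\|^2\le \widehat\Lcal(\theta)\le 2\Exploit(\theta)+C\kappa^{-1}\E\|\Delta\|^2 ,
\]
using $G_\Om(\pi;A)\ge\tfrac{\kappa}{2}\|\pi-\pi^\star\|^2$ together with the Lipschitz bound $|G(\pi;B)-G(\pi;A)|\le\|\Delta\|_\star\cdot\mathrm{diam}$ and AM–GM.

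\textbf{Step 2 (ERM chain).} Apply the standard argument:
\[
\Exploit(\widehat\theta)\le 2\widehat\Lcal(\widehat\theta)+\dots\le 2\widehat\Lcal_n(\widehat\theta)+2\sup|\widehat\Lcal-\widehat\Lcal_n|\le 2\widehat\Lcal_n(\theta)+\dots\le 2\widehat\Lcal(\theta)+4\sup|\cdot|\le 4\Exploit(\theta)+4\sup|\cdot|+O(\kappa^{-1}\E\|\Delta\|^2).
\]
Then take the infimum over $\theta$.

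\textbf{Step 3 (bounding $\E\|\Delta\|^2$).} Split $\Delta=(\AhatDB(\cdot;\widehat\eta)-\ADB(\cdot;\widehat\eta))+(\ADB(\cdot;\widehat\eta)-A)$. Under cross-fitting (Assumption~\ref{ass:cross-fitting}), the first part is the pseudo-outcome regression error, which contributes $e_A^2$ per entry. The second part is controlled entrywise by the remainder bound Eq.~(\ref{eq:reminder}), equivalently by the orthogonality of Theorem~\ref{thm:DPME}. Squaring, and using $e^m\ge\elb$ with boundedness of $\partial f$, gives $r_\mu^4+\elb^{-2}r_\mu^2r_e^2$. Summing over the $K^2$ entries yields the $K^2$ factor, via a Frobenius bound on $\|\cdot\|_\star$.

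\textbf{Main obstacle.} The hard part is Step 1: making the first-order term in $\Delta$ disappear without losing more than the constant factors. The first thing I would try is the self-bounding lower bound $\Exploit\ge\tfrac{\kappa}{2}\E\|\pi-\pi^\star\|^2$. With it, the linear term can be written as $(q_A(\pi)-\pi^\star)^\top\Delta\pi+\pi^{\star\top}\Delta(\pi-\pi^\star)$, using $\pi^{\star\top}\Delta\pi^\star=0$ from skew-symmetry. Each piece then pairs $\Delta$ with a deviation that is controlled by $\sqrt{\Exploit/\kappa}$, and Young's inequality finishes the argument.
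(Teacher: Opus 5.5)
Your skeleton matches the paper's proof in three places. Both use a pointwise two-sided comparison between $G_\Om(\pi;A(x),x)$ and $G_\Om(\pi;\AhatDB(x;\widehat\eta),x)$, with factor $2$ and a $\kappa^{-1}\|(\AhatDB-A)\pi\|_*^2$ remainder. Both use the ERM chain that composes the two factors of $2$ into the $4$. Both split $\AhatDB-A$ into the regression error $e_A$ plus the nuisance bias from Eq.~(\ref{eq:reminder}), summed over $K^2$ entries.

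The paper, however, proves the pointwise comparison without linearizing in the payoff. Set $\varphi_C(p,q):=\langle -Cp,p-q\rangle+\Om(p)-\Om(q)$. The paper uses the exact identity $\varphi_A(p,q)=\varphi_B(p,q)+\langle(B-A)p,p-q\rangle$ and strong concavity of $q\mapsto\varphi_B(p,q)$ around $q=p$. This gives $\varphi_B(p,q)\le 2G_\Om(p;B,x)-\tfrac{\kappa}{4}\|p-q\|^2$, and the paper then completes the square in $\|p-q\|$. No best response, no $\pi^\star$, and no skew-symmetry of $B$ enter.

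Your Step 1 has a real gap. The version under ``Concretely'' (the bound $\|\Delta\|_\star\cdot\mathrm{diam}$ plus AM--GM) cannot produce a quadratic remainder, because $\mathrm{diam}$ is not controlled by the gap.

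The refined version in your last paragraph needs $\|q_A(\pi)-\pi^\star\|^2\lesssim G_\Om(\pi;A,x)/\kappa$, which you assert but never derive. The self-bounding inequality only controls $\|\pi-\pi^\star\|$. The tools you cite ($q_A(\pi^\star)=\pi^\star$ plus $\kappa^{-1}$-Lipschitzness of the best response in $A\pi$) give only $\|q_A(\pi)-\pi^\star\|\le\kappa^{-1}\|A\|_\star\|\pi-\pi^\star\|$. After Young, the remainder then picks up an extra factor of order $(1+\|A\|_\star/\kappa)^2$, and you lose the stated $K^2/\kappa$ dependence.

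The missing ingredient is strong concavity at the maximizer: $G_\Om(\pi;A,x)=\varphi_A(\pi,q_A(\pi))\ge\varphi_A(\pi,q)+\tfrac{\kappa}{2}\|q-q_A(\pi)\|^2$ for all $q\in\DeltaK$. Taking $q=\pi^\star$ and using $\varphi_A(\pi,\pi^\star)=F_\Om(\pi^\star,\pi;x)\ge0$ gives exactly the bound your decomposition needs.

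A simpler fix is to not drop $-\pi^\top\Delta\pi$. The exact linear term of your expansion is $\langle\Delta\pi,q_A(\pi)-\pi\rangle$ for any $\Delta$. Taking $q=\pi$ above gives $\|q_A(\pi)-\pi\|^2\le 2G_\Om(\pi;A,x)/\kappa$, hence $|\langle\Delta\pi,q_A(\pi)-\pi\rangle|\le\tfrac12G_\Om(\pi;A,x)+\kappa^{-1}\|\Delta\pi\|_*^2$. This removes both the $\pi^\star$ decomposition and the need for $\AhatDB$ to be skew-symmetric. It is the paper's mechanism, applied after maximizing over $q$ rather than before.

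In Step 3, squaring Eq.~(\ref{eq:reminder}) in $L_2(P_X)$ uses the moment conditions in Eq.~(\ref{eq:remainder-component-rates}), not just the separate rates $r_\mu$ and $r_e$. Orthogonality alone gives no quantitative bound.
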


\begin{corollary}[Quasi-oracle convergence rate]
\label{cor:quasi-oracle}
Under standard assumptions (see Appendix~\ref{app:proof-quasi-oracle}), the empirical minimizer $\widehat\theta$ achieves the same exploitability rate as the minimization with oracle payoff matrix $A(x)$. 
\end{corollary}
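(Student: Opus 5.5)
The plan is to derive the corollary directly from Theorem~\ref{thm:avg-exploit}. The three terms of Eq.~(\ref{eq:avg-exploit}) are compared with the bound that an oracle learner would satisfy, namely one minimizing $\frac{1}{n}\sum_i G_\Om(\pi_\theta(X_i);A(X_i),X_i)$ with the true payoff matrix. Running the same argument as Theorem~\ref{thm:avg-exploit} with $\widehat\eta=\eta_0$ and $\AhatDB=A$ makes the nuisance remainder vanish. The oracle exploitability then satisfies
\[
\Exploit(\pi_{\widehat\theta}^{\mathrm{or}})\lesssim\inf_{\theta}\Exploit(\pi_\theta)+\sup_\theta|\Lcal(\theta)-\Lcal_n(\theta)|.
\]
Call the oracle rate $\delta_n$. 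Under the standard assumptions (realizability, or a vanishing approximation error, and a class $\Theta$ with finite complexity), this gives $\delta_n=O_p(n^{-1/2})$, or a localized rate $\delta_n^2$ for fast-rate classes.

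The argument proceeds in three steps.

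\textbf{Step 1 (approximation term).} Under realizability, Proposition~\ref{prop:oracle-recovery} gives $\inf_\theta\Exploit(\pi_\theta)=0$. Without realizability, this term is identical for the oracle and feasible learners, so it is harmless.

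\textbf{Step 2 (empirical process term).} I would control $\sup_\theta|\widehat\Lcal(\theta)-\widehat\Lcal_n(\theta)|$ conditionally on the cross-fitted nuisances. By Assumption~\ref{ass:cross-fitting}, the evaluation sample is independent of $\widehat\eta$, so $\AhatDB$ is a fixed bounded function of $X$. The gap $G_\Om$ is bounded and Lipschitz in $\pi$: it is a max of affine-plus-convex functions, with $|A|\le C$ by Assumption~\ref{ass:boundedness}. Standard uniform convergence over $\Theta$, via Rademacher complexity or covering numbers, then gives the same order $\delta_n$ as for the oracle loss.

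\textbf{Step 3 (nuisance remainder).} It remains to show that $\frac{K^2}{\kappa}\{e_A^2+r_\mu^4+\elb^{-2}r_\mu^2r_e^2\}=o_p(\delta_n)$. Here the assumed rates from Assumption~\ref{ass:estimation-rates} enter, e.g.\ $r_\mu=o_p(n^{-1/4})$, $r_\mu r_e=o_p(n^{-1/2})$ (or $o_p(\delta_n^{1/2})$ in the fast-rate case), and a pseudo-outcome regression error $e_A=o_p(\delta_n^{1/2})$. Under these, $r_\mu^4=o_p(n^{-1})$ and $r_\mu^2r_e^2=o_p(n^{-1})$, and with $K$, $\kappa$, $\elb$ fixed the remainder is of smaller order than $\delta_n$.

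Combining the three steps gives $\Exploit(\pi_{\widehat\theta})=O_p(\delta_n)$, matching the oracle.

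The main obstacle is Step 3, specifically the term $e_A$. It is not a nuisance error but the error of the second-stage regression of $\Gamma_{jk}(O;\widehat\eta)$ on $X$. It must be shown to match the oracle regression of $\Gamma_{jk}(O;\eta_0)$ up to the product remainder in Eq.~(\ref{eq:reminder}). I would first invoke a pseudo-outcome regression stability result (in the style of Kennedy's DR-learner), which shows that regressing on estimated pseudo-outcomes incurs only the oracle regression error plus the conditional bias, bounded by Eq.~(\ref{eq:reminder}). This requires the second-stage regressor to be stable in $L_2$, which I would add to the standard assumptions.

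A subtlety is that Theorem~\ref{thm:avg-exploit} enters with squared errors, which the $\kappa$-strong convexity provides through a quadratic growth of $G_\Om$. The product rate conditions therefore only need to hold at the square-root scale relative to $\delta_n$.
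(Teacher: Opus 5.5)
Your overall structure matches the paper's: realizability removes the approximation term, a uniform deviation bound gives $O_p(n^{-1/2})$, and the squared remainder of Theorem~\ref{thm:avg-exploit} is driven to $O_p(n^{-1/2})$. \textbf{Step~2, however, has a genuine gap}: its key claim is false for the procedure the corollary concerns. $\AhatDB$ is obtained by pooling the cross-fitted pseudo-outcomes of \emph{all} $n$ observations into one regression, and $\widehat\Lcal_n$ is then evaluated at those same contexts $X_1,\dots,X_n$. Assumption~\ref{ass:cross-fitting} only makes each fold's nuisance fit independent of that fold. It says nothing about $\AhatDB$, which depends on every $X_i$. Moreover, the fold-wise nuisances jointly depend on all folds, so conditioning on them conditions on the whole sample. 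Hence $\AhatDB$ is not a fixed function given the evaluation sample, and uniform convergence over $\Theta$ alone does not control $\sup_\theta|\widehat\Lcal(\theta)-\widehat\Lcal_n(\theta)|$. A payoff regressor that overfits the noisy pseudo-outcomes at the training contexts can make $\widehat\Lcal_n(\theta)$ differ from $\widehat\Lcal(\theta)$ even for a single fixed $\theta$.

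The missing idea is to make the uniform bound cover the payoff as well. The paper takes a deterministic class $\mathcal B_n$ containing $A$ and, with probability tending to one, $\AhatDB(\cdot;\widehat\eta)$. It then assumes $\mathfrak R_n(\mathcal H_n)\le Cn^{-1/2}$ for the joint class $\mathcal H_n=\{G_\Om(\pi_\theta(\cdot);B(\cdot),\cdot):\theta\in\Theta,\ B\in\mathcal B_n\}$ (Assumption~\ref{ass:complexity}). Symmetrization then gives $\sup_{h\in\mathcal H_n}|Ph-P_nh|=O_p(n^{-1/2})$. This bounds the oracle deviation and, on that event, also $\varepsilon_{\mathrm{gen}}$, with no independence between payoff fitting and policy fitting. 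The only alternative is to change the procedure and fit the policy on a sample held out from the payoff regression.

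A few smaller points. Your non-realizable remark in Step~1 fails. Theorem~\ref{thm:avg-exploit} carries $4\inf_\theta\Exploit(\pi_\theta)$, while the oracle learner gets $\inf_\theta\Exploit(\pi_\theta)+2\varepsilon_{\mathrm{gen,or}}$. The term is therefore not \emph{identical}, and matching rates needs realizability, or an approximation error that is itself $O(n^{-1/2})$.

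In Step~3, your conditions $r_\mu=o_p(n^{-1/4})$ and $r_\mu r_e=o_p(n^{-1/2})$ are sufficient but far stronger than needed. The remainder enters as $e_A^2+r_\mu^4+\elb^{-2}r_\mu^2r_e^2$ against a target of $n^{-1/2}$. So $e_A=O_p(n^{-1/4})$, $r_\mu\lesssim n^{-1/8}$ and $\elb^{-1}r_\mu r_e\lesssim n^{-1/4}$ suffice, which is what the paper assumes. This is exactly the square-root scale you mention at the end but do not apply.

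The Kennedy-style stability detour for $e_A$ is unnecessary. $e_A$ is defined as the regression error against $\ADB(\cdot;\widehat\eta)$, and the theorem's proof already separates it from the nuisance-induced bias, which is controlled by Eq.~(\ref{eq:reminder}). The corollary simply assumes the $e_A$ rate.

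Finally, the squared errors in the theorem come from strong concavity of the gap objective in the opponent's strategy $q$. They do not come from quadratic growth of $G_\Om$ in $\pi$.
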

\begin{proof}
    See Appendix~~\ref{app:proof-avg-exploit}, \ref{app:proof-quasi-oracle} for the proof.
\end{proof}
Theorem~\ref{thm:avg-exploit} shows that, for \method, the nuisance estimation errors affect the exploitability rate only through higher-order terms, whereas, for the plug-in learner, the corresponding contribution of the nuisance error term would be $r_\mu^2$). In the following, we verify the theoretical advantage of \method over the plug-in learner through numerical experiments.

\section{Experiments}
\label{sec:experiments}

\textbf{Baselines.} We group the baselines into two categories:: (1) \underline{Global+intransitive} baselines: \textbf{Maximal lottery}~\citep{Lanctot.2023}, \textbf{Robust maximal lottery}~\citep{Khalaf.2026}, and \textbf{Pluralistic leaderboard}~\citep{Haghtalab.2026} (2) \underline{Contextual+transitive} baselines: Bradley--Terry (\textbf{BT-Plug-in})~\citep{Bradley.1952}, \textbf{BT-regularized}, and \textbf{BT-debiased}~\citep{Frauen.2026}. We refer to our method as \textbf{NashEval-debiased}. We further compare to plug-in learner from Section~\ref{subsec:dpme} called \textbf{NashEval-plug-in} to demonstrate the benefits of debiasing. \textbf{NashEval-oracle} serves as oracle upper-bound on the performance; therein, we replace $\widehat \eta$ in \method by the true $\eta_0$ and keep everything else unchanged. Since our framework is model-agnostic, we instantiate \method and the baselines with the same machine learning backbone to ensure fair comparison (see Appendix~\ref{app:implementation} for details).

\textbf{Metrics.} We evaluate equilibrium quality through average unregularized exploitability and winner-set recovery using the $F_1$ score of the predicted equilibrium support. For \rqtwo, we additionally report the head-to-head game value, and, for the cost analysis, the average strict win rate. See Appendix~\ref{subsec:eval-metrics} for details.

\noindent\textbf{\rqone Does \method improve a equilibrium learning?}
\label{para:rq1}
We first verify the theoretical benefits of our method using a simulated dataset, where the true equilibrium is known. We consider three forms of relative feedback generated by $N_v$ voters with $N_v=1,3,5$. We simulate with $K=5$ agents and $d=5$-dimensional contexts (see Appendix~\ref{app:synthetic-dgp} for details). \underline{\textbf{Results.}} Figure~\ref{fig:rq1-results} shows the results. (i)~Across all three feedback forms, our \method framework outperforms the baselines when $N_v\geq 3$ and thus performs best. In particular, \textbf{NashEval-debiased} consistently performs closest to \textbf{NashEval-oracle} in terms of both exploitability and support $F_1$. (ii)~When $N_v=1$, preferences are transitive, and \method performs comparably to \textbf{BT-debiased}, which is expected.  (iii)~\textbf{NashEval-debiased} achieves better exploitability and $F_1$ scores than the \textbf{NashEval-plug-in}, which demonstrates the benefit of Neyman-orthogonality. 
(iv)~Finally, \textbf{BT-Reg} and \textbf{BT-debiased} have higher exploitability and lower $F_1$ score for $N_v\geq 3$. This suggests that regularization or debiasing alone cannot overcome misspecification from imposing transitivity when collective preferences are intransitive.

\begin{wrapfigure}{r}{0.55\linewidth}
\centering
\includegraphics[width=\linewidth]{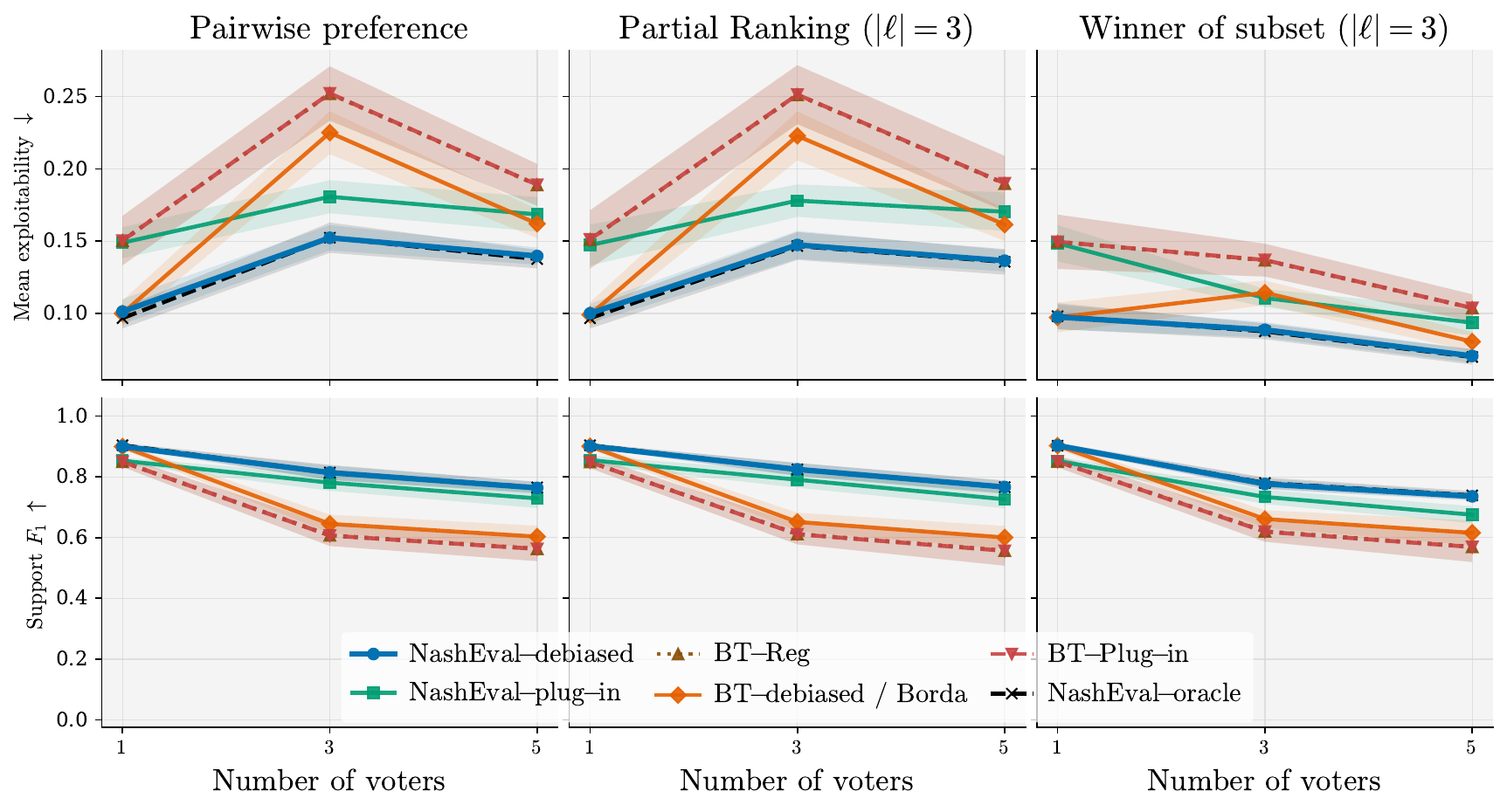}
\caption{\rqone \textbf{Equilibrium quality using a simulated dataset with ground-truth.} Shown: mean exploitability and $F_1$ score on simulated dataset for $\#$voters$=1,3,5$ over 5 random seeds.}
\label{fig:rq1-results}
\end{wrapfigure}

\noindent\textbf{\rqtwo How does \method perform in larger-scale, realistic agent evaluation settings?}
We evaluate \method in two large-scale real-world agent evaluation settings: \textbf{(1)}~evaluating LLMs on RouterBench~\citep{Hu.2024RouterBench}, using the model evaluations of \citet{Li.2026llmrouterbench}, and \textbf{(2)}~evaluating reinforcement learning (RL) agents in the multi-objective highway-driving environment of MO-Gymnasium~\citep{Felten.2023Toolkit}, which is built on HighwayEnv~\citep{Leurent.2018Highway}. For RouterBench, we use ten LLMs and $8{,}742$ prompts. For the driving task, we train five RL agents on 256 configurations (i.e., context) with different reward models. The pairwise preference data are sampled from three external judges, each of which has internally transitive preferences (see Appendix~\ref{subsec:routerbench-dataset} and \ref{subsec:rl-dataset} for details). We then measure how closely our \method is to to the true contextual evaluation. We further report the \emph{game value} against \textbf{NashEval-debiased}, which quantifies whether each baseline has a higher or lower \emph{collective} preference compared to \method.

\begin{table}[t]
\centering
\begin{minipage}[t]{0.49\linewidth}
\centering
\caption{Performance comparison for LLMs using RouterBench \citep{Hu.2024RouterBench}.}
\label{tab:rq2-routerbench}
\small
\setlength{\tabcolsep}{4pt}
\resizebox{\linewidth}{!}{%
\begin{tabular}{@{}lccc@{}}
\toprule
Method & Exploitability [$\downarrow$] & Support $F_1$ [$\uparrow$] & Game value [$\uparrow$] \\
\midrule
NashEval-debiased \textbf{(ours)} & $\mathbf{0.380 \pm 0.000}$ & $\mathbf{0.300 \pm 0.001}$ & $\mathbf{0.000 \pm 0.000}$ \\
NashEval-plug-in & $0.398 \pm 0.001$ & $0.287 \pm 0.001$ & $-0.016 \pm 0.001$ \\
BT-debiased / Borda & $0.395 \pm 0.001$ & $0.274 \pm 0.001$ & $-0.010 \pm 0.001$ \\
BT-Plug-in & $0.401 \pm 0.003$ & $0.270 \pm 0.002$ & $0.014 \pm 0.003$ \\
Maximal lottery & $0.423 \pm 0.000$ & $0.246 \pm 0.000$ & $-0.030 \pm 0.000$ \\
Robust maximal lottery & $0.423 \pm 0.000$ & $0.246 \pm 0.000$ & $-0.030 \pm 0.000$ \\
Pluralistic leaderboard & $0.470 \pm 0.000$ & $0.242 \pm 0.000$ & $-0.136 \pm 0.000$ \\
\bottomrule
\end{tabular}%
}
\end{minipage}\hfill
\begin{minipage}[t]{0.49\linewidth}
\centering
\caption{Performance comparison for RL agents using the MO-Highway benchmark \citep{Felten.2023Toolkit}.}
\label{tab:rq2-rl-agents}
\small
\setlength{\tabcolsep}{4pt}
\resizebox{\linewidth}{!}{%
\begin{tabular}{@{}lccc@{}}
\toprule
Method & Exploitability [$\downarrow$] & Support $F_1$ [$\uparrow$] & Game value [$\uparrow$] \\
\midrule
NashEval-debiased \textbf{(ours)} & $\mathbf{0.066 \pm 0.007}$ & $0.708 \pm 0.023$ & $\mathbf{0.000 \pm 0.000}$ \\
NashEval-plug-in & $0.068 \pm 0.000$ & $\mathbf{0.713 \pm 0.000}$ & $-0.027 \pm 0.006$ \\
BT-debiased / Borda & $0.099 \pm 0.003$ & $0.525 \pm 0.014$ & $-0.017 \pm 0.004$ \\
BT-Plug-In & $0.232 \pm 0.000$ & $0.438 \pm 0.000$ & $-0.072 \pm 0.004$ \\
Maximal lottery & $0.105 \pm 0.000$ & $0.505 \pm 0.000$ & $-0.006 \pm 0.006$ \\
Robust maximal lottery & $0.105 \pm 0.000$ & $0.505 \pm 0.000$ & $-0.006 \pm 0.006$ \\
Pluralistic leaderboard & $0.217 \pm 0.000$ & $0.556 \pm 0.000$ & $-0.147 \pm 0.004$ \\
\bottomrule
\end{tabular}%
}
\end{minipage}
\end{table}

\underline{\textbf{Results.}} Tables~\ref{tab:rq2-routerbench} and~\ref{tab:rq2-rl-agents} show the results. (1)~Overall, our NashEval-debiased achieves the lowest mean exploitability on both RouterBench and the RL-agent benchmark and thus performs best. (2)~On both real-world benchmarks, it also achieves lower exploitability and higher support $F_1$ than \textbf{BT-debiased / Borda} and the global evaluation baselines. This 
confirms that our \method is more accurate in recovering the contextual evaluation under intransitive preferences. (3)~The head-to-head game values favor \textbf{NashEval-debiased} over all competitors on both benchmarks. This demonstrates that \textbf{NashEval-debiased} better reflects the underlying collective preferences.

\underline{\textbf{Insights.}} 
We examine the frequency of how often each agent is selected as the winner in the RouterBench dataset. We use the same setup as in RQ2. We divide the test data by domain and then compute the average winning rate of each agent by domain. We visualize the frequency as heatmap in Figure~\ref{fig:rb-heatmap}. We observe that the estimated evaluation is largely consistent with the approximated true evaluation, which provides further evidence that \method is effective in capture context-dependent variation in agent performance. For example, frontier reasoning-based models (e.g., GPT-5, Gemini 2.5 Pro) perform better on the Math domain, in line with our expectation.  

\begin{figure}
\centering
\includegraphics[width=0.7\linewidth]{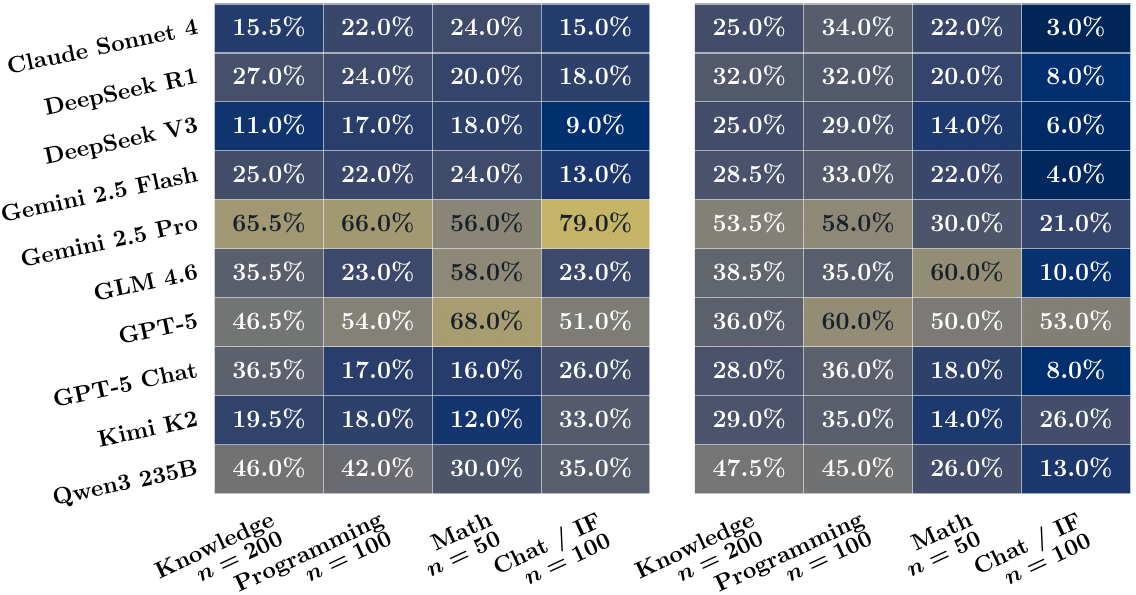}
\caption{\textbf{Insights.} Winner frequency for each agent in the RouterBench dataset. \textit{Left:} estimated evaluation from \textbf{NashEval-debiased}. \textit{Right:} (approximated) true evaluation.}
\label{fig:rb-heatmap}
\end{figure}

\textbf{Conclusion.} We develop, to the best of our knowledge, the first game-theoretic framework for context-dependent agent evaluation under potentially intransitive collective preferences. Our experiments confirm the theoretical properties of \method and demonstrate the benefit of our Neyman-orthogonal equilibrium learning.

\bibliography{references}

@inproceedings{Ong.2025,
  author = {Ong, Isaac and Almahairi, Amjad and Wu, Vincent and Chiang, Wei-Lin and Wu, Tianhao and Gonzalez, Joseph E. and Kadous, M Waleed and Stoica, Ion},
  title = {{RouteLLM}: Learning to route {LLM}s from preference data},
  booktitle = {ICLR},
  year = {2025}
}

@inproceedings{Munos.2024,
  author = {Munos, R{\'e}mi and Valko, Michal and Calandriello, Daniele and Gheshlaghi Azar, Mohammad and Rowland, Mark and Guo, Zhaohan Daniel and Tang, Yunhao and Geist, Matthieu and Mesnard, Thomas and Fiegel, C{\^o}me and Michi, Andrea and Selvi, Marco and Girgin, Sertan and Momchev, Nikola and Bachem, Olivier and Mankowitz, Daniel J. and Precup, Doina and Piot, Bilal},
  title = {{Nash} learning from human feedback},
  booktitle = {ICML},
  year = {2024}
}

@inproceedings{Christiano.2017,
  author = {Christiano, Paul F. and Leike, Jan and Brown, Tom B. and Martic, Miljan and Legg, Shane and Amodei, Dario},
  title = {Deep Reinforcement Learning from Human Preferences},
  booktitle = {NeurIPS},
  year = {2017}
}

@inproceedings{Ouyang.2022,
  author = {Ouyang, Long and Wu, Jeffrey and Jiang, Xu and Almeida, Diogo and Wainwright, Carroll L. and Mishkin, Pamela and Zhang, Chong and Agarwal, Sandhini and Slama, Katarina and Ray, Alex and Schulman, John and Hilton, Jacob and Kelton, Fraser and Miller, Luke and Simens, Maddie and Askell, Amanda and Welinder, Peter and Christiano, Paul F. and Leike, Jan and Lowe, Ryan},
  title = {Training Language Models to Follow Instructions with Human Feedback},
  booktitle = {NeurIPS},
  year = {2022}
}

@inproceedings{Rafailov.2023,
  author = {Rafailov, Rafael and Sharma, Archit and Mitchell, Eric and Manning, Christopher D. and Ermon, Stefano and Finn, Chelsea},
  title = {Direct Preference Optimization: Your Language Model Is Secretly a Reward Model},
  booktitle = {NeurIPS},
  year = {2023}
}

@article{Rosset.2024,
  author = {Rosset, Corby and Cheng, Ching-An and Mitra, Arindam and Santacroce, Michael and Awadallah, Ahmed and Xie, Tengyang},
  title = {Direct {Nash} Optimization: Teaching Language Models to Self-Improve with General Preferences},
  journal = {arXiv preprint arXiv:2404.03715},
  year = {2024}
}

@inproceedings{Wu.2025sppo,
  author = {Wu, Yue and Sun, Zhiqing and Yuan, Huizhuo and Ji, Kaixuan and Yang, Yiming and Gu, Quanquan},
  title = {Self-Play Preference Optimization for Language Model Alignment},
  booktitle = {ICLR},
  year = {2025}
}

@inproceedings{Zhang.2025inpo,
  author = {Zhang, Yuheng and Yu, Dian and Peng, Baolin and Song, Linfeng and Tian, Ye and Huo, Mingyue and Jiang, Nan and Mi, Haitao and Yu, Dong},
  title = {Iterative {Nash} Policy Optimization: Aligning {LLM}s with General Preferences via No-Regret Learning},
  booktitle = {ICLR},
  year = {2025}
}

@article{Bradley.1952,
  author = {Bradley, Ralph Allan and Terry, Milton E.},
  title = {Rank analysis of incomplete block designs: I. The method of paired comparisons},
  journal = {Biometrika},
  volume = {39},
  number = {3/4},
  pages = {324--345},
  year = {1952}
}

@article{Yue.2012,
  title = {The K-armed dueling bandits problem},
  journal = {Journal of Computer and System Sciences},
  author = {Yisong Yue and Josef Broder and Robert Kleinberg and Thorsten Joachims},
  volume = {78},
  number = {5},
  pages = {1538-1556},
  year = {2012}
}

@inproceedings{Saha.2021,
  author = {Saha, Aadirupa},
  title = {Optimal algorithms for stochastic contextual preference bandits},
  booktitle = {NeurIPS},
  year = {2021}
}

@inproceedings{Bengs.2022,
  author = {Bengs, Viktor and Saha, Aadirupa and H{\"u}llermeier, Eyke},
  title = {Stochastic contextual dueling bandits under linear stochastic transitivity models},
  booktitle = {ICML},
  pages = {1764--1786},
  year = {2022}
}

@inproceedings{Di.2024,
  author = {Di, Qiwei and Jin, Tao and Wu, Yue and Zhao, Heyang and Farnoud, Farzad and Gu, Quanquan},
  title = {Variance-aware regret bounds for stochastic contextual dueling bandits},
  booktitle = {ICLR},
  year = {2024}
}

@inproceedings{Wu.2024,
  author = {Wu, Yue and Jin, Tao and Lou, Hao and Farnoud, Farzad and Gu, Quanquan},
  title = {Borda regret minimization for generalized linear dueling bandits},
  booktitle = {ICML},
  year = {2024}
}

@inproceedings{Frauen.2026,
  author = {Frauen, Dennis and Deviyani, Athiya and van der Schaar, Mihaela and Feuerriegel, Stefan},
  title = {Nonparametric {LLM} evaluation from preference data},
  booktitle = {ICML},
  year = {2026}
}

@article{Rubin.1974,
  author  = {Rubin, Donald B.},
  title   = {Estimating causal effects of treatments in randomized and nonrandomized studies},
  journal = {Journal of Educational Psychology},
  volume  = {66},
  number  = {5},
  pages   = {688--701},
  year    = {1974}
}

@article{Rubin.1976,
  author  = {Rubin, Donald B.},
  title   = {Inference and missing data},
  journal = {Biometrika},
  volume  = {63},
  number  = {3},
  pages   = {581--592},
  year    = {1976}
}

@article{Chernozhukov.2018,
  author = {Chernozhukov, Victor and Chetverikov, Denis and Demirer, Mert and Duflo, Esther and Hansen, Christian and Newey, Whitney and Robins, James},
  title = {Double/debiased machine learning for treatment and structural parameters},
  journal = {The Econometrics Journal},
  volume = {21},
  number = {1},
  pages = {C1--C68},
  year = {2018},
  doi = {10.1111/ectj.12097}
}

@article{Cui.2022,
  title={When are offline two-player zero-sum Markov games solvable?},
  author={Cui, Qiwen and Du, Simon},
  journal={NeurIPS},
  year={2022}
}

@article{Zhang.2026,
  author = {Zhang, Yuheng and Chen, Claire and Jiang, Nan},
  title = {Beyond pessimism: Offline learning in {KL}-regularized games},
  journal = {arXiv preprint arXiv:2604.06738},
  year = {2026}
}

@inproceedings{Zhong.2022,
  author = {Zhong, Han and Xiong, Wei and Tan, Jiyuan and Wang, Liwei and Zhang, Tong and Wang, Zhaoran and Yang, Zhuoran},
  title = {Pessimistic Minimax Value Iteration: Provably Efficient Equilibrium Learning from Offline Datasets},
  booktitle = {ICML},
  series = {Proceedings of Machine Learning Research},
  volume = {162},
  pages = {27117--27142},
  year = {2022}
}

@inproceedings{Chen.2026,
  author = {Chen, Claire and Zhang, Yuheng and Liu, Xinyu and Xie, Zixuan and Liu, Shuze Daniel and Jiang, Nan},
  title = {Offline Two-Player Zero-Sum Markov Games with {KL} Regularization},
  booktitle = {ICML},
  year = {2026},
  note = {arXiv:2605.13025}
}

@unpublished{Copeland.1951,
  author = {Copeland, Arthur H.},
  title  = {A ``Reasonable'' Social Welfare Function},
  note   = {Mimeographed notes, Seminar on Applications of Mathematics
            to the Social Sciences, University of Michigan},
  year   = {1951}
}

@inproceedings{Kreweras.1965,
  author = {Kreweras, Germain},
  title = {Aggregation of preference orderings},
  booktitle = {Mathematics and Social Sciences I},
  year = {1965}
}

@article{Kramer.1973,
 author = {Gerald H. Kramer},
 journal = {Econometrica},
 number = {2},
 pages = {285--297},
 title = {On a Class of Equilibrium Conditions for Majority Rule},
 volume = {41},
 year = {1973}
}

@article{Fishburn.1984a,
  author = {Fishburn, Peter C.},
  title = {Probabilistic social choice based on simple voting comparisons},
  journal = {The Review of Economic Studies},
  volume = {51},
  number = {4},
  pages = {683--692},
  year = {1984}
}

@article{Brandl.2016,
  author = {Brandl, Florian and Brandt, Felix and Seedig, Hans Georg},
  title = {Consistent probabilistic social choice},
  journal = {Econometrica},
  volume = {84},
  number = {5},
  pages = {1839--1880},
  year = {2016}
}

@article{Brandl.2020,
  author = {Brandl, Florian and Brandt, Felix},
  title = {Arrovian aggregation of convex preferences},
  journal = {Econometrica},
  volume = {88},
  number = {2},
  pages = {799--844},
  year = {2020},
  doi = {10.3982/ECTA15749}
}

@article{Brandl.2022,
  author = {Brandl, Florian and Brandt, Felix and Stricker, Christian},
  title = {An analytical and experimental comparison of maximal lottery schemes},
  journal = {Social Choice and Welfare},
  volume = {58},
  number = {1},
  pages = {5--38},
  year = {2022},
  doi = {10.1007/s00355-021-01326-x}
}

@article{Tideman.1987,
  author  = {Tideman, T. Nicolaus},
  title   = {Independence of clones as a criterion for voting rules},
  journal = {Social Choice and Welfare},
  volume  = {4},
  number  = {3},
  pages   = {185--206},
  year    = {1987},
  doi     = {10.1007/BF00433944}
}

@article{Laffond.1993,
  author = {Laffond, Gilbert and Laslier, Jean-Fran{\c c}ois and Le Breton, Michel},
  title = {The bipartisan set of a tournament game},
  journal = {Games and Economic Behavior},
  volume = {5},
  number = {1},
  pages = {182--201},
  year = {1993},
  doi = {10.1006/game.1993.1010}
}

@article{Dutta.1999,
  author = {Dutta, Bhaskar and Laslier, Jean-Fran{\c c}ois},
  title = {Comparison functions and choice correspondences},
  journal = {Social Choice and Welfare},
  volume = {16},
  number = {4},
  pages = {513--532},
  year = {1999}
}

@book{Laslier.1997,
  author = {Laslier, Jean-Fran{\c c}ois},
  title = {Tournament solutions and majority voting},
  publisher = {Springer-Verlag},
  year = {1997}
}

@article{Felsenthal.1992,
  author = {Felsenthal, Dan S. and Machover, Mosh{\'e}},
  title = {After two centuries, should {Condorcet}'s voting procedure be implemented?},
  journal = {Behavioral Science},
  volume = {37},
  number = {4},
  pages = {250--274},
  year = {1992}
}

@inproceedings{Rivest.2010,
  author = {Rivest, Ronald L. and Shen, Emily},
  title = {An optimal single-winner preferential voting system based on game theory},
  booktitle = {COMSOC},
  year = {2010}
}

@inproceedings{Dudik.2015,
  author = {Dud{\'i}k, Miroslav and Hofmann, Katja and Schapire, Robert E. and Slivkins, Aleksandrs and Zoghi, Masrour},
  title = {Contextual dueling bandits},
  booktitle = {COLT},
  year = {2015}
}

@incollection{Brandt.2017,
  author = {Brandt, Felix},
  title = {Rolling the dice: Recent results in probabilistic social choice},
  booktitle = {Trends in Computational Social Choice},
  year = {2017}
}

@inproceedings{Sokota.2022,
      title={A unified approach to reinforcement learning, quantal response equilibria, and two-player zero-sum games}, 
      author={Samuel Sokota and Ryan D'Orazio and J. Zico Kolter and Nicolas Loizou and Marc Lanctot and Ioannis Mitliagkas and Noam Brown and Christian Kroer},
      year={2023},
      booktitle={International Conference on Learning Representations}
}

@book{vanderVaart.1998,
  author    = {van der Vaart, A. W.},
  title     = {Asymptotic Statistics},
  series    = {Cambridge Series in Statistical and Probabilistic Mathematics},
  publisher = {Cambridge University Press},
  year      = {1998},
  doi       = {10.1017/CBO9780511802256}
}

@book{Bickel.1998,
  author    = {Bickel, Peter J. and Klaassen, Chris A. J. and Ritov, Ya'acov and Wellner, Jon A.},
  title     = {Efficient and Adaptive Estimation for Semiparametric Models},
  publisher = {Springer},
  address   = {New York},
  year      = {1998},
  isbn      = {978-0-387-98473-5},
  note      = {Reprint of the 1993 original}
}

@article{Hampel.1974,
  author  = {Hampel, Frank R.},
  title   = {The Influence Curve and Its Role in Robust Estimation},
  journal = {Journal of the American Statistical Association},
  volume  = {69},
  number  = {346},
  pages   = {383--393},
  year    = {1974}
}

@article{Nie.2021,
  author  = {Nie, Xinkun and Wager, Stefan},
  title   = {Quasi-oracle estimation of heterogeneous treatment effects},
  journal = {Biometrika},
  volume  = {108},
  number  = {2},
  pages   = {299--319},
  year    = {2021}
}

@article{Kennedy.2023,
  author  = {Kennedy, Edward H.},
  title   = {Towards optimal doubly robust estimation of heterogeneous causal effects},
  journal = {Electronic Journal of Statistics},
  volume  = {17},
  number  = {2},
  pages   = {3008--3049},
  year    = {2023},
  doi     = {10.1214/23-EJS2157}
}

@incollection{Kennedy.2024review,
  author    = {Kennedy, Edward H.},
  title     = {Semiparametric Doubly Robust Targeted Double Machine Learning: A Review},
  booktitle = {Handbook of Statistical Methods for Precision Medicine},
  pages     = {207--236},
  year      = {2024},
  publisher = {Chapman and Hall/CRC}
}

@article{Foster.2023,
  author  = {Foster, Dylan J. and Syrgkanis, Vasilis},
  title   = {Orthogonal statistical learning},
  journal = {The Annals of Statistics},
  volume  = {51},
  number  = {3},
  pages   = {879--908},
  year    = {2023},
  doi     = {10.1214/23-AOS2258}
}

@inproceedings{Frauen.2025survival,
  author    = {Frauen, Dennis and Schr{\"o}der, Maresa and Hess, Konstantin and Feuerriegel, Stefan},
  title     = {Orthogonal Survival Learners for Estimating Heterogeneous Treatment Effects from Time-to-Event Data},
  booktitle = {NeurIPS},
  year      = {2025}
}

@inproceedings{Melnychuk.2026,
  author    = {Melnychuk, Valentyn and Frauen, Dennis and Schweisthal, Jonas and Feuerriegel, Stefan},
  title     = {Orthogonal Representation Learning for Estimating Causal Quantities},
  booktitle = {AISTATS},
  year      = {2026}
}

@inproceedings{Dudik.2011,
  author    = {Dud{\'i}k, Miroslav and Langford, John and Li, Lihong},
  title     = {Doubly Robust Policy Evaluation and Learning},
  booktitle = {ICML},
  year      = {2011}
}

@inproceedings{Jiang.2016,
  author    = {Jiang, Nan and Li, Lihong},
  title     = {Doubly Robust Off-policy Value Evaluation for Reinforcement Learning},
  booktitle = {ICML},
  year      = {2016}
}

@inproceedings{Abe.2021,
  author = {Abe, Kenshi and Kaneko, Yusuke},
  title = {Off-Policy Exploitability-Evaluation in Two-Player Zero-Sum Markov Games},
  year = {2021},
  booktitle = {AAMAS}
}

@inproceedings{Swamy.2024,
author = {Swamy, Gokul and Dann, Christoph and Kidambi, Rahul and Wu, Zhiwei Steven and Agarwal, Alekh},
title = {A minimaximalist approach to reinforcement learning from human feedback},
year = {2024},
booktitle = {ICML}
}

@book{Facchinei.2003,
  author    = {Facchinei, Francisco and Pang, Jong-Shi},
  title     = {Finite-Dimensional Variational Inequalities and Complementarity Problems},
  series    = {Springer Series in Operations Research},
  publisher = {Springer},
  address   = {New York},
  year      = {2003}
}

@book{Dontchev.2009,
  author    = {Dontchev, Asen L. and Rockafellar, R. Tyrrell},
  title     = {Implicit Functions and Solution Mappings: A View from Variational Analysis},
  series    = {Springer Monographs in Mathematics},
  publisher = {Springer},
  year      = {2009}
}

@article{Auslender.1973,
  author  = {Auslender, A.},
  title   = {Br{\`e}ve communication. R{\'e}solution num{\'e}rique
             d'in{\'e}galit{\'e}s variationnelles},
  journal = {Revue fran{\c c}aise d'automatique informatique
             recherche op{\'e}rationnelle. Math{\'e}matique},
  volume  = {7},
  number  = {R2},
  pages   = {67--72},
  year    = {1973}
}

@article{Fukushima.1992,
  author  = {Fukushima, Masao},
  title   = {Equivalent differentiable optimization problems and descent methods for asymmetric variational inequality problems},
  journal = {Mathematical Programming},
  volume  = {53},
  number  = {1},
  pages   = {99--110},
  year    = {1992},
  doi     = {10.1007/BF01585696}
}

@article{Larsson.1994,
  author  = {Larsson, Torbj{\"o}rn and Patriksson, Michael},
  title   = {A class of gap functions for variational inequalities},
  journal = {Mathematical Programming},
  volume  = {64},
  number  = {1},
  pages   = {53--79},
  year    = {1994}
}

@article{Kohlberg.1986,
  author  = {Kohlberg, Elon and Mertens, Jean-Fran{\c c}ois},
  title   = {On the strategic stability of equilibria},
  journal = {Econometrica},
  volume  = {54},
  number  = {5},
  pages   = {1003--1037},
  year    = {1986},
  doi     = {10.2307/1912320}
}

@article{Sion.1958,
  author  = {Sion, Maurice},
  title   = {On General Minimax Theorems},
  journal = {Pacific Journal of Mathematics},
  volume  = {8},
  number  = {1},
  pages   = {171--176},
  year    = {1958}
}

@inproceedings{Khalaf.2026,
  title={Robust {AI} evaluation through maximal lotteries},
  author={Khalaf, Hadi and Wang, Serena L and Halpern, Daniel and Shapira, Itai and Calmon, Flavio du Pin and Procaccia, Ariel D},
  booktitle={ICML},
  year={2026}
}

@article{Lanctot.2023,
  title={Evaluating agents using social choice theory},
  author={Lanctot, Marc and Larson, Kate and Bachrach, Yoram and Marris, Luke and Li, Zun and Bhoopchand, Avishkar and Anthony, Thomas and Tanner, Brian and Koop, Anna},
  journal={arXiv preprint arXiv:2312.03121},
  year={2023}
}

@inproceedings{Ameli.2025,
  author = {Ameli, Siavash and Zhuang, Siyuan and Stoica, Ion and Mahoney, Michael W.},
  title = {A statistical framework for ranking {LLM}-based chatbots},
  booktitle = {ICLR},
  year = {2025}
}

@inproceedings{Balduzzi.2018,
  author = {Balduzzi, David and Tuyls, Karl and Perolat, Julien and Graepel, Thore},
  title = {Re-evaluating evaluation},
  booktitle = {NeurIPS},
  year = {2018}
}

@article{Omidshafiei.2019,
  author = {Omidshafiei, Shayegan and Papadimitriou, Christos and Piliouras, Georgios and Tuyls, Karl and Rowland, Mark and Lespiau, Jean-Baptiste and Czarnecki, Wojciech M. and Lanctot, Marc and Perolat, Julien and Munos, Remi},
  title = {{$\alpha$}-{Rank}: Multi-Agent Evaluation by Evolution},
  journal = {Scientific Reports},
  volume = {9},
  pages = {9937},
  year = {2019}
}

@inproceedings{Rowland.2019,
  author    = {Rowland, Mark and Omidshafiei, Shayegan and Tuyls, Karl and Perolat, Julien and Valko, Michal and Piliouras, Georgios and Munos, R{\'e}mi},
  title     = {Multiagent Evaluation under Incomplete Information},
  booktitle = {NeurIPS},
  year      = {2019},
}

@inproceedings{Chiang.2024,
  author = {Chiang, Wei-Lin and Zheng, Lianmin and Sheng, Ying and Angelopoulos, Anastasios Nikolas and Li, Tianle and Li, Dacheng and Zhu, Banghua and Zhang, Hao and Jordan, Michael I. and Gonzalez, Joseph E. and Stoica, Ion},
  title = {{Chatbot Arena}: An open platform for evaluating {LLM}s by human preference},
  booktitle = {ICML},
  year = {2024}
}

@inproceedings{Chi.2025,
  author    = {Chi, Wayne and Chen, Valerie and Angelopoulos, Anastasios Nikolas and Chiang, Wei-Lin and Mittal, Aditya and Jain, Naman and Zhang, Tianjun and Stoica, Ion and Donahue, Chris and Talwalkar, Ameet},
  title     = {{Copilot Arena}: A Platform for Code {LLM} Evaluation in the Wild},
  booktitle = {Proceedings of the 42nd International Conference on Machine Learning},
  series    = {Proceedings of Machine Learning Research},
  volume    = {267},
  pages     = {10354--10382},
  publisher = {PMLR},
  year      = {2025},
}

@article{Frick.2025,
  author = {Frick, Evan and Chen, Connor and Tennyson, Joseph and Li, Tianle and Chiang, Wei-Lin and Angelopoulos, Anastasios N. and Stoica, Ion},
  title = {Prompt-to-leaderboard},
  journal = {arXiv preprint arXiv:2502.14855},
  year = {2025}
}

@book{Condorcet.1785,
  author = {de Condorcet, Marquis},
  title = {Essai sur l'application de l'analyse {\`a} la probabilit{\'e} des d{\'e}cisions rendues {\`a} la pluralit{\'e} des voix},
  publisher = {Imprimerie Royale},
  address = {Paris},
  year = {1785}
}

@inproceedings{Zhang.2026rankllm,
  author = {Zhang, Ziqian and Hu, Xingjian and Huang, Yue and Zhang, Kai and Chen, Ruoxi and Liu, Yixin and Wen, Qingsong and Xu, Kaidi and Zhang, Xiangliang and Gong, Neil Zhenqiang and Sun, Lichao},
  title = {{RankLLM}: Weighted ranking of {LLMs} by quantifying question difficulty},
  booktitle = {ICLR},
  year = {2026}
}

@article{Neumann.1928,
  author  = {von Neumann, John},
  title   = {Zur Theorie der Gesellschaftsspiele},
  journal = {Mathematische Annalen},
  year    = {1928},
  volume  = {100},
  number  = {1},
  pages   = {295--320}
}

@inproceedings{Korkmaz.2026,
  author    = {Korkmaz, Ezgi},
  title     = {The Axiomatic Value of Regularization in {AI} Alignment from Human Preferences},
  booktitle = {ICML},
  year      = {2026},
  note      = {Spotlight},
}

@article{Halpern.2026,
  author  = {Halpern, Daniel and Micha, Evi and Procaccia, Ariel D. and Schiffer, Benjamin and Shapira, Itai and Zhang, Shirley},
  title   = {{AI} Alignment From Social Choice Perspectives},
  journal = {arXiv preprint arXiv:2606.21550},
  year    = {2026},
  note    = {Accepted for publication in ACM SIGecom Exchanges},
}

@book{Stone.2011,
  author    = {Stone, Peter},
  title     = {The Luck of the Draw: The Role of Lotteries in Decision Making},
  publisher = {Oxford University Press},
  year      = {2011},
  doi       = {10.1093/acprof:oso/9780199756100.001.0001}
}

@article{Zhang.2025,
  author = {Zhang, Yanzhao and Li, Mingxin and Long, Dingkun and Zhang, Xin and Lin, Huan and Yang, Baosong and Xie, Pengjun and Yang, An and Liu, Dayiheng and Lin, Junyang and Huang, Fei and Zhou, Jingren},
  title = {{Qwen3} Embedding: Advancing text embedding and reranking through foundation models},
  journal = {arXiv preprint arXiv:2506.05176},
  year = {2025}
}

@article{SchechterVera.2025,
  author = {Schechter Vera, Henrique and Dua, Sahil and Zhang, Biao and Salz, Daniel and Mullins, Ryan and others},
  title = {{EmbeddingGemma}: Powerful and lightweight text representations},
  journal = {arXiv preprint arXiv:2509.20354},
  year = {2025}
}

@article{Chiang.2025dueling,
  author        = {Chiang, Chao-Kai and Ishida, Takashi and Sugiyama, Masashi},
  title         = {{LLM} Routing with Dueling Feedback},
  journal       = {arXiv preprint arXiv:2510.00841},
  year          = {2025}
}

@inproceedings{Zhao.2024eagle,
  author    = {Zhao, Zesen and Jin, Shuowei and Mao, Zhuoqing Morley},
  title     = {Eagle: Efficient Training-Free Router for Multi-{LLM} Inference},
  booktitle = {Proceedings of the Workshop on Machine Learning for Systems at NeurIPS 2024},
  pages     = {1--8},
  year      = {2024}
}

@inproceedings{Hu.2024RouterBench,
  author    = {Hu, Qitian Jason and Bieker, Jacob and Li, Xiuyu and Jiang, Nan and Keigwin, Benjamin and Ranganath, Gaurav and Keutzer, Kurt and Upadhyay, Shriyash Kaustubh},
  title     = {{RouterBench}: A Benchmark for Multi-{LLM} Routing System},
  booktitle = {ICML 2024 Agentic Markets Workshop},
  year      = {2024},
  note      = {Poster}
}

@inproceedings{Li.2026llmrouterbench,
    title = "{LLMR}outer{B}ench: A Massive Benchmark and Unified Framework for {LLM} Routing",
    author = "Li, Hao  and
      Zhang, Yiqun  and
      Guo, Zhaoyan  and
      Wang, Chenxu  and
      Tang, Shengji  and
      Zhang, Qiaosheng  and
      Chen, Yang  and
      Qi, Biqing  and
      Ye, Peng  and
      Bai, Lei  and
      Wang, Zhen  and
      Hu, Shuyue",
    editor = "Liakata, Maria  and
      Moreira, Viviane P.  and
      Zhang, Jiajun  and
      Jurgens, David",
    booktitle = "Findings of the {A}ssociation for {C}omputational {L}inguistics: {ACL} 2026",
    month = jul,
    year = {2026}
}

@inproceedings{Haghtalab.2026,
  title         = {Pluralistic leaderboards},
  author        = {Haghtalab, Nika and Procaccia, Ariel D. and Shao, Han and Wang, Serena Lutong and Yang, Kunhe},
  booktitle     = {ICML},
  year          = {2026}
}

@inproceedings{Zinkevich.2007,
  author    = {Zinkevich, Martin and Johanson, Michael and Bowling, Michael and Piccione, Carmelo},
  title     = {Regret Minimization in Games with Incomplete Information},
  booktitle = {NeurIPS},
  year      = {2007}
}

@inproceedings{Wellman.2006,
  author    = {Wellman, Michael P.},
  title     = {Methods for Empirical Game-Theoretic Analysis},
  booktitle = {AAAI},
  pages     = {1552--1556},
  year      = {2006},
}

@inproceedings{Jordan.2008,
  author    = {Jordan, Patrick R. and Vorobeychik, Yevgeniy and Wellman, Michael P.},
  title     = {Searching for Approximate Equilibria in Empirical Games},
  booktitle = {AAMAS},
  pages     = {1063--1070},
  year      = {2008},
}

@article{Vorobeychik.2010,
  author  = {Vorobeychik, Yevgeniy},
  title   = {Probabilistic Analysis of Simulation-Based Games},
  journal = {ACM Transactions on Modeling and Computer Simulation},
  volume  = {20},
  number  = {3},
  pages   = {16:1--16:25},
  year    = {2010},
  doi     = {10.1145/1842713.1842719}
}

@article{Viqueira.2019,
  author  = {Areyan Viqueira, Enrique and Cousins, Cyrus and Upfal, Eli and Greenwald, Amy},
  title   = {Learning Equilibria of Simulation-Based Games},
  journal = {arXiv preprint arXiv:1905.13379},
  year    = {2019},
}

@article{Fearnley.2015,
  author  = {Fearnley, John and Gairing, Martin and Goldberg, Paul W. and Savani, Rahul},
  title   = {Learning Equilibria of Games via Payoff Queries},
  journal = {Journal of Machine Learning Research},
  volume  = {16},
  number  = {39},
  pages   = {1305--1344},
  year    = {2015},
}

@inproceedings{Felten.2023Toolkit,
  author    = {Felten, Florian and Alegre, Lucas N. and Now{\'e}, Ann and Bazzan, Ana L. C. and Talbi, El Ghazali and Danoy, Gr{\'e}goire and Silva, Bruno C. {\relax da}},
  title     = {A Toolkit for Reliable Benchmarking and Research in Multi-Objective Reinforcement Learning},
  booktitle = {Advances in Neural Information Processing Systems},
  volume    = {36},
  year      = {2023},
  url       = {https://proceedings.neurips.cc/paper_files/paper/2023/hash/4aa8891583f07ae200ba07843954caeb-Abstract.html}
}

@misc{Leurent.2018Highway,
  author       = {Leurent, Edouard},
  title        = {An Environment for Autonomous Driving Decision-Making},
  year         = {2018},
  howpublished = {GitHub repository},
  url          = {https://github.com/Farama-Foundation/HighwayEnv}
}

@article{Mnih.2015DQN,
  author  = {Mnih, Volodymyr and Kavukcuoglu, Koray and Silver, David and Rusu, Andrei A. and Veness, Joel and Bellemare, Marc G. and Graves, Alex and Riedmiller, Martin and Fidjeland, Andreas K. and Ostrovski, Georg and Petersen, Stig and Beattie, Charles and Sadik, Amir and Antonoglou, Ioannis and King, Helen and Kumaran, Dharshan and Wierstra, Daan and Legg, Shane and Hassabis, Demis},
  title   = {Human-level control through deep reinforcement learning},
  journal = {Nature},
  volume  = {518},
  number  = {7540},
  pages   = {529--533},
  year    = {2015},
  doi     = {10.1038/nature14236}
}

@article{Raffin.2021SB3,
  author  = {Raffin, Antonin and Hill, Ashley and Gleave, Adam and Kanervisto, Anssi and Ernestus, Maximilian and Dormann, Noah},
  title   = {{Stable-Baselines3}: Reliable Reinforcement Learning Implementations},
  journal = {Journal of Machine Learning Research},
  volume  = {22},
  number  = {268},
  pages   = {1--8},
  year    = {2021},
  url     = {https://jmlr.org/papers/v22/20-1364.html}
}

@article{Bartlett.2002,
  author  = {Bartlett, Peter L. and Mendelson, Shahar},
  title   = {Rademacher and {G}aussian complexities: Risk bounds and structural results},
  journal = {Journal of Machine Learning Research},
  volume  = {3},
  pages   = {463--482},
  year    = {2002}
}

@inproceedings{Xu.2025context,
  title = {Does Context Matter? {ContextualJudgeBench} for Evaluating {LLM}-based Judges in Contextual Settings},
  author = {Xu, Austin and Bansal, Srijan and Ming, Yifei and Yavuz, Semih and Joty, Shafiq},
  booktitle = {Proceedings of the 63rd Annual Meeting of the Association for Computational Linguistics (Volume 1: Long Papers)},
  year = {2025},
  pages = {9541--9564}
}
\bibliographystyle{plainnat}

\newpage

\newpage
\appendix
\section{Related Work}
\label{sec:related-work}

\textbf{Agent evaluation from relative feedback.} Relative feedback, particularly pairwise preferences, is widely used to evaluate generative agents. Chatbot Arena and its successor LMArena collect head-to-head judgments on LLM responses~\citep{Chiang.2024}, while Copilot Arena applies the same evaluation to coding agents~\citep{Chi.2025}. A standard approach aggregates these comparisons with a score-based model such as Bradley--Terry to produce a global leaderboard~\citep{Bradley.1952,Chiang.2024}. Subsequent work proposes improved evaluation and ranking procedures by modeling ties and dependence among agents~\citep{Ameli.2025}, estimating nonparametric ranking functionals~\citep{Frauen.2026}, learning prompt-dependent Bradley--Terry scores~\citep{Frick.2025}, or incorporating question difficulty into rankings~\citep{Zhang.2026rankllm}. \citet{Frick.2025} is contextual but retains a Bradley--Terry model at each prompt. Despite these differences, these methods ultimately summarize agents by scalar scores or an ordered leaderboard, which fail to represent intransitive preferences. To accommodate potentially intransitive preferences, \citet{Rivest.2010} use the equilibrium distribution of the pairwise majority-margin game as a randomized single-winner voting rule. Recent agent-evaluation work extends this social-choice perspective to heterogeneous tasks or populations through locally stable top-$k$ sets~\citep{Haghtalab.2026} and maximal lotteries~\citep{Lanctot.2023,Khalaf.2026}. These social-choice methods nevertheless aggregate feedback across contexts to produce a global evaluation, which can obscure context-dependent variation in agents' relative capabilities. Other related game-theoretic methods rank agents from empirical interaction payoffs using Nash averaging or $\alpha$-Rank, including under incomplete payoff information~\citep{Balduzzi.2018,Omidshafiei.2019,Rowland.2019}. To our knowledge, no existing method simultaneously captures context-dependent variation in agents' relative capabilities and represents the collective preference of a heterogeneous population.

\textbf{Social choice and maximal lotteries.} Social choice theory studies how individual preferences can be aggregated into collective decisions, including settings where majority preferences are cyclic and no deterministic Condorcet winner exists~\citep{Condorcet.1785,Brandt.2017}. A central randomized rule is the maximal lottery, introduced by \citet{Kreweras.1965} and developed by \citet{Fishburn.1984a}: it selects a distribution over alternatives that maximizes the worst-case expected majority margin, equivalent to an equilibrium strategy of the induced two-player zero-sum game. Subsequent work has studied maximal lotteries and related tournament-based rules through game-theoretic, axiomatic, computational, and empirical analyses~\citep{Felsenthal.1992,Laffond.1993,Laslier.1997,Dutta.1999,Brandl.2020,Brandl.2022}. Maximal lotteries extend the Condorcet principle and satisfy strong population- and composition-consistency properties, while robustness to cloned alternatives is a standard criterion for voting rules~\citep{Tideman.1987,Brandl.2016}. More broadly, lotteries are considered suitable for decision making when a deterministic choice is undesirable~\citep{Stone.2011}. Recent work brings this perspective to AI alignment by treating heterogeneous human feedback as a preference-aggregation problem and studying the axiomatic consequences of aggregation and regularization~\citep{Halpern.2026,Korkmaz.2026}. Unlike this literature, which aggregates preferences into a global majority profile without modeling context-dependent variation, we learn a context-specific maximal lottery (with convex regularization) from selectively observed offline feedback.

\textbf{Double/debiased machine learning.} Double/debiased machine learning (DML) builds on semiparametric efficiency theory~\citep{Bickel.1998,Kennedy.2024review} to support valid inference on target parameters while using flexible machine learning models for nuisance functions. Its main tools include influence-function analysis in robust estimation~\citep{Hampel.1974}, efficient influence functions from semiparametric theory~\citep{Bickel.1998}, and Neyman-orthogonal moment equations~\citep{Chernozhukov.2018}. \citet{Foster.2023} extend this principle to general two-stage statistical learning: Neyman-orthogonality removes the first-order effect of nuisance estimation error on the target loss, allowing the learner to attain quasi-oracle rates under suitable conditions~\citep{Nie.2021}. Such orthogonal learning paradigm is used in many scenarios, such as heterogeneous treatment-effect estimation~\citep{Kennedy.2023}, survival analysis~\citep{Frauen.2025survival}, off-policy evaluation and learning~\citep{Dudik.2011,Jiang.2016}, and representation learning for causal quantities~\citep{Melnychuk.2026}. We show that the same principle applies to contextual equilibrium learning from stochastic relative feedback by constructing a debiased payoff estimator and an orthogonal gap loss.

\textbf{LLM routing from preferential feedback.} LLM routing studies the selection of the best LLM(s) for a given prompt. A specific line of work is to learn a predictive router from pairwise preferential feedback~\citep{Zhao.2024eagle, Ong.2025, Chiang.2025dueling, Frick.2025}. This can be viewed as a special case of our general formulation: the context is the prompt, the agents are the LLMs. However, these works assume that the preference is transitive: they assign implicit contextual scores and then approximate the preference via a utility function (or link function). This leads to misspecification when preferences cycle. In contrast, our framework does not assume transitivity and instead learns a contextual equilibrium that yields better alignment.

\textbf{Game-theoretic RLHF.} 
Canonical RLHF methods learn a scalar reward model from pairwise comparisons~\citep[e.g.,][]{Christiano.2017,Ouyang.2022}, while direct preference optimization eliminates explicit reward-model fitting but retains the Bradley--Terry preference assumption~\citep{Rafailov.2023}. To accommodate general, potentially intransitive preferences, recent works instead formulate language-model alignment as a two-player zero-sum game and trains a Nash policy through self-play or no-regret optimization~\citep{Munos.2024,Swamy.2024,Rosset.2024,Wu.2025sppo,Zhang.2025inpo}. Although conceptually similar, its goal is to learn the response-generation Nash policy, which is different to our agent evaluation setting with the target being the Nash policy over a set of candidates.

\textbf{Contextual dueling bandits.} This line of work studies how to learn a context-dependent policy when the bandit can only collect pairwise or subsetwise preferential feedback~\citep[e.g.][]{Yue.2012, Dudik.2015, Saha.2021, Bengs.2022,Di.2024,Wu.2024}. \citet{Saha.2021}, \citet{Bengs.2022} and \citet{Di.2024} all implicitly assume stochastic transitivity, which we do not. \citet{Wu.2024} uses Borda score to define a transitivity-robust regret, but still learns a single winner that is neither Condorcet-consistent nor independent of clones (Appendix~\ref{sec:benefits-vnw}). \citet{Dudik.2015} is the first to propose the concept of "von Neumann winner" in the dueling bandit setting, which allows for preference cycles and upon which we build our \method. But their estimator of the preference matrix is not debiased and thus suffers from plug-in bias. Finally, all of these works are targeted for an interactive setting, where the bandit iteratively selects which pair to compare, digests the feedback, and then approximates the optimal policy. In contrast, our work is targeted for an offline setting, where the feedback is collected beforehand. Thus, these works are not applicable to our setting.

\textbf{Offline reinforcement learning in games.} Offline RL in games studies how to learn equilibrium policies from offline multi-agent behaviors, and thus is related to our setting. Existing methods learn such policies directly from observed rewards, with or without KL regularization~\citep{Cui.2022,Zhong.2022,Zhang.2026,Chen.2026}. Closest to our debiasing approach, \citet{Abe.2021} construct doubly robust off-policy estimators of the exploitability of candidate policy profiles and use them for policy selection; we instead debias the contextual PM and learn the entire context-to-equilibrium mapping with an orthogonal gap loss. Finally, RL in games has a fundamentally different observation model: the payoff is sampled directly from the reward model for joint actions, whereas we observe only the relative feedback of the agents within a menu. Hence, these methods are \emph{not} applicable to our setting.

\textbf{Equilibrium learning from stochastic/noisy games.} A related literature studies equilibrium approximation when the payoff function is unavailable analytically but can be accessed through payoff queries or stochastic simulation. This line develops search procedures, probabilistic error bounds, and finite-sample guarantees for equilibria computed from empirical games~\citep[e.g.,][]{Wellman.2006,Jordan.2008,Vorobeychik.2010,Viqueira.2019,Fearnley.2015}. However, these equilibrium learning methods does not apply debiasing w.r.t. the nuisance functions, and thus are subject to plug-in bias. In contrast, our approach uses a debiased payoff matrix (DPM) that is robust to nuisance misspecification, and optimizes a Neyman-orthogonal loss to learn the contextual equilibria.

\newpage
\section{Additional experiments}
\label{sec:additional-experiments}
We complement the main experiments with additional results. In Section~\ref{subsec:aggregation-hides-winner}, we compare pooled and prompt-specific equilibria on RouterBench to illustrate how aggregation can hide contextual winners. In Section~\ref{subsec:lm-arena-experiment}, we examine subgroup-specific equilibria and apply NashEval-debiased to human feedback from LM Arena. We use these analyses to illustrate variation in agent capabilities across contexts and the presence of multiple agents in an equilibrium's support. Because we do not know the true equilibria in LM Arena, we treat comparisons with empirical subgroup lotteries as descriptive evidence rather than an accuracy assessment.

In Section~\ref{subsec:cost-constrained-performance}, we compare cost--performance trade-offs to assess whether contextual equilibrium evaluation remains useful when we account for inference costs. In Section~\ref{subsec:ablations}, we replace the outcome and payoff regression backbones with MLPs to examine whether the main synthetic findings persist under an alternative model instantiation. In Section~\ref{subsec:reg-strengths}, we vary the regularization strength to assess the sensitivity of exploitability and winner-set recovery to this choice.

\subsection{Aggregation hides true contextual winners}
\label{subsec:aggregation-hides-winner}
Existing maximal-lottery-based agent evaluators aggregate preferences across all or a subset of contexts to compute a global maximal lottery~\citep{Lanctot.2023,Khalaf.2026}. Using the RouterBench dataset, we show that this aggregation can hide the true contextual winners and lead to suboptimal evaluation.

We use MMLUPro and LiveCodeBench as examples. Figures~\ref{fig:rb-agg-individual-vNw-mmlu} and~\ref{fig:rb-agg-individual-vNw-code} compare the equilibrium distribution obtained from aggregated preferences with the distributions for individual prompts (i.e., contexts). The aggregated preferences yield a Condorcet winner: Gemini 2.5 Pro for MMLUPro and GPT-5 for LiveCodeBench. Across individual prompts, however, the contextual equilibria are highly heterogeneous and need not contain a Condorcet winner. In some contexts, the aggregated winner is not even in the contextual equilibrium support (see Contexts 2 and 3 for LiveCodeBench). Therefore, \textbf{aggregation hides the true contextual winners and can lead to suboptimal evaluation}. This motivates our work on a contextual evaluator that adaptively selects the best agent or agents for each context.

\begin{figure}[h]
\centering
\includegraphics[width=1.0\textwidth]{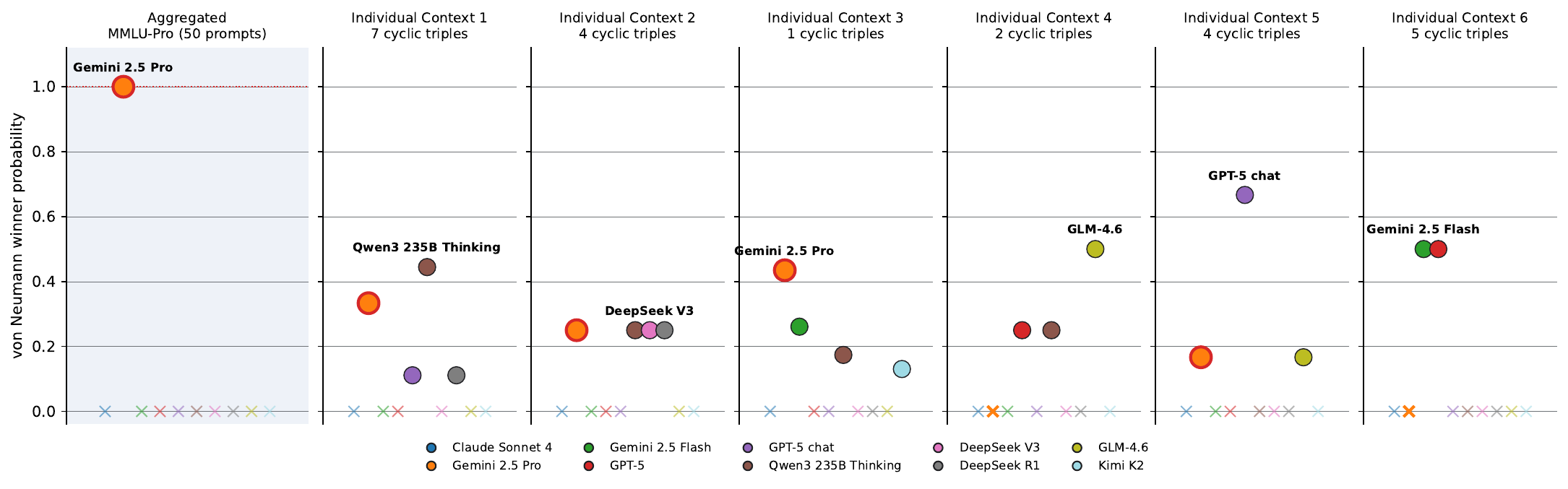}
\caption{Equilibrium distributions for aggregated MMLUPro preferences and individual prompts. Aggregation yields a Condorcet winner, whereas the contextual equilibria are heterogeneous and need not contain NashEval-debiased. \textbf{Therefore, aggregation hides the true contextual winners and can lead to suboptimal evaluation.}}
\label{fig:rb-agg-individual-vNw-mmlu}
\end{figure}

\begin{figure}[h]
\centering
\includegraphics[width=1.0\textwidth]{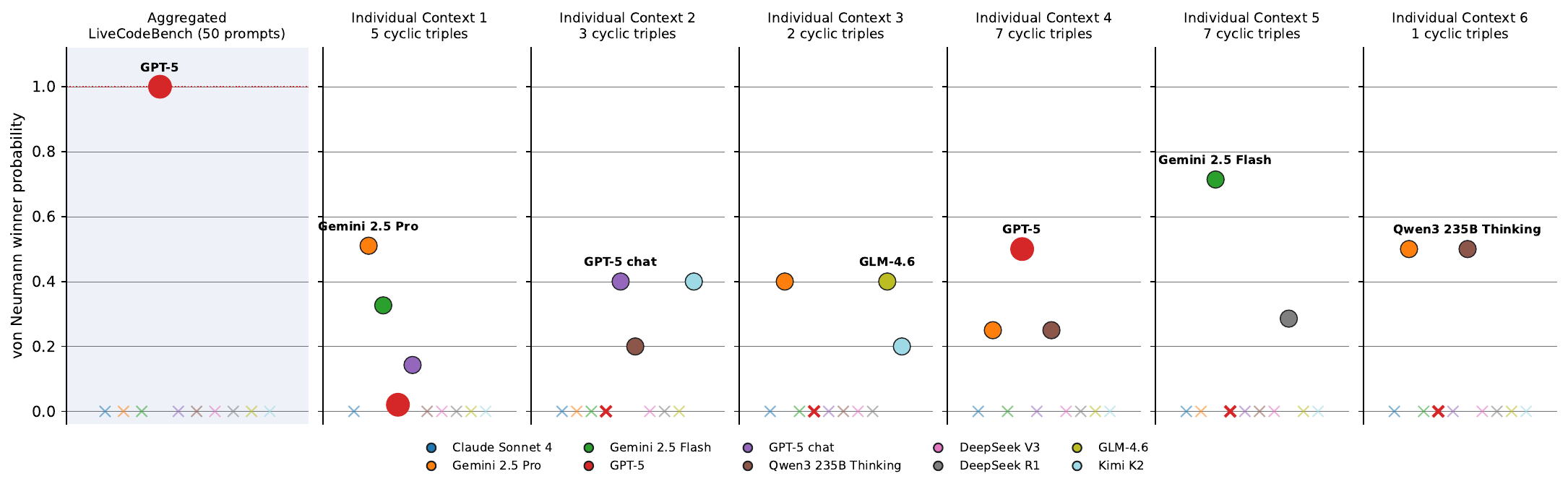}
\caption{Equilibrium distributions for aggregated LiveCodeBench preferences and individual prompts. Aggregation yields a Condorcet winner, whereas the contextual equilibria are heterogeneous and need not contain NashEval-debiased. Therefore, \textbf{aggregation hides the true contextual winners and can lead to suboptimal evaluation}.}
\label{fig:rb-agg-individual-vNw-code}
\end{figure}

\subsection{Experiment on LM Arena}
\label{subsec:lm-arena-experiment}
We demonstrate our \textbf{NashEval-debiased} on the Arena Human Preference 140K data. In LMArena, a user compares responses from two anonymously presented language models and supplies a preference, a tie, or a both-bad judgment~\citep{Chiang.2024}. The full release contains 135,634 human votes involving 53 models and 128,336 distinct first-user prompts (Appendix~\ref{subsec:lm-arena-dataset}). We restrict both training and analysis to the 17,480 comparisons between the 15 models in Table~\ref{tab:selected-llm-models}, which cover 17,296 distinct prompts. 

\subsubsection{Heterogeneity of the agent capability across contexts}
\label{subsec:heterogeneity-lmarena}
In this section, we provide evidence of the heterogeneity of agent capability across contexts in the LM Arena dataset. We stratify the LMArena data by discrete context attributes (see Table~\ref{tab:arena-language-distribution},\ref{tab:arena-attribute-distribution}) and construct majority-margin matrices using win-minus-loss counts for each stratification, with unobserved model pairs replaced by ties. We then compute an equilibrium distribution for each subgroup having at least 90\% pair coverage. When multiple equilibria exist, we choose the one with minimal $l_2$ norm.

\textbf{Results.} We quantify heterogeneity using pairwise total-variation distances between the equilibrium distributions. For each single- or two-attribute combination, we report the mean and maximum total-variation distance in Table~\ref{tab:arena-vnw-heterogeneity}. We further visualize the subgroup-specific equilibria in Figures~\ref{fig:arena-maximal-lotteries} and~\ref{fig:arena-maximal-lotteries-all-attributes}. These equilibria are highly heterogeneous across subgroups. For example, Gemini 2.5 Pro is not in the equilibrium support for either $(\text{code},\text{math})=(\text{No},\text{yes})$ or $(\text{yes},\text{no})$. Therefore, a single global equilibrium cannot represent how agent performance varies across contexts.

\begin{figure}[htbp]
    \centering
    \begin{minipage}[t]{0.32\textwidth}
        \centering
        \includegraphics[width=\linewidth]{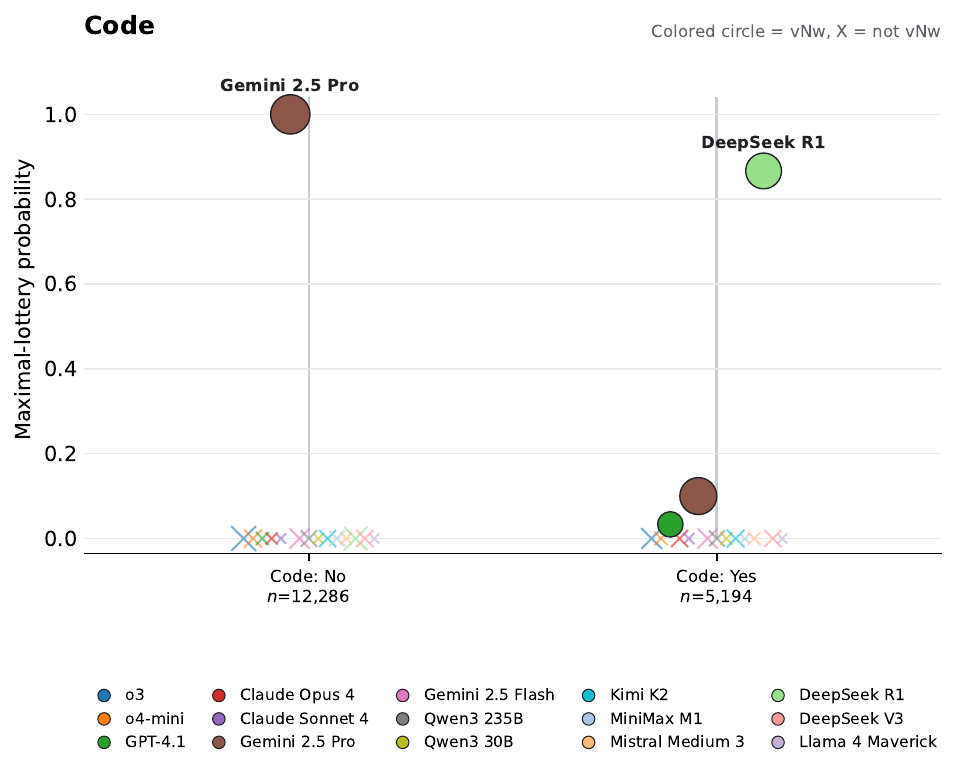}
        \par\small (a) Code
    \end{minipage}
    \hfill
    \begin{minipage}[t]{0.32\textwidth}
        \centering
        \includegraphics[width=\linewidth]{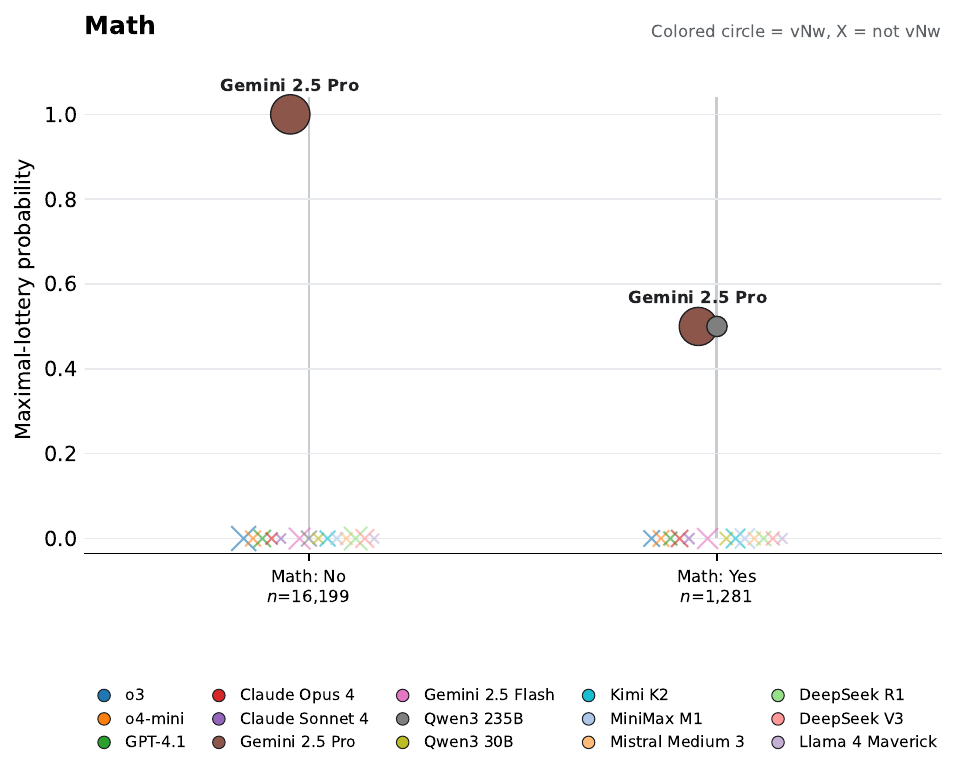}
        \par\small (b) Mathematics
    \end{minipage}
    \hfill
    \begin{minipage}[t]{0.32\textwidth}
        \centering
        \includegraphics[width=\linewidth]{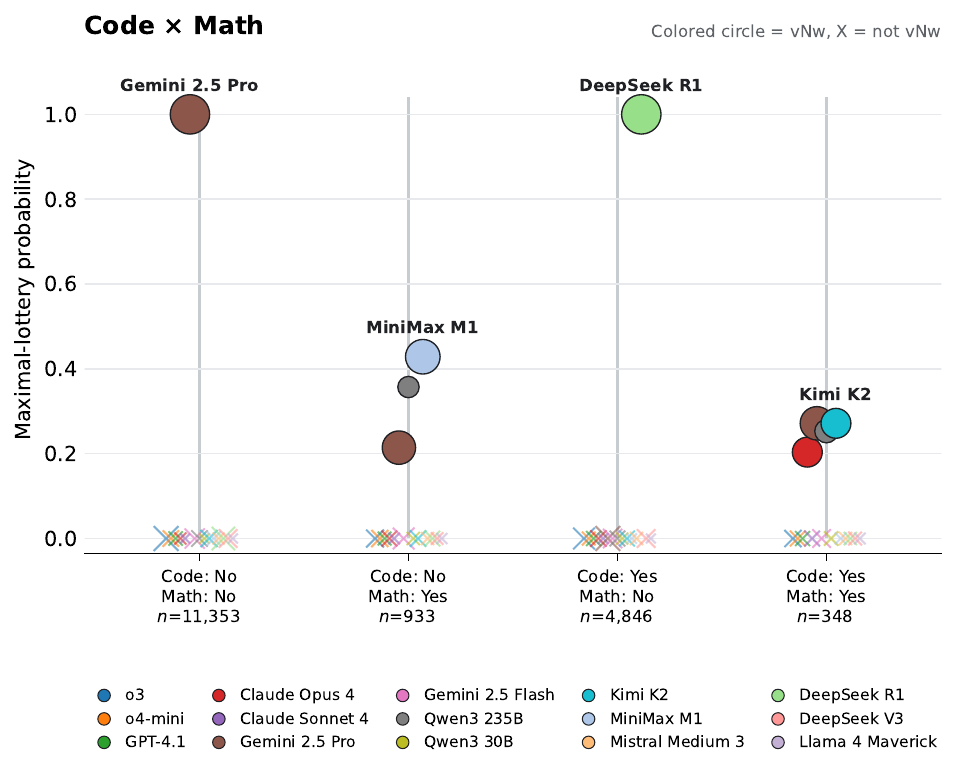}
        \par\small (c) Code and mathematics
    \end{minipage}
    \caption{Subgroup-specific equilibrium distributions across code and mathematics contexts. Each plotted circle is an agent in the equilibrium support, and the y-axis shows its equilibrium probability. Further stratification can introduce new agents into the support (e.g., MiniMax M1), indicating that \textbf{a global equilibrium does not reflect context-specific agent capabilities}.}
    \label{fig:arena-maximal-lotteries}
\end{figure}

\begin{figure}[htbp]
    \centering
    \includegraphics[width=\textwidth]{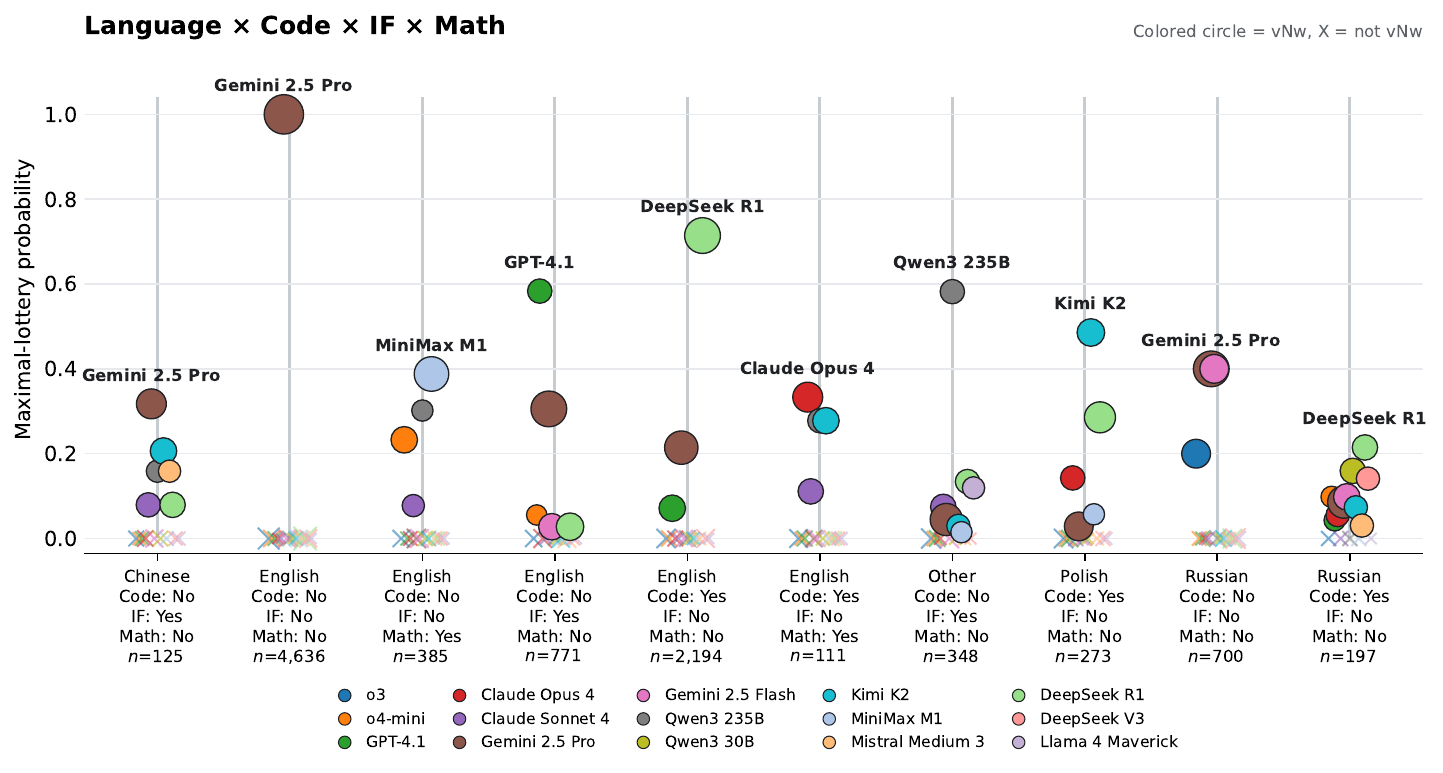}
    \caption{Subgroup-specific equilibrium distributions across language, code, instruction-following, and mathematics contexts. Each plotted circle is an agent in the equilibrium support, and the y-axis shows its equilibrium probability. Agent capabilities vary substantially across contexts.}
    \label{fig:arena-maximal-lotteries-all-attributes}
\end{figure}

\begin{table}[t]
\centering
\caption{Heterogeneity of subgroup-specific equilibrium distributions in LMArena illustrated by total variation within each attribution composition. Only subgroups with at least 90\% pair coverage are included. \textbf{Subgroups vary substantially in their winning agents within each stratification.}}
\label{tab:arena-vnw-heterogeneity}
\small
\begin{tabular}{@{}lrrrr@{}}
\toprule
\textbf{Stratification}
& \textbf{Subgroups}
& \textbf{Mean TV}
& \textbf{Max TV}
& \textbf{Distinct winners} \\
\midrule
\multicolumn{5}{l}{\emph{Single attribute}} \\
Language                       & 5 & 0.612 & 0.866 & 3 \\
Code                           & 2 & 0.900 & 0.900 & 2 \\
Instruction following          & 2 & 0.000 & 0.000 & 1 \\
Mathematics                    & 2 & 0.500 & 0.500 & 1 \\
\midrule
\multicolumn{5}{l}{\emph{Two attributes}} \\
Language $\times$ code
                               & 6 & 0.460 & 0.600 & 1 \\
Language $\times$ instruction following
                               & 6 & 0.736 & 1.000 & 4 \\
Language $\times$ mathematics
                               & 5 & 0.734 & 1.000 & 4 \\
Code $\times$ instruction following
                               & 4 & 0.438 & 0.875 & 2 \\
Code $\times$ mathematics
                               & 3 & 1.000 & 1.000 & 3 \\
Instruction following $\times$ mathematics
                               & 3 & 0.333 & 0.500 & 1 \\
\bottomrule
\end{tabular}
\end{table}

For this experiment, we define the context using only the recorded language, code, mathematics, and instruction-following labels. We display a selection of subgroups that have at least 100 comparisons and observed feedback for at least 94 of the 105 selected-model pairs. The true context-specific equilibrium is unavailable in LM Arena. We therefore use the empirical maximal lottery within each subgroup as a reference, i.e. disagreement or agreement do not necessarily imply the accuracy of the method. 

\begin{figure}
    \centering
    \includegraphics[width=\linewidth]{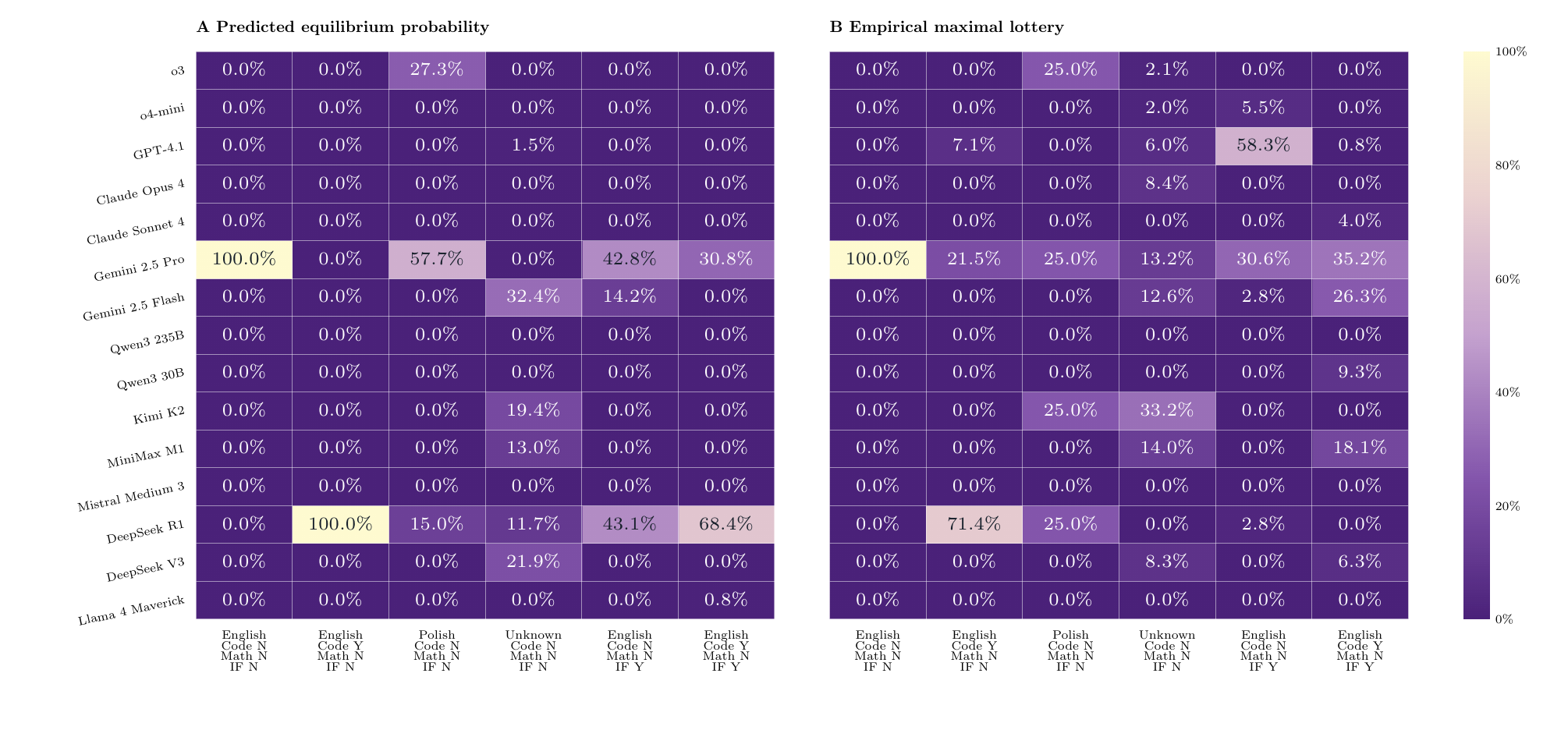}
    \caption{Attribute-specific equilibria for six LM Arena subgroups defined by language, code, mathematics, and instruction-following labels. Panel A shows the NashEval-debiased equilibrium probability of each selected model. Panel B shows the regularized maximal lottery from pairwise win-minus-loss counts pooled within the same subgroup. Both panels use $\rho=0.01$ and report percentages. Note: this is different from the heatmap Figure~\ref{fig:rb-heatmap}, which reports the winner frequencies of the agents.}
    \label{fig:arena-heatmap}
\end{figure}

\textbf{Results.} Figure~\ref{fig:arena-heatmap} shows that the evaluation of our \textbf{NashEval-debiased} is largely consistent with the empirical maximal lottery, which indicates the effectiveness of our \method. For English prompts without code, mathematics, or instruction following, both equilibria assign all probability to Gemini 2.5 Pro. For English coding prompts without mathematics or instruction following, the predicted equilibrium assigns all probability to DeepSeek R1, while the empirical lottery assigns it 71.4\%. The predicted equilibrium changes across these contexts: DeepSeek R1 receives no probability in the first group but is the sole supported model in the second. For English instruction-following prompts without code or mathematics, the predicted equilibrium assigns 43.1\% to DeepSeek R1 and 42.8\% to Gemini 2.5 Pro, whereas the empirical lottery assigns 58.3\% to GPT-4.1. Claude Opus 4 receives zero predicted probability in all six displayed groups, despite having the highest listed price.

\subsection{Cost-constrained performance.}
\label{subsec:cost-constrained-performance}
\textbf{Experimental setup.} We reuse the RouterBench payoff models from RQ2 across five seeds and 500 test prompts. For NashEval-debiased, NashEval-plug-in, and Maximal lottery, we incorporate the prices in Table~\ref{tab:rb-models} through
\begin{equation}
 \Omega_\beta(\pi)=\frac{\rho}{2}\|\pi\|_2^2+\beta\widetilde c^\top\pi,
 \qquad \rho=0.01,\qquad
 \widetilde c_j=\frac{c_j-\min_k c_k}{\max_k c_k-\min_k c_k}.
 \label{eq:routerbench-cost-regularizer}
\end{equation}
Here, $c_j$ is the cost of $1{,}000$ input and $1{,}000$ output tokens. We vary $\beta$ and compare against \textbf{BT-cost}, which maximizes its predicted pairwise score under a per-prompt expected-cost budget following Theorem~1 of \citet{Frick.2025} (Appendix~\ref{para:bt-cost-constrained}). We compare all methods by realized mean expected cost. For the equilibrium methods, we use Eq.~(\ref{eq:routerbench-cost-regularizer}) with the price weights
\begin{align}
 \beta\in\{0.025t:t=0,\ldots,12\}
 \cup\{0.35,0.4,0.5,0.75,1,1.5,2,3,5,10,20\}.
\end{align}

\textbf{Metrics.} We report unregularized exploitability against an unrestricted opponent and average strict win rate against the ten single-agent opponents with equal weight. We give ties, including self-comparisons, zero credit and retain them in the denominator. 

\textbf{Results.} Figures~\ref{fig:cost-constrained-pareto} and~\ref{fig:average-win-rate-cost-frontier} show that NashEval-debiased offers the strongest empirical trade-off through much of the intermediate-cost region, achieving the pareto frontier among intermediate cost ranges for both metrics. This demonstrates that our method is able to produce the cost-constrained contextual evaluations that reflect the collective preference of the population.

\begin{figure}
\centering
\includegraphics[width=0.7\textwidth]{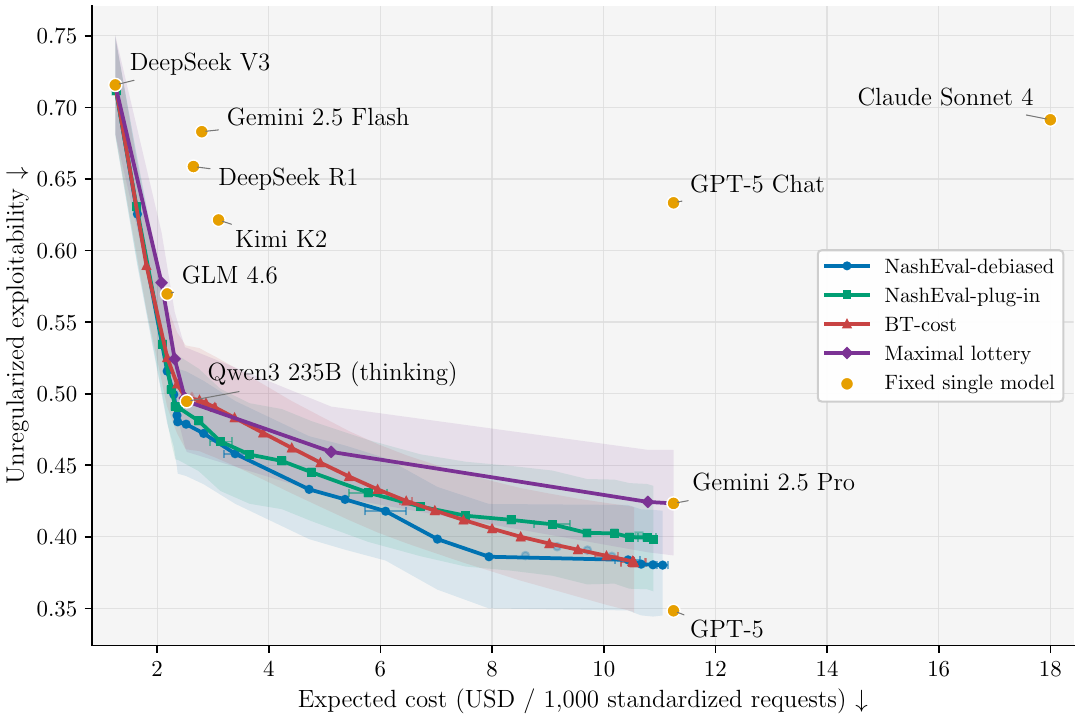}
\caption{Cost--exploitability trade-offs on RouterBench across five seeds and 500 test prompts. Our method largely achieves the empirical Pareto frontier over costs of approximately USD $2$--$11$ per $1{,}000$ standardized requests among the evaluated policies. Therefore, under the same cost budge, our method produces contextual evaluations that better reflect the collective preference of the population.}
\label{fig:cost-constrained-pareto}
\end{figure}

\begin{figure}
\centering
\includegraphics[width=0.7\textwidth]{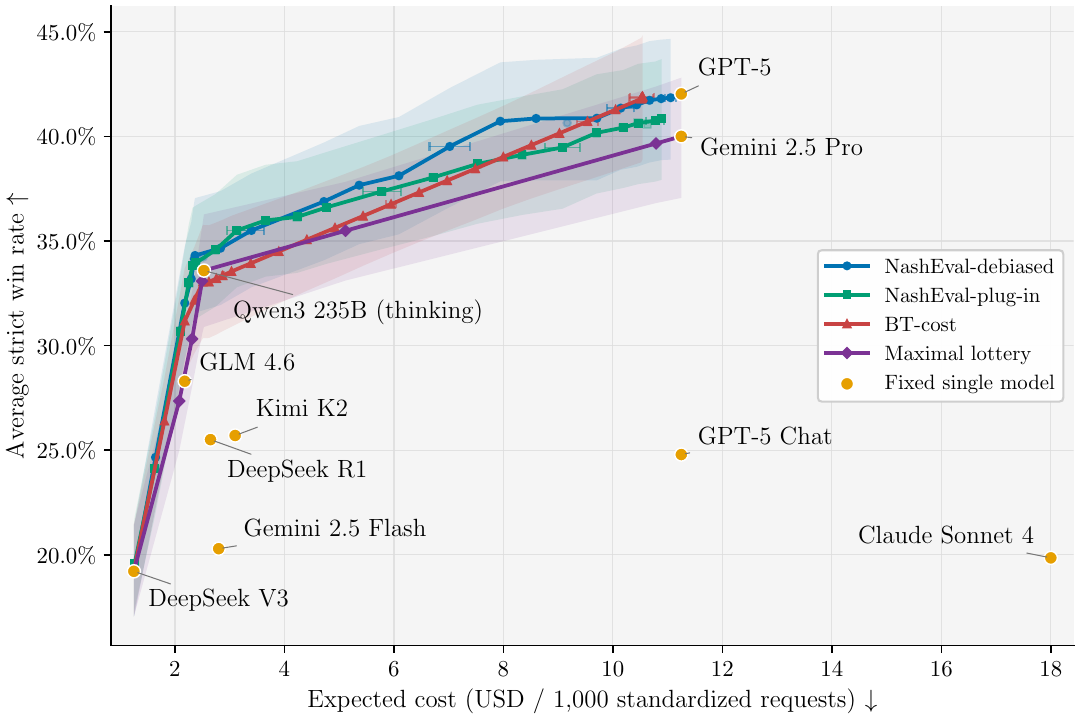}
\caption{Cost--average-win-rate trade-offs on RouterBench. Our method largely achieves the empirical Pareto frontier at intermediate cost range $[2, 11]$. Therefore, under the same cost budge, our method produces contextual evaluations that better reflect the collective preference of the population.}
\label{fig:average-win-rate-cost-frontier}
\end{figure}

\subsection{Ablations on backbone.}
\label{subsec:ablations}

\textbf{Experimental setup.} We conduct ablation study on the backbone of the outcome nuisance estimator and the second-stage payoff regressor on the simulated dataset from RQ1. We jointly replace LightGBM in these two components with multilayer perceptrons (MLPs). We retain the LightGBM propensity estimator and compute equilibria with the same numerical solver as in RQ1, without introducing a learned policy predictor. For NashEval-oracle, we use the same MLP payoff regressor as for NashEval-debiased, but construct the pseudo-outcomes with the true nuisances. NashEval-plug-in uses the fold-averaged MLP outcome predictions without a separate payoff regression. The remained is unchanged compared with RQ1. We evaluate unregularized exploitability and support $F_1$, using the same support threshold of $10^{-3}$, and summarize each metric by its mean and standard deviation across 5 random seeds.seeds.

We use two hidden layers with widths $(64,64)$ for each outcome nuisance and $(128,128)$ for each pairwise payoff regressor, with ReLU activations. We train both networks with Adam at learning rate $3\times10^{-3}$ and an $L_2$ weight penalty of $10^{-4}$, using batch sizes of $512$ and $1{,}024$, respectively. We allow at most $300$ epochs and apply early stopping on a held-out subset of the corresponding training data with patience 12. We fix these MLP settings before evaluating the test sets. We regress the unclipped debiased pseudo-outcomes, clip predicted margins to $[-1,1]$, and impose antisymmetry before solving for the equilibrium.

We report the mean and std of the (unregularized) exploitability and $F_1$ score over five random seeds in Figure~\ref{fig:ablations}. For $N_v \geq 3$, our \method consistently outperforms other non-oracle baselines. For $N_v=1$, preferences are transitive and the BT-model-based baselines are slightly better. The result is consistent with the main findings in RQ1 in Section~\ref{sec:experiments}. This proves the effectiveness of our \method regardless of the backbone instantiations, and that the higher performance is attributed to the method itself.

\begin{figure}
\centering
\includegraphics[width=0.7\textwidth]{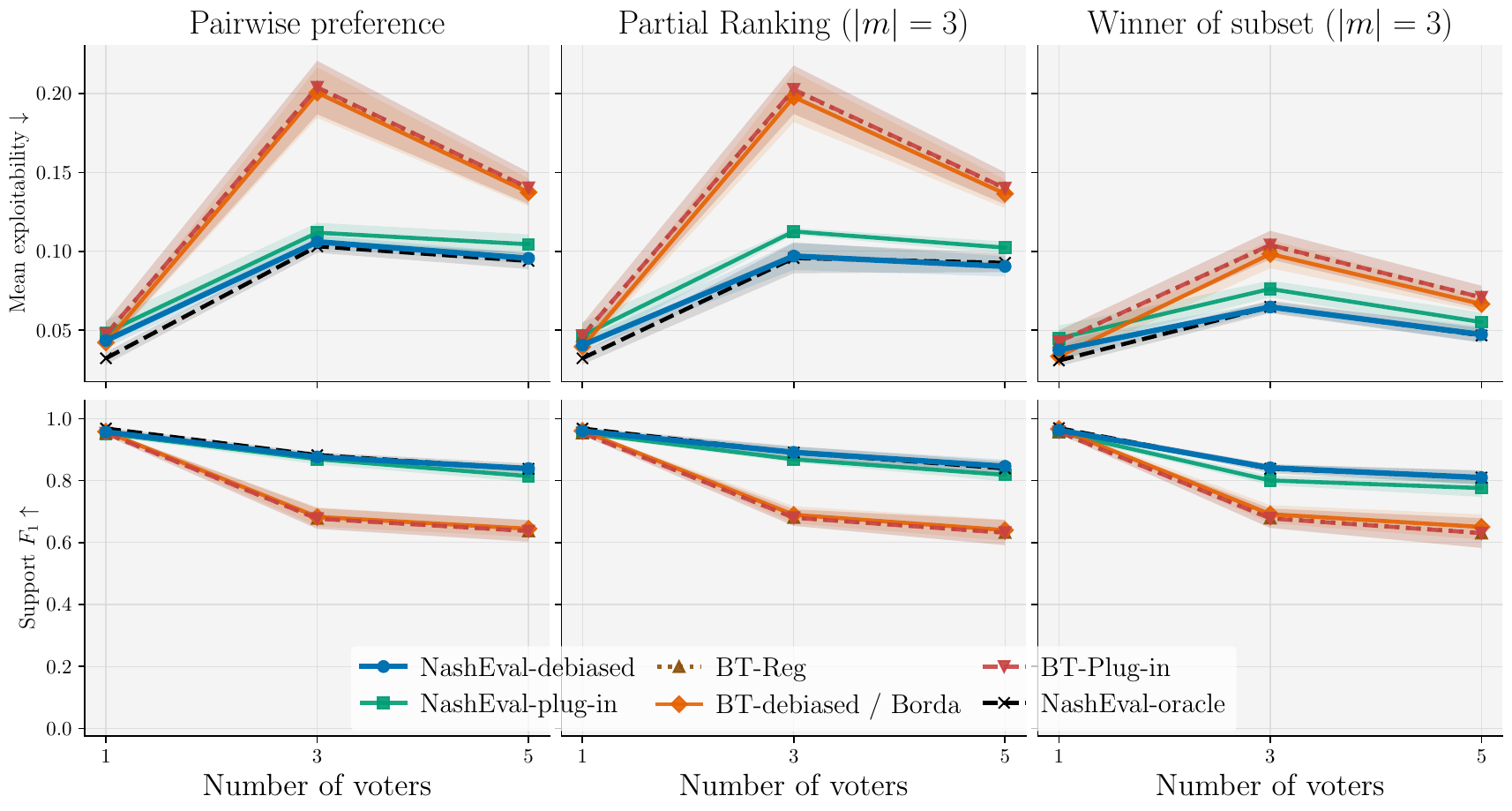}
\caption{Ablation study on the outcome nuisance and payoff backbone on the simulated dataset. We note that our \method outperforms the non-oracle baselines on $N_v\geq 3$ and achieves comparable performance with \textbf{BT}-based baselines on $N_v=1$. This result demonstrate the effectiveness of the learning framework itself.}
\label{fig:ablations}
\end{figure}

\subsection{Performance over varying regularization strengths.}
\label{subsec:reg-strengths}
We investigate the effect of the regularization strength $\rho$ on the performance of our method on the simulated dataset. The setting is similar to RQ1, but we fix the number of voters to $3$. Then we vary $\rho$ within the range $\{0.0, 0.001, 0.01, 0.1, 1.0\}$ and compare against other contextual evaluation baselines. We report the mean and std of the (unregularized) exploitability and $F_1$ score over five random seeds in Figure~\ref{fig:rho-comparison}. We observe no significant change in both metrics when $\rho\leq 0.01$, with performance slightly decreasing for $\rho$ beyond that range. We also observe a slight improvement in $F_1$ score when $\rho$ is increased from $0.1$ to $1.0$. This is likely due to the sparsity imposed by the $L_2$ regularizer, which reduces the size of the support and thus the chance of false positives. In summary, our \method is robust to the choice of $\rho$.

\begin{figure}
\centering
\includegraphics[width=0.8\linewidth]{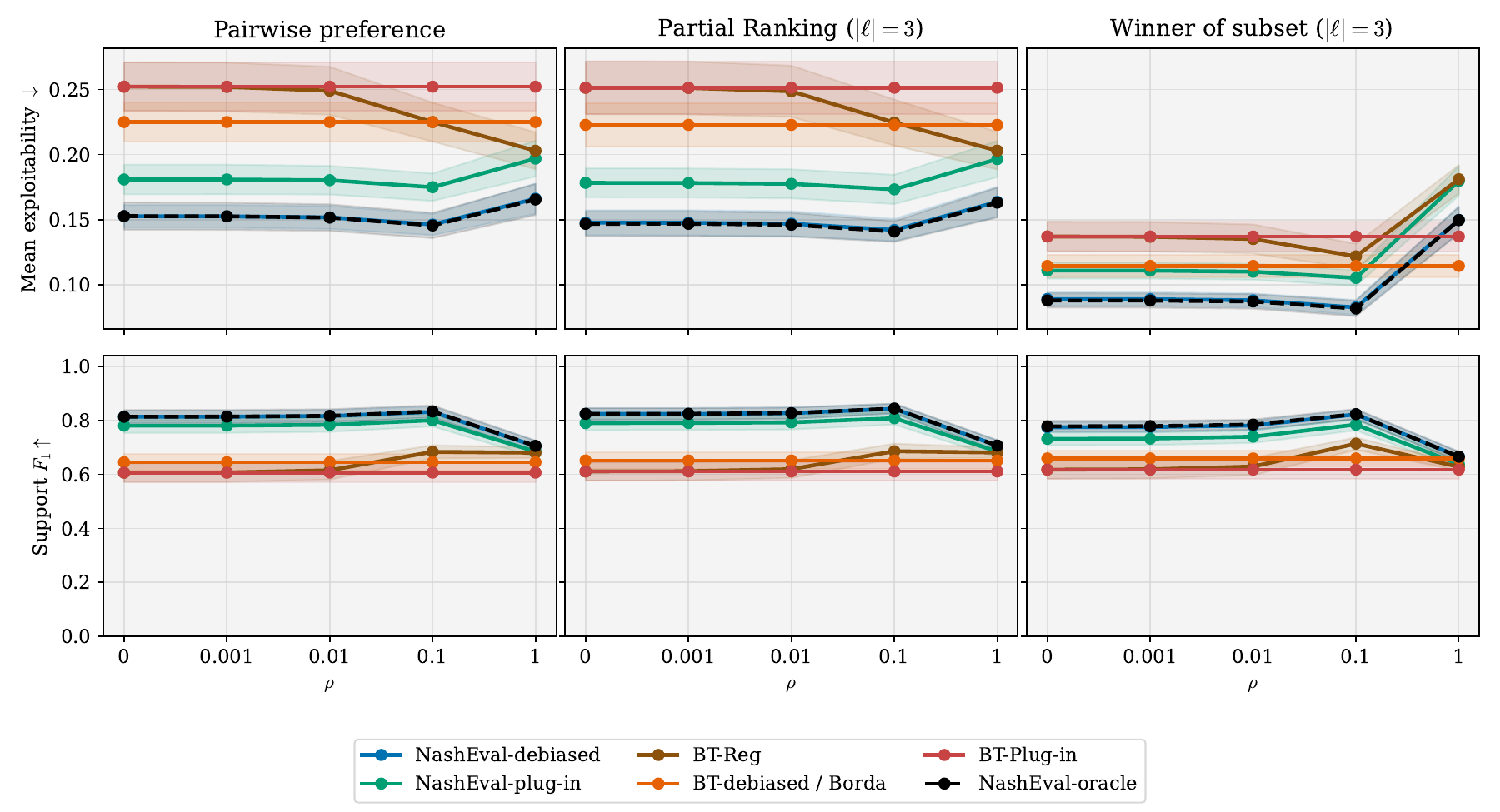}
\caption{Mean $\pm$ std Mean exploitability and $F_1$ score on simulated dataset for $\#$voters$=3$ over 5 random seeds, plotted against varying regularization strengths $\rho$.}
\label{fig:rho-comparison}
\end{figure}

\subsection{Comparison with the numerical equilibrium solver}
\label{subsec:numerical-equilibrium}
A na\'ive approach is to only estimate the contextual PM, and then solve the equilibrium every time a new context is encountered. We compare our \textbf{NashEval-debiased} and \textbf{NashEval-plug-in} with the numerical equilibrium solver using both the debiased and plug-in PM estimators. We adopt the same settings as in RQ1, where we use unregularized exploitability and support $F_1$ scores to evaluate the quality of the learned equilibrium while recording the mean runtime during inference. We use a custom solver that enumerates over all possible supports ($2^K - 1$) to search for equilibria that satisfy the KKT conditions.

\begin{figure}
\centering
\includegraphics[width=0.8\linewidth]{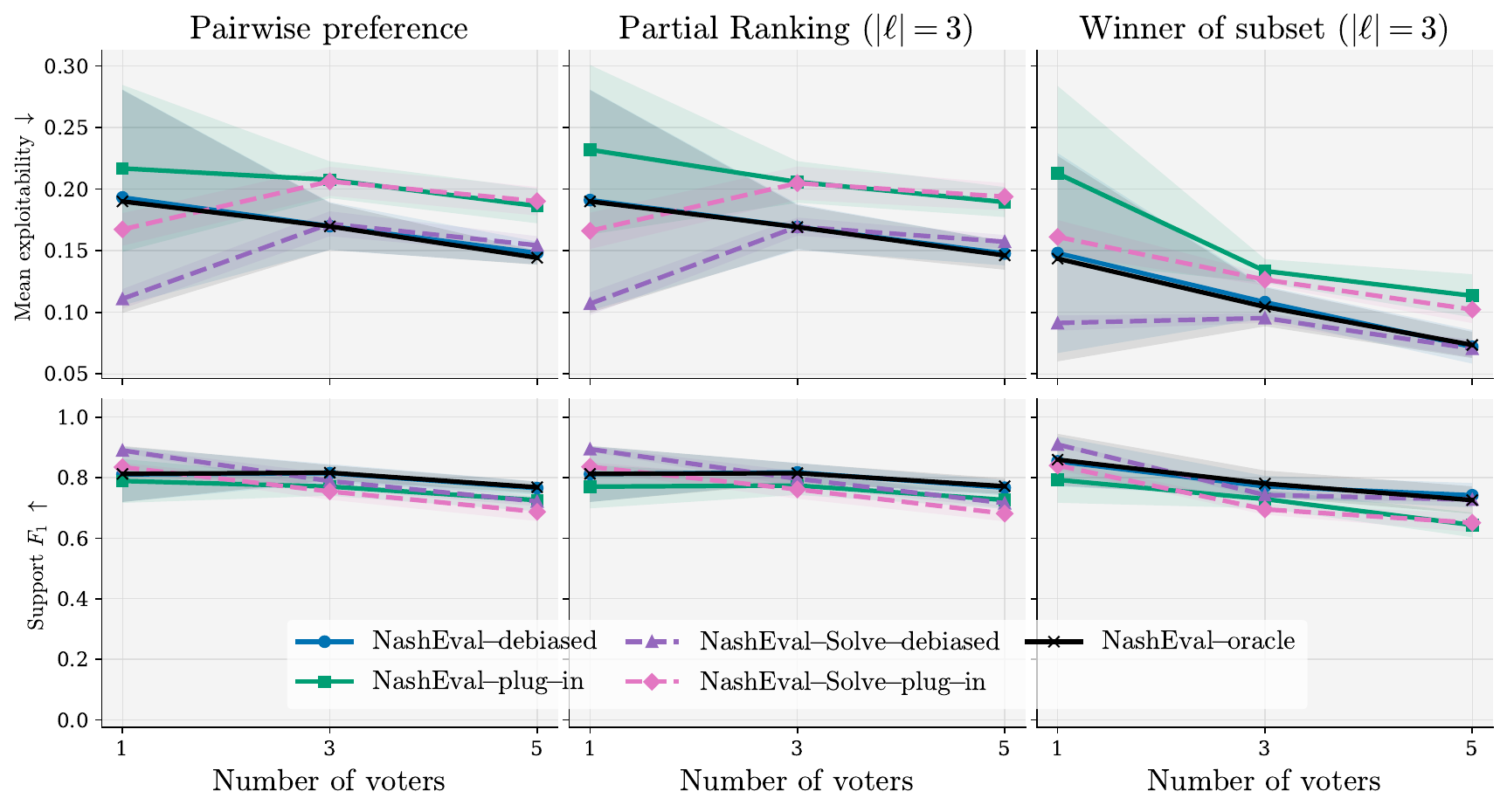}
\caption{Performance comparison of our \method with the numerical solver. Mean $\pm$ std Mean exploitability and $F_1$ score on simulated dataset for $\#$voters$=1,3,5$ over 5 random seeds. }
\label{fig:rq1-solver-comparison}
\end{figure}

\textbf{Results.} Results on Figure~\ref{fig:rq1-solver-comparison} show that our \method performs comparably to the numerical solver when number of voters is larger than 1. Runtime comparison is given in Figure~\ref{fig:inference-runtime}. We see that our \method is much more efficient during inference: The runtime of \textbf{NashEval-debiased} remain stable,  while the inference runtime of the numerical solver increases exponentially when $K$ increases.

\begin{figure}
    \centering
    \includegraphics[width=0.5\linewidth]{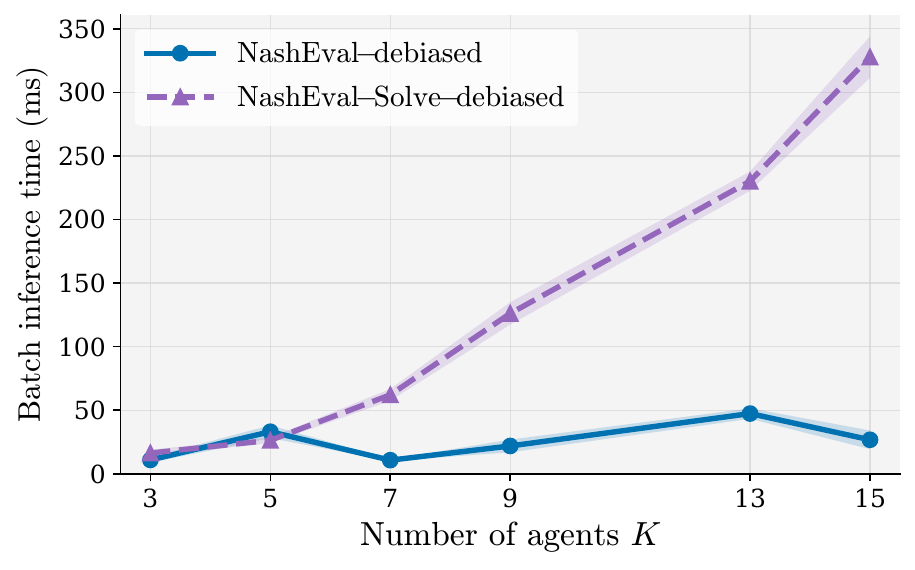}
    \caption{Inference runtime comparison between our \textbf{NashEval-debiased} and the numerical equilibrium solver. Our \method is much more efficient during inference time.}
    \label{fig:inference-runtime}
\end{figure}

\newpage
\section{Preliminaries of Game Theory}
\label{app:game-theory}

In this appendix, we introduce the game-theoretic concepts needed to understand our solution concept and methods. In Section~\ref{app:vi}, we introduce variational inequalities and monotone operators, which provide the main tools for characterizing and solving equilibria in two-player zero-sum games.
In Section~\ref{app:selfplay}, we introduce the self-play operator and show that its variational-inequality solutions coincide with the symmetric equilibria of our regularized game.
In Section~\ref{app:exist-unique}, we prove existence and uniqueness of the contextual regularized equilibrium.
In Section~\ref{subsec:gap-function}, we introduce the gap function and establish its consistency, equilibrium-error control, and differentiability.
In Section~\ref{app:payoff-stability}, we prove that the equilibrium is stable under perturbations of the payoff matrix.

\paragraph{Notation and standing conventions.} We conduct the game-theoretic analysis at a fixed context $x$. Unless stated otherwise, we suppress the context from the payoff notation and write $A=A(x)$. The regularizer $\Om$ does not depend on $x$. Recall that $A=-A^\top$ and that $\Om$ is continuously differentiable on a neighborhood of $\DeltaK$ and $\kappa$-strongly convex on $\DeltaK$ with respect to $\|\cdot\|$, where $\kappa>0$. Thus, for every $p,q\in\DeltaK$,
\begin{equation}
\Om(q)\ge\Om(p)+\langle\nabla\Om(p),q-p\rangle+\frac{\kappa}{2}\|q-p\|^2.
\label{eq:strong-convexity}
\end{equation}

\paragraph{Game-theoretic setting.} Game theory studies strategic interactions in which each player's payoff depends on the strategies chosen by all players. A game specifies the players, their feasible strategy sets, their payoff functions, and a solution concept that describes stable joint behavior. In this work, we study a contextual, regularized, two-player zero-sum game over the shared mixed-strategy simplex $\DeltaK$. At each context $x$, the row player receives $F_\Om(\pi,q;x)$ and the column player receives $-F_\Om(\pi,q;x)$, where the skew-symmetric matrix $A(x)$ encodes collective pairwise preferences and the context-independent regularizer $\Om$ induces strong convexity. We seek the symmetric saddle point of this game, which coincides with its Nash equilibrium and defines the contextual von Neumann winner $\pi^\star(x)$.

\subsection{Variational inequalities and monotone operators}
\label{app:vi}

In this subsection, we introduce variational inequalities and strong monotonicity, which provide the equilibrium characterization and uniqueness argument used throughout this appendix.
A \emph{variational inequality} is the problem of finding, for an operator $V:\DeltaK\to\R^K$ and the convex compact set $\DeltaK$, a point $\pi^\star\in\DeltaK$ such that
\begin{equation}
\big\langle V(\pi^\star),\ \pi-\pi^\star\big\rangle\ \ge\ 0
\qquad\text{for all }\pi\in\DeltaK.
\label{eq:vi}
\end{equation}
We write $\mathrm{VI}(V,\DeltaK)$ for this problem. Geometrically, Eq.~(\ref{eq:vi}) says that $-V(\pi^\star)$ makes a non-acute angle with every feasible direction at $\pi^\star$, so no feasible direction decreases the objective represented by $V$. See \citet{Facchinei.2003}.

The operator property that governs uniqueness is monotonicity. An operator $V$ is \emph{monotone} on $\DeltaK$ if
\begin{align*}
\langle V(p)-V(q),\ p-q\rangle\ \ge\ 0
\qquad\text{for all }p,q\in\DeltaK,
\end{align*}
and \emph{$\kappa$-strongly monotone} with respect to $\|\cdot\|$ if the stronger inequality
\begin{equation}
\langle V(p)-V(q),\ p-q\rangle\ \ge\ \kappa\,\|p-q\|^2
\label{eq:strong-monotone}
\end{equation}
holds for the same quantifiers. We use two standard facts \citep{Facchinei.2003}: the gradient of a differentiable $\kappa$-strongly convex function is $\kappa$-strongly monotone, and a strongly monotone VI has at most one solution. In our game, the linear part $\pi\mapsto-A\pi$ contributes no curvature because $\langle-Ad,d\rangle=-d^\top Ad=0$ for $A=-A^\top$.

\subsection{The self-play operator}
\label{app:selfplay}

In this subsection, we define the self-play operator, characterize symmetric equilibria as solutions of its variational inequality, and establish the strong monotonicity needed to prove uniqueness.
For the regularized game $F_\Om(\pi,q)=\pi^\top Aq-[\Om(\pi)-\Om(q)]$ of Definition~\ref{def:contextual-game}, define the \emph{self-play operator}
\begin{equation}
V_\Om(\pi)\ =\ -A\pi+\nabla\Om(\pi).
\label{eq:selfplay-operator}
\end{equation}
This is the standard self-play operator for a regularized zero-sum game \citep{Sokota.2022}. The next lemma gives its exact saddle-point characterization.

\begin{lemma}[Self-play variational inequality]
\label{lem:selfplay}
A pair $(\pi^\star,\pi^\star)$ with $\pi^\star\in\DeltaK$ is a symmetric saddle point (i.e. a Nash equilibrium) of $F_\Om$ if and only if
\begin{equation}
\big\langle -A\pi^\star+\nabla\Om(\pi^\star),\ \pi-\pi^\star\big\rangle\ \ge\ 0
\qquad\text{for all }\pi\in\DeltaK,
\label{eq:equilibrium-vi}
\end{equation}
Moreover $V_\Om$ is $\kappa$-strongly monotone with respect to $\|\cdot\|$.
\end{lemma}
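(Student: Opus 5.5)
The plan is to reduce the saddle-point condition to two first-order optimality conditions, one for each player, and then use skew-symmetry of $A$ to see that they coincide.

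\textbf{Step 1 (row player).} Recall that $(\pi^\star,\pi^\star)$ is a saddle point of $F_\Om$ if and only if
\begin{align*}
\pi^\star &\in \argmax_{\pi\in\DeltaK} F_\Om(\pi,\pi^\star), &
\pi^\star &\in \argmin_{q\in\DeltaK} F_\Om(\pi^\star,q).
\end{align*}
Fix $q=\pi^\star$. The map $\pi\mapsto \pi^\top A\pi^\star-\Om(\pi)$ is concave, since $\Om$ is convex, and continuously differentiable near $\DeltaK$. By the standard first-order characterization of maximizers of a concave differentiable function over a convex set, $\pi^\star$ maximizes it over $\DeltaK$ if and only if
\begin{equation*}
\langle A\pi^\star-\nabla\Om(\pi^\star),\ \pi-\pi^\star\rangle\le 0 \qquad \text{for all } \pi\in\DeltaK.
\end{equation*}
This is exactly Eq.~(\ref{eq:equilibrium-vi}).

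\textbf{Step 2 (column player).} Fix $\pi=\pi^\star$. The map $q\mapsto (\pi^\star)^\top Aq+\Om(q)$ is convex. By the same first-order characterization, $\pi^\star$ minimizes it over $\DeltaK$ if and only if
\begin{equation*}
\langle A^\top\pi^\star+\nabla\Om(\pi^\star),\ q-\pi^\star\rangle\ge 0 \qquad \text{for all } q\in\DeltaK.
\end{equation*}
Since $A^\top=-A$, the vector here is $-A\pi^\star+\nabla\Om(\pi^\star)$, so this condition is again Eq.~(\ref{eq:equilibrium-vi}).

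\textbf{Step 3 (equivalence).} By Steps 1 and 2, both best-response conditions are each equivalent to Eq.~(\ref{eq:equilibrium-vi}). Hence the symmetric pair is a saddle point if and only if Eq.~(\ref{eq:equilibrium-vi}) holds. The only point requiring care is the first-order characterization itself, in both directions. Sufficiency follows from concavity (respectively convexity) via the tangent-line inequality. Necessity follows by differentiating along the feasible segment $\pi^\star+t(\pi-\pi^\star)$ at $t=0^+$, which uses that $\Om$ is $C^1$ on a neighborhood of $\DeltaK$.

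\textbf{Step 4 (strong monotonicity).} Write $d=p-q$ for $p,q\in\DeltaK$. Then
\begin{equation*}
\langle V_\Om(p)-V_\Om(q),\,d\rangle=-d^\top Ad+\langle\nabla\Om(p)-\nabla\Om(q),\,d\rangle .
\end{equation*}
The first term vanishes because $A$ is skew-symmetric. For the second term, write Eq.~(\ref{eq:strong-convexity}) once at $(p,q)$ and once at $(q,p)$ and add the two inequalities. This gives
\begin{equation*}
\langle\nabla\Om(p)-\nabla\Om(q),\,p-q\rangle\ge\kappa\|p-q\|^2,
\end{equation*}
which is Eq.~(\ref{eq:strong-monotone}).

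I do not expect a genuine obstacle. The one thing to verify is that the two players' optimality conditions really coincide, and this rests entirely on $A^\top=-A$.
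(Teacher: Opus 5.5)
Your proof is correct and follows essentially the same route as the paper: first-order optimality of the concave row-player problem gives the variational inequality, and adding the strong-convexity inequality at $(p,q)$ and $(q,p)$, together with $d^\top A d=0$, gives strong monotonicity. The one minor difference is the column player. You check its own first-order condition and note that $A^\top\pi^\star+\nabla\Om(\pi^\star)=-A\pi^\star+\nabla\Om(\pi^\star)$. The paper instead transfers the row player's best-response property through $F_\Om(\pi^\star,q)=-F_\Om(q,\pi^\star)\ge -F_\Om(\pi^\star,\pi^\star)=F_\Om(\pi^\star,\pi^\star)$. Both arguments rest on skew-symmetry in the same essential way.
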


\begin{proof}
\emph{Necessity.} Suppose $(\pi^\star,\pi^\star)$ is a saddle point. We fix the column strategy at $\pi^\star$ and discard the term $\Om(\pi^\star)$, which does not depend on $\pi$. The row player then maximizes
\begin{equation}
f_{\pi^\star}(\pi)\ =\ \pi^\top A\pi^\star-\Om(\pi)
\label{eq:rowobj}
\end{equation}
over $\DeltaK$. The function $f_{\pi^\star}$ is concave because it equals a linear function minus a convex function. The saddle-point property implies that $\pi^\star$ maximizes it over the convex set $\DeltaK$. The first-order optimality condition for this differentiable concave problem is exactly Eq.~(\ref{eq:equilibrium-vi}).

\emph{Sufficiency.} Conversely, suppose Eq.~(\ref{eq:equilibrium-vi}) holds. By convexity, $\Om(\pi)\ge\Om(\pi^\star)+\langle\nabla\Om(\pi^\star),\pi-\pi^\star\rangle$ for all $\pi\in\DeltaK$. Hence, for every $\pi$,
\begin{align*}
f_{\pi^\star}(\pi)-f_{\pi^\star}(\pi^\star)
=\langle A\pi^\star,\pi-\pi^\star\rangle-\big[\Om(\pi)-\Om(\pi^\star)\big]
\le\big\langle A\pi^\star-\nabla\Om(\pi^\star),\ \pi-\pi^\star\big\rangle
\le0,
\end{align*}
where the final inequality follows from Eq.~(\ref{eq:equilibrium-vi}). We add $\Om(\pi^\star)$ to both sides and obtain $F_\Om(\pi,\pi^\star)\le F_\Om(\pi^\star,\pi^\star)$ for all $\pi\in\DeltaK$. Therefore, $\pi^\star$ is a best response for the row player. For the column player, skew-symmetry gives $F_\Om(p,q)=-F_\Om(q,p)$, so
\begin{align*}
F_\Om(\pi^\star,q)=-F_\Om(q,\pi^\star)\ \ge\ -F_\Om(\pi^\star,\pi^\star)=F_\Om(\pi^\star,\pi^\star),
\end{align*}
Here we use $F_\Om(\pi^\star,\pi^\star)=(\pi^\star)^\top A\pi^\star=0$, which follows from $d^\top Ad=0$ for skew-symmetric $A$. Thus, $\pi^\star$ is also a best response for the column player, and $(\pi^\star,\pi^\star)$ is a saddle point with value zero. This direction uses skewness twice and fails without it.

\emph{Strong monotonicity.} Let $p,q\in\DeltaK$ and put $d=p-q$. Then
\begin{equation}
\langle V_\Om(p)-V_\Om(q),d\rangle
=\underbrace{-\,d^\top Ad}_{=\,0}+\langle\nabla\Om(p)-\nabla\Om(q),d\rangle
=\langle\nabla\Om(p)-\nabla\Om(q),d\rangle .
\label{eq:mono-split}
\end{equation}
The first term vanishes because $d^\top Ad=\tfrac12 d^\top(A+A^\top)d=0$ for skew-symmetric $A$. For the second, we write Eq.~(\ref{eq:strong-convexity}) twice, once at $(p,q)$ and once at $(q,p)$:
\begin{align*}
\Om(q)\ \ge\ \Om(p)+\langle\nabla\Om(p),q-p\rangle+\tfrac{\kappa}{2}\|d\|^2,
\qquad
\Om(p)\ \ge\ \Om(q)+\langle\nabla\Om(q),p-q\rangle+\tfrac{\kappa}{2}\|d\|^2 .
\end{align*}
We add the two inequalities and cancel $\Om(p)+\Om(q)$ to obtain $\langle\nabla\Om(p)-\nabla\Om(q),d\rangle\ge\kappa\|d\|^2$. We substitute this result into Eq.~(\ref{eq:mono-split}) and obtain Eq.~(\ref{eq:strong-monotone}) for $V_\Om$.
\end{proof}

\subsection{Existence and uniqueness}
\label{app:exist-unique}

In this subsection, we combine the minimax theorem with the self-play characterization to prove that the contextual regularized equilibrium exists, is symmetric, and is unique, which makes the target mapping $x\mapsto\pi^\star(x)$ well defined.
\begin{proposition}[Existence and uniqueness of the regularized equilibrium]
\label{prop:exist}
Let $A=-A^\top$ and let $\Om$ be continuously differentiable on a neighborhood of $\DeltaK$ and $\kappa$-strongly convex on $\DeltaK$, with $\kappa>0$. Then the game $F_\Om$ of Definition~\ref{def:contextual-game} admits a saddle point, every saddle point is symmetric of the form $(\pi^\star,\pi^\star)$, the game value is zero, and $\pi^\star$ is \emph{unique}. Consequently $\pi^\star(x)$ is a well-defined single-valued function of $x$.
\end{proposition}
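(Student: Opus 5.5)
Existence follows from Sion's minimax theorem, the structure of saddle points from skew-symmetry, and uniqueness from Lemma~\ref{lem:selfplay}.

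\textbf{Existence.} The set $\DeltaK$ is convex and compact. For fixed $q$, $F_\Om(\cdot,q)=\pi^\top Aq-\Om(\pi)+\Om(q)$ is concave and continuous in $\pi$. For fixed $\pi$, $F_\Om(\pi,\cdot)$ is convex and continuous in $q$, since it is linear plus $\Om$. Sion's minimax theorem~\citep{Sion.1958} therefore gives $\max_\pi\min_q F_\Om=\min_q\max_\pi F_\Om$. Compactness and continuity ensure both the outer max and min are attained, so a saddle point $(\bar\pi,\bar q)$ exists.

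\textbf{Value and symmetry.} Since $F_\Om(p,q)=-F_\Om(q,p)$, we have $\max_\pi\min_q F_\Om(\pi,q)=-\min_q\max_\pi F_\Om(q,\pi)$. Relabeling the variables shows that the lower value is the negative of the upper value. Combined with the minimax equality, the value is $0$.

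Now take any saddle point $(\bar\pi,\bar q)$. Then $\bar\pi$ guarantees $\min_q F_\Om(\bar\pi,q)=0$. Antisymmetry gives $\max_\pi F_\Om(\pi,\bar\pi)=-\min_\pi F_\Om(\bar\pi,\pi)=0$. Hence $\bar\pi$ is also an optimal strategy for the column player. Because the set of saddle points of a zero-sum game is the product of the two players' optimal-strategy sets, $(\bar\pi,\bar\pi)$ is a saddle point, and likewise $(\bar q,\bar q)$. By the necessity part of Lemma~\ref{lem:selfplay}, both $\bar\pi$ and $\bar q$ solve $\mathrm{VI}(V_\Om,\DeltaK)$.

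\textbf{Uniqueness.} Suppose $\pi_1,\pi_2$ both solve the VI. Testing the VI at $\pi_1$ with $\pi=\pi_2$, testing it at $\pi_2$ with $\pi=\pi_1$, and adding gives $\langle V_\Om(\pi_1)-V_\Om(\pi_2),\pi_1-\pi_2\rangle\le 0$. Strong monotonicity from Lemma~\ref{lem:selfplay} gives the lower bound $\kappa\|\pi_1-\pi_2\|^2$, so $\pi_1=\pi_2$. In particular $\bar\pi=\bar q$: every saddle point is symmetric, of the form $(\pi^\star,\pi^\star)$, and $\pi^\star$ is unique. Applying this at each $x$ with $A=A(x)$ makes $x\mapsto\pi^\star(x)$ well defined.

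\textbf{Main obstacle.} The delicate step is showing that an arbitrary saddle point $(\bar\pi,\bar q)$ is symmetric, not merely that a symmetric one exists. This needs the interchangeability of saddle points together with antisymmetry. I would verify it directly from the inequalities: for all $\pi$, $F_\Om(\pi,\bar\pi)=-F_\Om(\bar\pi,\pi)\le 0=F_\Om(\bar\pi,\bar\pi)$, where the last equality holds because $\bar\pi^\top A\bar\pi=0$. This avoids invoking general theory.
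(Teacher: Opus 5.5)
Your proposal is correct and follows essentially the same route as the paper. The paper also uses Sion's theorem for existence, antisymmetry to get value zero and then interchangeability of saddle-point components to show that $(\bar\pi,\bar\pi)$ and $(\bar q,\bar q)$ are saddle points, and finally the self-play variational inequality with strong monotonicity of $V_\Om$ to force $\bar\pi=\bar q$. Your direct inequality check in the last paragraph is just an explicit version of the interchangeability step that the paper cites.
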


\begin{proof}
\emph{Existence.} The strategy set $\DeltaK$ is compact and convex, and $F_\Om$ is upper-semicontinuous and concave in $\pi$ and lower-semicontinuous and convex in $q$. Hence Sion's minimax theorem \citep{Sion.1958} gives
\begin{align*}
\max_{\pi\in\DeltaK}\ \min_{q\in\DeltaK}F_\Om(\pi,q)
\ =\ \min_{q\in\DeltaK}\ \max_{\pi\in\DeltaK}F_\Om(\pi,q)
\ =:\ \mathrm{val}(F_\Om),
\end{align*}
with both extrema attained, so a saddle point exists.

\emph{Value zero and symmetry.} Skew-symmetry of $A$ gives $F_\Om(p,q)=-F_\Om(q,p)$ for all $p,q$, since $p^\top Aq=-q^\top Ap$ and the bracket $[\Om(p)-\Om(q)]$ also changes sign. The game is thus symmetric. If we exchange the roles of the players, we map saddle points to saddle points and negate the value. Hence, $\mathrm{val}(F_\Om)=-\mathrm{val}(F_\Om)$, so $\mathrm{val}(F_\Om)=0$. Let $(\pi^\star,q^\star)$ be any saddle point. Because the value is zero and $F_\Om(p,p)=0$ for every $p$, antisymmetry implies that $(q^\star,\pi^\star)$ is also a saddle point. The standard interchangeability of saddle-point components then implies that $(\pi^\star,\pi^\star)$ and $(q^\star,q^\star)$ are saddle points. It therefore suffices to establish uniqueness among symmetric saddle points.

\emph{Uniqueness.} By Lemma~\ref{lem:selfplay}, symmetric saddle points are exactly the solutions of $\mathrm{VI}(V_\Om,\DeltaK)$, so it suffices to show that this variational inequality has at most one solution. Suppose $\pi_1,\pi_2\in\DeltaK$ both solve it. We test the inequality for $\pi_1$ at the feasible point $\pi_2$ and test the inequality for $\pi_2$ at the feasible point $\pi_1$. This gives
\begin{align*}
\langle V_\Om(\pi_1),\ \pi_2-\pi_1\rangle\ \ge\ 0,
\qquad
\langle V_\Om(\pi_2),\ \pi_1-\pi_2\rangle\ \ge\ 0 .
\end{align*}
We add the two inequalities and write $d=\pi_1-\pi_2$. We obtain
\begin{align*}
\langle V_\Om(\pi_1)-V_\Om(\pi_2),\ d\rangle\ \le\ 0 .
\end{align*}
On the other hand, Lemma~\ref{lem:selfplay} states that $V_\Om$ is $\kappa$-strongly monotone, so $\langle V_\Om(\pi_1)-V_\Om(\pi_2),d\rangle\ge\kappa\|d\|^2$. Together, these two bounds imply
\begin{align*}
\kappa\,\|\pi_1-\pi_2\|^2\ \le\ 0 ,
\end{align*}
and since $\kappa>0$ this forces $\|\pi_1-\pi_2\|=0$, i.e.\ $\pi_1=\pi_2$. The symmetric saddle point is therefore unique. For any saddle point $(\pi^\star,q^\star)$, the preceding argument shows that both $(\pi^\star,\pi^\star)$ and $(q^\star,q^\star)$ are symmetric saddle points. Uniqueness gives $\pi^\star=q^\star$, so every saddle point is symmetric and $x\mapsto\pi^\star(x)$ is single-valued.
\end{proof}

\subsection{Gap functions}
\label{subsec:gap-function}

In this subsection, we specialize gap functions to the self-play variational inequality and establish the properties that make the resulting gap a valid differentiable loss for learning the contextual equilibrium.
Gap functions convert a variational inequality into scalar optimization by assigning each candidate solution a nonnegative measure of its violation. The classical construction appears, up to an equivalent change of sign, in \citet{Auslender.1973}. \citet{Fukushima.1992} developed a differentiable regularized gap function, and \citet{Larsson.1994} placed this construction within a broader class of gap functions for variational inequalities. For a general operator $V$ on a feasible set $C$, the classical gap has the form $\sup_{y\in C}\langle V(z),z-y\rangle$: it is nonnegative because $y=z$ is feasible, and it vanishes exactly when $z$ solves $\mathrm{VI}(V,C)$.

In our game, Lemma~\ref{lem:selfplay} identifies equilibrium with the variational inequality induced by the self-play operator $V_\Om(\pi)=-A\pi+\nabla\Om(\pi)$. The regularized gap in Definition~\ref{def:gap-loss} is equivalently the maximum payoff available to an opponent against $\pi$, relative to self-play. Its inner maximizer is the unique regularized best response
\begin{equation}
T_\Om(\pi;A,x)
:=\argmin_{q\in\DeltaK}\big\{\pi^\top Aq+\Om(q)\big\}.
\label{eq:regularized-best-response}
\end{equation}

We define the regularized gap function as
\begin{align}
G_\Om(\pi;A,x)
&:=\max_{q\in\DeltaK}
\Big\{\langle-A\pi,\pi-q\rangle+\Om(\pi)-\Om(q)\Big\} \notag\\
&=\Om(\pi)-\min_{q\in\DeltaK}\big\{\pi^\top Aq+\Om(q)\big\} \notag\\
&=\Om(\pi)-\pi^\top A T_\Om(\pi;A,x)-\Om\big(T_\Om(\pi;A,x)\big).
\label{eq:regularized-gap-app}
\end{align}
The second equality uses $\pi^\top A\pi=0$, which follows from the skew-symmetry of $A$.

Consequently, we can solve the game by minimizing $G_\Om(\pi;A,x)$ over $\DeltaK$. The minimum value is zero, and the equilibrium $\pi^\star(x)$ is its unique minimizer. Strong convexity makes the inner best response unique, while Danskin's theorem supplies a gradient for outer minimization.

\begin{proposition}[Properties of the gap function]\citep{Fukushima.1992,Larsson.1994}
\label{prop:gap-loss-properties}
Fix a context $x$ and a payoff $A$, and let $\pi^\star=\pi^\star(x)$ be the equilibrium. Then, for every $\pi\in\DeltaK$:
\begin{enumerate}
    \item \emph{(Nonnegativity and consistency)} The gap is nonnegative and vanishes exactly at the equilibrium,
    \begin{align}
        G_\Om(\pi;A,x)\ \ge\ 0,
        \qquad
        G_\Om(\pi;A,x)=0\iff\pi=\pi^\star .
        \label{eq:gap-nonneg}
    \end{align}
    \item \emph{(Equilibrium error control)} The squared equilibrium error is bounded by the gap:
    \begin{align}
        \big\|\pi-\pi^\star\big\|^2\ \le\ \frac{2}{\kappa}\,G_\Om(\pi;A,x).
        \label{eq:gap-to-equilibrium}
    \end{align}
    \item \emph{(Differentiability)} The gap $G_\Om(\cdot;A,x)$ is differentiable, with
    \begin{align}
        \nabla_\pi G_\Om(\pi;A,x)
        =-A\,T_\Om(\pi;A,x)+\nabla\Om(\pi),
        \label{eq:gap-gradient}
    \end{align}
    which involves no derivative of $T_\Om$ with respect to $\pi$.
\end{enumerate}
\end{proposition}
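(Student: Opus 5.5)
The plan is to write the gap as a difference between the objective $\Om(\pi)$ and a strongly convex minimization over the opponent's strategy, $\phi(\pi):=\min_{q\in\DeltaK}\{\pi^\top Aq+\Om(q)\}$, using the representation in Eq.~(\ref{eq:regularized-gap-app}). All three parts then follow from properties of this inner problem together with the self-play variational inequality of Lemma~\ref{lem:selfplay}.

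\emph{Nonnegativity and consistency.} Choosing $q=\pi$ in the maximization defining $G_\Om$ gives the value $\langle -A\pi,0\rangle+\Om(\pi)-\Om(\pi)=0$, so $G_\Om\ge0$. If $G_\Om(\pi)=0$, then $\pi$ attains the inner maximum, and hence $\pi$ minimizes $q\mapsto \pi^\top Aq+\Om(q)$ over $\DeltaK$. The first-order condition for this minimization is $\langle A^\top\pi+\nabla\Om(\pi),q-\pi\rangle\ge0$ for all $q$. Since $A^\top=-A$, this is exactly the variational inequality (\ref{eq:equilibrium-vi}), and Proposition~\ref{prop:exist} gives $\pi=\pi^\star$. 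Conversely, at $\pi^\star$ the variational inequality says that $\pi^\star$ is the regularized best response to itself, so the gap is zero.

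\emph{Error control.} Set $h(q):=\pi^\top Aq+\Om(q)$, which is $\kappa$-strongly convex. Then
\begin{align*}
G_\Om(\pi)=h(\pi)-h(T_\Om(\pi))\ge\frac{\kappa}{2}\|\pi-T_\Om(\pi)\|^2 .
\end{align*}
This inequality alone yields distance to the best response, not to $\pi^\star$. I would therefore instead lower bound $G_\Om(\pi)$ by the particular choice $q=\pi^\star$:
\begin{align*}
G_\Om(\pi)\ge\langle -A\pi,\pi-\pi^\star\rangle+\Om(\pi)-\Om(\pi^\star).
\end{align*}
Skew-symmetry gives $\langle -A\pi,\pi-\pi^\star\rangle=\pi^\top A\pi^\star=\langle -A\pi^\star,\pi-\pi^\star\rangle$. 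By strong convexity (\ref{eq:strong-convexity}) at $\pi^\star$,
\begin{align*}
\Om(\pi)-\Om(\pi^\star)\ge\langle\nabla\Om(\pi^\star),\pi-\pi^\star\rangle+\frac{\kappa}{2}\|\pi-\pi^\star\|^2.
\end{align*}
Adding these gives $G_\Om(\pi)\ge\langle V_\Om(\pi^\star),\pi-\pi^\star\rangle+\frac{\kappa}{2}\|\pi-\pi^\star\|^2$. The first term is nonnegative by (\ref{eq:equilibrium-vi}), which yields (\ref{eq:gap-to-equilibrium}). This rewriting via skew-symmetry is the key step: it moves $A$ from acting on $\pi$ to acting on $\pi^\star$.

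\emph{Differentiability.} $\phi$ is a pointwise minimum of functions that are affine in $\pi$, indexed by the compact set $\DeltaK$. By strong convexity the minimizer $T_\Om(\pi)$ is unique, and it is continuous in $\pi$ by a standard argmin-continuity (Berge) argument. Danskin's theorem then gives that $\phi$ is differentiable with $\nabla\phi(\pi)=A\,T_\Om(\pi)$. Since $\Om$ is $C^1$ on a neighborhood of $\DeltaK$, we get $\nabla G_\Om=\nabla\Om(\pi)-A\,T_\Om(\pi)$. The main technical care is that Danskin's theorem is applied on a neighborhood of $\DeltaK$ in $\R^K$: the affine dependence on $\pi$ extends to all of $\R^K$, so $\pi$ may range over an open set while $q$ stays in $\DeltaK$.
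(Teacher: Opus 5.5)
Your proposal is correct and follows essentially the same route as the paper. Nonnegativity comes from taking $q=\pi$, and consistency from matching the first-order condition of the inner problem to the variational inequality of Lemma~\ref{lem:selfplay}; error control comes from plugging in $q=\pi^\star$ and combining skew-symmetry, strong convexity at $\pi^\star$ and the equilibrium VI; differentiability comes from Danskin's theorem, which you apply to the equivalent min-form $\Om(\pi)-\phi(\pi)$ instead of the paper's max-form. One cosmetic slip: the middle expression in your skew-symmetry identity should be $-\pi^\top A\pi^\star$ (equivalently $(\pi^\star)^\top A\pi$), but the two outer expressions you equate are indeed equal, so the argument is unaffected.
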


\begin{proof}
\emph{Nonnegativity and consistency.} For fixed $\pi$, define $h_\pi(q):=\pi^\top Aq+\Om(q)$. Since $\pi^\top A\pi=0$, Eq.~(\ref{eq:regularized-gap-app}) gives
\begin{align*}
G_\Om(\pi;A,x)=h_\pi(\pi)-\min_{q\in\DeltaK}h_\pi(q)\ge0.
\end{align*}
Equality holds if and only if $\pi$ minimizes $h_\pi$ over $\DeltaK$. The first-order condition for this convex problem is
\begin{align*}
\big\langle-A\pi+\nabla\Om(\pi),q-\pi\big\rangle\ge0
\qquad\text{for every }q\in\DeltaK.
\end{align*}
Lemma~\ref{lem:selfplay} shows that this condition holds if and only if $\pi$ is the unique equilibrium $\pi^\star$.

\emph{Equilibrium error control.} Let $d=\pi-\pi^\star$. We evaluate the maximum in Eq.~(\ref{eq:regularized-gap-app}) at $q=\pi^\star$ and obtain
\begin{align*}
G_\Om(\pi;A,x)
\ge \langle-A\pi,d\rangle+\Om(\pi)-\Om(\pi^\star).
\end{align*}
Strong convexity at $\pi^\star$ gives
\begin{align*}
\Om(\pi)-\Om(\pi^\star)
\ge\langle\nabla\Om(\pi^\star),d\rangle
+\frac{\kappa}{2}\|d\|^2.
\end{align*}
Skew-symmetry gives $\langle-A(\pi-\pi^\star),d\rangle=0$, while the equilibrium variational inequality gives $\langle-A\pi^\star+\nabla\Om(\pi^\star),d\rangle\ge0$. Therefore,
\begin{align*}
G_\Om(\pi;A,x)\ge\frac{\kappa}{2}\|\pi-\pi^\star\|^2,
\end{align*}
which proves Eq.~(\ref{eq:gap-to-equilibrium}).

\emph{Differentiability.} The maximization objective in Eq.~(\ref{eq:regularized-gap-app}) is strongly concave in $q$, so it has the unique maximizer $T_\Om(\pi;A,x)$. For fixed $q$, skew-symmetry gives
\begin{align*}
\nabla_\pi\Big[\langle-A\pi,\pi-q\rangle+\Om(\pi)-\Om(q)\Big]
=-Aq+\nabla\Om(\pi).
\end{align*}
Danskin's theorem evaluates this derivative at the unique maximizer and gives Eq.~(\ref{eq:gap-gradient}).
\end{proof}

\subsection{Equilibrium stability with respect to the payoff}
\label{app:payoff-stability}

In this subsection, we apply variational-inequality sensitivity analysis to bound changes in the contextual equilibrium by perturbations of the payoff matrix, thereby establishing the stability result used in the main analysis.

We denote the dual norm $\|\cdot\|_\star$ as 
\begin{align*}
    \|v\|_\star:=\sup_{\|u\|\le1}\langle v,u\rangle,
    \qquad
    \|B\|_\star:=\sup_{\|u\|\le1}\|Bu\|_\star.
\end{align*}

\begin{proof}[Proof of Proposition~\ref{prop:stability}]
We adapt the standard two-VI sensitivity argument~\citep[Theorem~2F.9]{Dontchev.2009}. We retain the perturbation term induced by the payoff matrix. We fix $x,x'\in\X$ and write
\begin{align*}
\pi=\pi^\star(x),
\qquad
\pi'=\pi^\star(x'),
\qquad
d=\pi-\pi'.
\end{align*}
For brevity, we write $A=A(x)$ and $A'=A(x')$. By Lemma~\ref{lem:selfplay}, we test the two equilibrium variational inequalities at $\pi'$ and $\pi$, respectively, and obtain
\begin{align}
\langle-A\pi+\nabla\Om(\pi),\pi'-\pi\rangle&\ge0,
\label{eq:stability-vi-a}\\
\langle-A'\pi'+\nabla\Om(\pi'),\pi-\pi'\rangle&\ge0.
\label{eq:stability-vi-aprime}
\end{align}
We rewrite both inequalities in terms of $d$ and add them to obtain
\begin{align}
0
&\le \langle A\pi-A'\pi'-\{\nabla\Om(\pi)-\nabla\Om(\pi')\},d\rangle \notag\\
&=\langle(A-A')\pi',d\rangle
  +\underbrace{\langle Ad,d\rangle}_{=0}
  -\langle\nabla\Om(\pi)-\nabla\Om(\pi'),d\rangle .
\label{eq:stability-combine}
\end{align}
The skew-symmetry of $A$ gives $\langle Ad,d\rangle=0$. Because $\Om$ is differentiable and $\kappa$-strongly convex, its gradient is $\kappa$-strongly monotone, so
\begin{align*}
\langle\nabla\Om(\pi)-\nabla\Om(\pi'),d\rangle\ge\kappa\|d\|^2.
\end{align*}
We combine this inequality with Eq.~(\ref{eq:stability-combine}) and apply dual H\"older to obtain
\begin{align*}
\kappa\|d\|^2
\le \langle(A-A')\pi',d\rangle
\le \|(A-A')\pi'\|_*\,\|d\|.
\end{align*}
If $d=0$, the claim is immediate. Otherwise, we divide by $\kappa\|d\|$ and obtain
\begin{align*}
\|\pi^\star(x)-\pi^\star(x')\|
\le\kappa^{-1}\|\{A(x)-A(x')\}\pi^\star(x')\|_*.
\end{align*}
Finally, $\|\pi'\|\le1$ by the norm normalization in the main text, and therefore
\begin{align*}
\|(A-A')\pi'\|_*
\le\|A-A'\|_\star\|\pi'\|
\le\|A-A'\|_\star.
\end{align*}
This proves both inequalities in Proposition~\ref{prop:stability}.
\end{proof}

\subsubsection{Instability of the unregularized equilibrium/maximal lottery}
\label{app:unregularized-instability}

Without regularization, a single-valued equilibrium selection is not uniformly continuous in the payoff matrix near games with multiple equilibria \citep{Kohlberg.1986,Facchinei.2003}. For $\epsilon>0$, we consider the two skew-symmetric games
\begin{align*}
A_\epsilon^+=\epsilon
\begin{pmatrix}0&1\\-1&0\end{pmatrix},
\qquad
A_\epsilon^-=-A_\epsilon^+.
\end{align*}
The unique maximal lottery of $A_\epsilon^+$ is $e_1$, while that of $A_\epsilon^-$ is $e_2$. Hence
\begin{align*}
\|A_\epsilon^+-A_\epsilon^-\|_{\mathrm{op}}=2\epsilon\longrightarrow0,
\qquad
\|e_1-e_2\|_2=\sqrt{2}.
\end{align*}
Thus, a vanishing payoff perturbation can produce a constant change in the selected equilibrium. The discontinuity occurs at the limiting tie game $A=0$, whose equilibrium set is the whole simplex. We need additional regularity or strong convexity to obtain a uniformly stable single-valued target.

\newpage
\section{Background on efficient influence functions and orthogonal statistical learning}
\label{sec:ortho-stats}
Our work draws on efficient semiparametric inference and orthogonal statistical learning. In Section~\ref{subsec:efficient-influence-function}, we define the efficient influence function, use the von Mises expansion to isolate plug-in bias, and specialize these concepts to the average contextual payoff. In Section~\ref{subsec:pseudo-outcome}, we use the efficient influence function to construct a one-step estimator and then convert its uncentered form into a regression target for the contextual payoff. In Section~\ref{subsec:orthogonal-statistical-learning-background}, we extend this bias correction from a scalar functional to our context-to-equilibrium mapping and explain the resulting robustness guarantee. We refer the reader to \citet{Kennedy.2024review} for a detailed treatment of influence-function-based estimation and to \citet{Foster.2023} for the general theory of orthogonal statistical learning.

\subsection{Efficient influence function}
\label{subsec:efficient-influence-function}

Let $O\sim P_0$, where the true distribution $P_0$ belongs to a statistical model $\mathcal P$, and let $\psi:\mathcal P\to\mathbb R$ be a scalar target functional. Consider any regular one-dimensional parametric submodel $\{P_t:|t|<\epsilon\}\subseteq\mathcal P$ through $P_0$, with $P_{t=0}=P_0$ and score $s(O)=\left.\partial_t\log p_t(O)\right|_{t=0}$. The efficient influence function (EIF) $\varphi(O;P_0)$ is the mean-zero, finite-variance element of the tangent space that represents every pathwise derivative:
\begin{equation}
    \left.\frac{\mathrm d}{\mathrm dt}\psi(P_t)\right|_{t=0}
    =\E_0\big[\varphi(O;P_0)s(O)\big],
    \qquad \E_0\big[\varphi(O;P_0)\big]=0.
    \label{eq:eif-pathwise-definition}
\end{equation}
In a nonparametric model, this representation determines the EIF uniquely. In a semiparametric model, the EIF is the influence function in the tangent space with the smallest variance, and its variance gives the semiparametric efficiency bound~\citep{Bickel.1998,vanderVaart.1998,Kennedy.2024review}.

A smooth functional admits the von Mises expansion
\begin{equation}
    \psi(\overline P)-\psi(P)
    =\int \varphi(o;\overline P)\,\mathrm d(\overline P-P)(o)
    +R_2(\overline P,P),
    \label{eq:von-mises-expansion}
\end{equation}
where $R_2(\overline P,P)$ is a second-order remainder composed of products or squares of differences between components of $\overline P$ and $P$. The EIF gives the linear response of $\psi$ to a local perturbation of the data distribution, while $R_2$ collects the nonlinear residual. We set $\overline P=\widehat P$ and $P=P_0$ in Eq.~(\ref{eq:von-mises-expansion}). Since $\int\varphi(o;\widehat P)\,\mathrm d\widehat P(o)=0$, the plug-in error satisfies
\begin{equation}
    \psi(\widehat P)-\psi(P_0)
    =-\E_0\big[\varphi(O;\widehat P)\big]
    +R_2(\widehat P,P_0).
    \label{eq:plugin-von-mises-bias}
\end{equation}
Thus, the first term in Eq.~(\ref{eq:plugin-von-mises-bias}) is the first-order plug-in bias, conditional on the fitted distribution $\widehat P$. A direct plug-in estimator retains this term, so its bias can have the same first-order magnitude as the nuisance-estimation error. An EIF-based correction estimates and cancels this linear term, leaving only the second-order remainder~\citep{Kennedy.2024review}.

In our setting, we derive the EIF for the average payoff because point evaluation of a conditional regression generally need not be pathwise differentiable when $X$ is continuous~\citep{Kennedy.2024review}. For a pair $(j,k)$, this scalar functional is
\begin{equation}
    \psi_{jk}(P)
    :=\E_P\!\left[f_{jk}\!\left(\{\mu^m_{P,jk}(X)\}_{m\in\Qcal_{jk}}\right)\right],
    \label{eq:average-payoff-functional}
\end{equation}
where $\eta_P=(\mucol_P,\ecol_P)$ collects the outcome regressions and menu-selection propensities induced by $P$. Under MAR, positivity, and the regularity conditions of Lemma~\ref{lem:payoff-eif}, its EIF is
\begin{equation}
    \varphi_{jk}(O;P)
    =\Gamma_{jk}(O;\eta_P)-\psi_{jk}(P),
    \label{eq:payoff-eif-background}
\end{equation}
where $\Gamma_{jk}$ is defined as
\begin{align}
    \Gamma_{jk}(O;\eta)
    = f_{jk}\!\left(\{\mu^m_{jk}(X)\}_{m\in\Qcal_{jk}}\right)
    +\sum_{m\in\Qcal_{jk}} 
    \frac{\partial f_{jk}}{\partial u_m}\!\left(\{\mu^{m'}_{jk}(X)\}_{m'\in\Qcal_{jk}}\right)
    \frac{S^m}{e^m(X)}\big\{Z^m_{jk}-\mu^m_{jk}(X)\big\}
    \label{eq:payoff-pseudo-outcome}
\end{align}

\subsection{One-step estimator and pseudo-outcome}
\label{subsec:pseudo-outcome}

Suppose that we estimate $P_0$ by $\widehat P$ on an auxiliary sample or through cross-fitting, and let $\mathbb P_n$ denote the empirical measure on the evaluation observations. The generic one-step estimator is
\begin{equation}
    \widehat\psi^{\mathrm{1S}}
    :=\psi(\widehat P)+\mathbb P_n\big[\varphi(O;\widehat P)\big].
    \label{eq:one-step-general}
\end{equation}
Equation~(\ref{eq:plugin-von-mises-bias}) shows why this correction works. The expectation of the added EIF term estimates the negative of the first-order plug-in bias, so the remaining bias is $R_2(\widehat P,P_0)$.

For the average payoff, Eqs.~(\ref{eq:payoff-eif-background}) and~(\ref{eq:one-step-general}) reduce the one-step estimator to
\begin{equation}
    \widehat\psi^{\mathrm{DB}}_{jk}
    =\mathbb P_n\big[\Gamma_{jk}(O;\widehat\eta)\big].
    \label{eq:payoff-one-step}
\end{equation}
The plug-in payoff inside $\Gamma_{jk}$ and the centered EIF correction combine into a single sample average. This construction explains why we call $\Gamma_{jk}$ an EIF-based pseudo-outcome: it is the uncentered EIF, $\Gamma_{jk}(O;\eta_P)=\psi_{jk}(P)+\varphi_{jk}(O;P)$.

Our target is the contextual payoff $A_{jk}(x)$ rather than only its population average. The pseudo-outcome makes this conditional target accessible because
\begin{equation}
    \E_0\big[\Gamma_{jk}(O;\eta_0)\mid X=x\big]=A_{jk}(x).
    \label{eq:pseudo-outcome-conditional-payoff}
\end{equation}
For a generic nuisance $\eta$, the same conditional expectation defines the nuisance-indexed surface $\ADB_{jk}(x;\eta)$ in Eq.~(\ref{eq:debiased-payoff-surface}). Theorem~\ref{thm:DPME} shows that this surface equals $A_{jk}(x)$ at $\eta_0$ and has zero first derivative with respect to the nuisance. Equation~(\ref{eq:reminder}) further shows that its conditional bias depends only on $\|\Delta\mu\|_2^2$ and products of $\Delta e^m$ and $\Delta\mu^m$. We therefore regress the cross-fitted values $\Gamma_{jk}(O;\widehat\eta)$ on $X$ to estimate the full contextual surface $\AhatDB_{jk}(x)$, which we call the \textbf{debiased payoff matrix estimator (DPME)}.

\subsection{Orthogonal statistical learning}
\label{subsec:orthogonal-statistical-learning-background}

Classical one-step estimation typically targets a finite-dimensional functional such as $\psi_{jk}$. Our target $x\mapsto\pi^\star(x)$ is a function, so we use orthogonal statistical learning (OSL) to estimate it through a two-stage risk minimization problem~\citep{Foster.2023}. Let $\theta$ index a target function and let $\eta$ denote nuisance functions estimated in the first stage. For a population risk $\mathcal R(\theta;\eta)$, Neyman-orthogonality requires the target gradient to be locally insensitive to nuisance perturbations:
\begin{equation}
    \left.D_\eta D_\theta\mathcal R(\theta;\eta)
    [h_\theta,h_\eta]\right|_{\eta=\eta_0}=0.
    \label{eq:orthogonal-learning-background}
\end{equation}
When Eq.~(\ref{eq:orthogonal-learning-background}) holds for every $\theta$, we call the risk universally orthogonal. This property removes the first-order effect of nuisance estimation from the learning objective rather than only from a scalar estimator.

\textbf{Neyman-orthogonality.} Neyman-orthogonality describes the derivative of a target moment or loss gradient with respect to nuisance functions. An EIF-based correction often constructs an orthogonal moment because it cancels the first-order nuisance derivative identified by the von Mises expansion. Therefore, the EIF can often be used to construct an orthogonal loss function for OSL.

We apply the same principle twice. First, the EIF-based pseudo-outcome yields the orthogonal payoff surface $\ADB(x;\eta)$. Second, we insert this surface into the regularized gap risk
\begin{equation}
    \Lcal(\theta;\eta)
    =\E_X\!\left[G_\Om\big(\pi_\theta(X);\ADB(X;\eta),X\big)\right].
    \label{eq:orthogonal-gap-background}
\end{equation}
The derivative $D_\eta\ADB(x;\eta)[h_\eta]$ vanishes at $\eta_0$ by Theorem~\ref{thm:DPME}. The chain rule therefore makes the mixed derivative of Eq.~(\ref{eq:orthogonal-gap-background}) vanish for every $\theta$, as formalized in Theorem~\ref{thm:gap-loss-orthogonal}. In practice, we cross-fit $\widehat\eta$, construct $\Gamma_{jk}(O;\widehat\eta)$, regress these pseudo-outcomes on $X$ to obtain $\AhatDB$, and minimize the empirical gap loss to learn $\pi_{\widehat\theta}$.

Orthogonality changes the leading nuisance contribution from first order to second order. In our payoff learner, Eq.~(\ref{eq:reminder}) gives the rate $O_p(r_\mu^2+r_\mu r_e)$ under the stated norm conditions, instead of the $O_p(r_\mu)$ plug-in bias. For a linear aggregator, the term $r_\mu^2$ vanishes, and the remaining product term gives the usual double-robust property. This higher-order dependence permits slower and more flexible nuisance estimation while retaining the oracle learning rate when the conditions of Corollary~\ref{cor:quasi-oracle} hold. This creates favorable empirical generalization properties, and allows the use of general machine learning models.

\newpage
\section{Benefits of equilibrium-based evaluation: a social-choice perspective}
\label{sec:benefits-vnw}

\subsection{How do cycles arise?}
\label{example:arising-cycles}
Here we give a simple example of how the cyclic preference arises during the aggregation of voters. Consider three equally weighted raters and three alternatives $a,b,c$, with preferences $a\succ_1 b\succ_1 c$, $b\succ_2 c\succ_2 a$, and $c\succ_3 a\succ_3 b$. Each rater is individually transitive. In the order $(a,b,c)$, their majority-margin matrices are
\begin{align*}
A^{(1)}=\begin{pmatrix}0&1&1\\-1&0&1\\-1&-1&0\end{pmatrix},\quad
A^{(2)}=\begin{pmatrix}0&-1&-1\\1&0&1\\1&-1&0\end{pmatrix},\quad
A^{(3)}=\begin{pmatrix}0&1&-1\\-1&0&-1\\1&1&0\end{pmatrix}.
\end{align*}
Aggregating the three raters gives
\begin{align*}
A=\frac{1}{3}\sum_{h=1}^3 A^{(h)}
=\frac{1}{3}\begin{pmatrix}0&1&-1\\-1&0&1\\1&-1&0\end{pmatrix},
\end{align*}
so that $a\succ b$, $b\succ c$, and $c\succ a$. Thus, cycles can arise from aggregation even when no individual rater is intransitive.

\subsection{Why can an equilibrium policy better align with human preferences?}
In the profile above, each alternative is ranked first, second, and third by exactly one rater. A policy that respects this diversity should therefore treat the alternatives symmetrically. The equilibrium does so: for this problem, it is the uniform policy $\pi^*=(1/3,1/3,1/3)$. In contrast, any deterministic selection (i.e., a pure policy) would arbitrarily favor one of the three equally represented groups.

\subsection{Social choice axioms}
The maximal-lottery rule satisfies several desirable social-choice axioms~\citep{Brandl.2016}:
\begin{itemize}
    \item \emph{Condorcet consistency:} If one alternative defeats every other alternative by a positive margin, the unique maximal lottery places all mass on it.
    \item \emph{Population consistency:} If a lottery is maximal for two raters separately, it remains maximal after their profiles are combined.
    \item \emph{Composition consistency:} A component of similar alternatives can be evaluated internally without changing its relationship to alternatives outside the component. Thus, adding variants or copies of one agent does not distort the probabilities assigned to unrelated agents.
    \item \emph{Cloning consistency:} Introducing a clone of an alternative does not change the probabilities assigned to alternatives outside the cloned component.
\end{itemize}
For agent evaluation, these axioms provide concrete operational guarantees. An agent that defeats every competitor is selected with probability one; combining feedback from user groups that agree on an evaluator preserves that evaluator; and adding duplicate or closely related variants of one agent does not change the probabilities assigned to unrelated agents. Thus, the evaluation remains consistent when feedback is pooled and when the candidate set contains redundant agent variants.

Adding regularization may change which axioms can be satisfied. For the KL-regularized equilibrium, \citet{Korkmaz.2026} establishes independence of irrelevant alternatives, population consistency, a regularized form of Pareto optimality, and approximate strategy-proofness. These guarantees motivate regularization in preference-based AI systems, but they depend on the choice of regularizer and do not generally hold for arbitrary strongly convex regularizers.

\newpage
\newpage
\section{instantiations}
\label{app:instantiations}

\subsection{A summary of different comparative feedback}
\label{subsec:feedback-modalities}
We summarize the different relative feedback modalities and how to convert them into ordered partitions in Table~\ref{tab:modalities}. We further provide an example of the conversion on a menu of five items.

\begin{table}[t]
\centering
\caption{Turning comparative feedback modalities into ordered partitions, illustrated on the menu $m=\{1,\dots,5\}$. Note the zero-verdict may have different interpretations for different modalities.}
\label{tab:modalities}
\small
\setlength{\tabcolsep}{5pt}
\begin{tabularx}{\textwidth}{@{}lll>{\raggedright\arraybackslash}X@{}}
\toprule
Modality & Raw response & Ordered partition $Y_m$ & Zero verdicts\\
\midrule
Pairwise duel      & $1\succ3$                          & $(\{1\},\{3\})$                         & a declared tie \\
Full ranking       & $1\succ2\succ3\succ4\succ5$        & $(\{1\},\{2\},\{3\},\{4\},\{5\})$       & none arise \\
Weak ranking       & $\{1,2\}\succ\{3\}\succ\{4,5\}$    & $(\{1,2\},\{3\},\{4,5\})$               & revealed indifference \\
Top-$r$ ($r=2$)    & $T=\{1,2\}$                        & $(\{1,2\},\{3,4,5\})$                   & censoring: the order inside $T$ and inside its complement is never asked \\
Best--worst        & best $1$, worst $5$                & $(\{1\},\{2,3,4\},\{5\})$               & censoring within the middle block \\
Cardinal scores    & $s=(5,5,3,1,1)$                    & $(\{1,2\},\{3\},\{4,5\})$               & equal scores \\
\bottomrule
\end{tabularx}
\end{table}

\textbf{Example.} Take the menu $m=\{1,\dots,5\}$ and the reported weak ranking $Y_m=(\{1,2\}\succ\{3\}\succ\{4,5\})$, so that $b_{Y_m}(1)=b_{Y_m}(2)=1$, $b_{Y_m}(3)=2$ and $b_{Y_m}(4)=b_{Y_m}(5)=3$. Applying \eqref{eq:verdict} to each of the $\binom{5}{2}=10$ pairs gives
\begin{align*}
Z^m=\begin{pmatrix}
\phantom{-}0 & \phantom{-}0 & \phantom{-}1 & \phantom{-}1 & \phantom{-}1\\
\phantom{-}0 & \phantom{-}0 & \phantom{-}1 & \phantom{-}1 & \phantom{-}1\\
-1 & -1 & \phantom{-}0 & \phantom{-}1 & \phantom{-}1\\
-1 & -1 & -1 & \phantom{-}0 & \phantom{-}0\\
-1 & -1 & -1 & \phantom{-}0 & \phantom{-}0
\end{pmatrix}.
\end{align*}

\subsection{Instantiations of the aggregator function}
\label{subsec:aggregator-instantiations}
We provide three instantiations of the aggregator $f_{jk}$ in Definition~\ref{def:payoff}. In addition to smoothness and oddness, we typically require idempotence, $f_{jk}(u,\dots,u)=u$. Idempotence ensures that $A_{jk}(x)=u$ whenever every contributing menu has the same payoff $g^m_{jk}(x)=u$. The choice of aggregator determines how menu-specific preferences contribute to the contextual payoff and can therefore change the equilibrium.

\textbf{When aggregation is useful.} An aggregator is useful when the same pair appears in multiple overlapping menus and those menus provide distinct evidence about the pairwise preference. This occurs, for example, when evaluators report full rankings, weak rankings, or top-$r$ choices over menus of different sizes. Linear weights can represent a prespecified deployment distribution over menus, balance menu sizes, or downweight menus known to produce less reliable or more heavily censored comparisons. A nonlinear aggregator can make the influence of a menu-specific margin depend on its magnitude. The cubic-link construction below, for example, gives larger absolute margins more influence on the transformed scale. In all these cases, the aggregator forms part of the estimand: different choices can produce different payoff matrices and equilibria when preferences depend on menu composition. Therefore, researchers should specify the weights independently of the observed outcomes or treat estimated weights as additional nuisance functions. Every menu assigned positive weight must also satisfy the menu-specific coverage and positivity assumptions.

\textbf{Notes on pairwise feedback.} When every menu contains exactly two agents, each covered pair $(j,k)$ has the single menu $m=\{j,k\}$, so $\Qcal_{jk}=\{\{j,k\}\}$. No cross-menu aggregation is needed, and we use the identity map
\begin{equation}
    f_{jk}(u_{\{j,k\}})=u_{\{j,k\}},
    \qquad
    A_{jk}(x)=g^{\{j,k\}}_{jk}(x).
    \label{eq:pairwise-identity-aggregator}
\end{equation}

\textbf{Linear aggregator.} Let $w^m_{jk}\geq0$ be fixed menu weights satisfying $\sum_{m\in\Qcal_{jk}}w^m_{jk}=1$ and $w^m_{kj}=w^m_{jk}$. We define
\begin{equation}
    f_{jk}\big((u_m)_{m\in\Qcal_{jk}}\big)
    :=\sum_{m\in\Qcal_{jk}}w^m_{jk}u_m.
    \label{eq:linear-menu-aggregator}
\end{equation}
This aggregator is smooth, odd, idempotent, and valued in $[-1,1]$. Uniform weights recover the mean aggregator in Section~\ref{sec:problem}, while nonuniform weights enable practitioners to customize the importance of each menu.

\textbf{Nonlinear aggregator.} We can apply a smooth odd link before averaging. For $\alpha\geq0$, define the strictly increasing bijection $\phi_\alpha:[-1,1]\to[-1,1]$ by
\begin{equation}
    \phi_\alpha(u):=\frac{u+\alpha u^3}{1+\alpha},
\end{equation}
and use the link-transformed weighted aggregator
\begin{equation}
    f_{jk}^{(\alpha)}\big((u_m)_{m\in\Qcal_{jk}}\big)
    :=\phi_\alpha^{-1}\!\left(
        \sum_{m\in\Qcal_{jk}}w^m_{jk}\phi_\alpha(u_m)
    \right).
    \label{eq:nonlinear-menu-aggregator}
\end{equation}
The oddness of $\phi_\alpha$ and $\phi_\alpha^{-1}$ makes $f_{jk}^{(\alpha)}$ odd. The normalized weights make it idempotent. Moreover, $\phi_\alpha'(u)=(1+3\alpha u^2)/(1+\alpha)>0$, so its inverse is smooth and the weighted average remains in its range. The choice $\alpha=0$ recovers the linear aggregator in Eq.~(\ref{eq:linear-menu-aggregator}). When $\alpha>0$ and at least two menus receive positive weight, the construction generally gives a nonlinear aggregation rule. Unlike the linear case, this nonlinear aggregator can have nonzero curvature, so the quadratic Taylor remainder in Eq.~(\ref{eq:reminder}) need not vanish.

\subsection{Instantiations of the cost function}
\label{subsec:cost-instantiations}

In the main section, we use the linear expected cost $\Phi(\pi)=c^\top\pi$, where $c\in\mathbb R_+^K$ contains the deployment costs. We next give a nonlinear soft-budget extension and an exact hard-budget formulation.

\textbf{Convex cost disutility.} Let $\varphi:\mathbb R_+\to\mathbb R$ be a continuously differentiable, convex, and nondecreasing disutility of expected spend. We define
\begin{equation}
    \Phi(\pi):=\varphi(c^\top\pi),
    \qquad
    \Om(\pi):=\Om_0(\pi)+\Phi(\pi),
    \label{eq:convex-cost-disutility}
\end{equation}
where $\Om_0$ is the differentiable, $\kappa$-strongly convex base regularizer. Convexity of $\Phi$ preserves the strong-convexity modulus of $\Om_0$. A smooth family that interpolates between linear pricing and a soft budget is
\begin{equation}
    \varphi(s)
    :=\lambda_0s+\bar\lambda\gamma
    \log\!\left(1+\exp\!\left(\frac{s-B}{\gamma}\right)\right),
    \qquad
    \varphi'(s)=\lambda_0+\bar\lambda\,
    \sigma\!\left(\frac{s-B}{\gamma}\right),
    \label{eq:soft-budget-disutility}
\end{equation}
where $\lambda_0,\bar\lambda\geq0$, $B$ is the soft budget, $\gamma>0$ controls the transition width, and $\sigma(t)=(1+e^{-t})^{-1}$. The marginal price therefore increases smoothly from $\lambda_0$ to $\lambda_0+\bar\lambda$ as expected spend crosses $B$. The choice $\bar\lambda=0$ recovers linear pricing, while the limit $\gamma\downarrow0$ yields the hinge disutility $\lambda_0s+\bar\lambda(s-B)_+$.

\textbf{Hard cost budget.} Let $C\geq\min_{j\in\M}c_j$ and define the nonempty compact convex strategy set
\begin{equation}
    \Delta_K(C):=\{\pi\in\DeltaK:c^\top\pi\leq C\}.
    \label{eq:hard-budget-simplex}
\end{equation}
We impose the budget on both players and define the constrained contextual equilibrium as
\begin{equation}
    \pi_C^\star(x)
    :=\argmax_{\pi\in\Delta_K(C)}
      \min_{q\in\Delta_K(C)}
      \left\{\pi^\top A(x)q-
      [\Om_0(\pi)-\Om_0(q)]\right\}.
    \label{eq:hard-budget-equilibrium}
\end{equation}
Using the same feasible set for both players preserves antisymmetry and the zero game value. Compactness and convexity give existence, while strong convexity of $\Om_0$ gives uniqueness by the same argument as in Appendix~\ref{app:game-theory}. Alternatively, we can also choose a large $\bar\lambda$ (e.g. $\bar\lambda=100$) and still use the unconstrained optimization with Eq.~(\ref{eq:convex-cost-disutility}). This yields a approximately-hard budget formulation and usually is easier to solve.

\newpage
\section{Simulated agent-evalaution dataset}
\label{app:synthetic-dgp}

In this section, we construct a simulated agent-evaluation dataset with context-dependent agent performance and heterogeneous voter preferences. The data-generating process (DGP) generates pairwise comparisons, partial rankings, or winner-only feedback and provides the exact contextual payoff matrix for each modality, making precise evaluation possible.

\subsection{Context and vector performance}
\label{subsec:syn-vector-performance}

We draw contexts independently as $X_i\sim\operatorname{Unif}([0,1]^d)$ with $d=5$ and use $K=5$ agents. Agent $j$ has a performance vector $r_j(x)\in\mathbb R^p$ with $p=12$. We partition its coordinates into three groups $\mathcal G_g=\{4g+1,\ldots,4g+4\}$ indexed by $g\in\{0,1,2\}$.
For each coordinate $t\in\mathcal G_g$, we define
\begin{equation}
 r_{jt}(x)=a_s s_j(x)+a_c C_{jt}+a_h\tanh\{h_{jt}(x)\},
 \qquad (a_s,a_c,a_h)=(1,2,0.15).
 \label{eq:syn-vector-performance}
\end{equation}
Here, $s_j(x)$ represents quality shared across criteria, $C_{jt}$ represents group specialization, and $h_{jt}(x)$ represents criterion-specific contextual variation.

We generate $s_j$ and $h_{jt}$ from two independent random Fourier feature fields. For a generic output $a$, we define the unstandardized field as
\begin{equation}
\begin{aligned}
 f_a(x)&=\sqrt{\frac{2}{D}}\sum_{b=1}^{D}
          V_{ba}\cos(\omega_b^\top x+\phi_b),\\
 \omega_b&\sim\mathcal N(0,\ell_f^{-2}I_d),\qquad
 \phi_b\sim\operatorname{Unif}(0,2\pi),\qquad
 V_{ba}\sim\mathcal N(0,1).
\end{aligned}
\label{eq:syn-rff}
\end{equation}
We draw all frequencies, phases, and coefficients independently. Within each field, the outputs share frequencies and phases but use separate coefficient columns. The shared-quality and criterion-specific fields have $K$ and $Kp$ outputs, respectively. We use $D=128$ and $\ell_f=0.5$, then center and scale each output to unit empirical standard deviation on 4,096 independent uniform calibration contexts. The standardized outputs define $s_j(x)$ and $h_{jt}(x)$.

To specify specialization, we draw a uniform permutation $(a_1,\ldots,a_K)$ of $\{0,\ldots,K-1\}$ and set $(o_0,o_1,o_2)=(0,2,4)$. Then
\begin{equation}
 C_{jt}=2-\frac{2}{K-1}\bigl[(a_j-o_g)\bmod K\bigr],
 \qquad t\in\mathcal G_g.
 \label{eq:syn-specialization}
\end{equation}
We hold all latent field draws and specialization profiles fixed within each replication.

\subsection{Voters and population preferences}
\label{subsec:syn-voters}

Voter $v$ assigns utility
\begin{equation}
 u_{vj}(x)=w_v^\top r_j(x),\qquad w_v\in\Delta_p.
 \label{eq:syn-voter-utility}
\end{equation}
The population contains $N_v\in\{1,3,5\}$ voters. We draw a pool of five voters with preferred groups $(g_1,\ldots,g_5)=(0,1,2,0,1)$ and use its first $N_v$ members. For each voter, we independently draw $\theta_v\sim\operatorname{Dirichlet}(\alpha\mathbf 1_4)$ with $\alpha=20$. We construct $\widetilde\theta_v\in\mathbb R^p$ by placing these four coordinates in $\mathcal G_{g_v}$ and setting the remaining coordinates to zero. We then set
\begin{equation}
 w_v=\frac{\epsilon}{p}\mathbf 1_p+(1-\epsilon)\widetilde\theta_v,
 \qquad \epsilon=0.1.
 \label{eq:syn-voter-weights}
\end{equation}
Each member of the finite population has weight $1/N_v$, and each response samples a voter uniformly from this population.

For distinct utilities, we define the latent population margin as
\begin{equation}
 M_{jk}(x)=\frac{1}{N_v}\sum_{v=1}^{N_v}
            \operatorname{sign}\{u_{vj}(x)-u_{vk}(x)\},
 \qquad M_{kj}(x)=-M_{jk}(x),\quad M_{jj}(x)=0.
 \label{eq:syn-latent-margin}
\end{equation}
The continuous field draws make exact utility ties a probability-zero event, and the implementation resolves numerical ties by agent index. With $N_v=1$, $M(x)$ follows the voter's transitive ordering. With $N_v\in\{3,5\}$, majority aggregation can generate cycles. The populations of sizes three and five contain preferred-group counts $(1,1,1)$ and $(2,2,1)$, respectively.

\subsection{Menus, selection, and feedback}
\label{subsec:syn-feedback}

A menu $m$ contains the agents presented to one voter. Pairwise feedback uses all ten menus of size two. Partial-ranking feedback uses the five menus
\begin{equation}
 \Qcal_{\mathrm{rank}}=
 \bigl\{\{1,2,3\},\{1,2,5\},\{1,4,5\},\{2,3,4\},\{3,4,5\}\bigr\}.
 \label{eq:syn-ranking-menus}
\end{equation}
This catalogue covers every pair. Winner feedback uses all $\binom{5}{3}=10$ menus of size three. A pair belongs to one or two ranking menus and exactly three winner menus.

For each replication, we draw $b\sim\mathcal N(0,I_d)$ and set $\bar b=b/\|b\|_2$. We define $\sigma(z)=(1+e^{-z})^{-1}$ and
\begin{equation}
\begin{aligned}
 e^m_0(x)&=e_0(x)
 =e_{\min}+(e_{\max}-e_{\min})
   \sigma\!\left(\eta\sqrt{12}\,\bar b^\top
                          (x-\tfrac12\mathbf 1_d)\right),\\
 S_i^m\mid X_i&\sim\operatorname{Bernoulli}\{e_0(X_i)\},
 \qquad (e_{\min},e_{\max},\eta)=(0.25,0.75,2).
\end{aligned}
\label{eq:syn-selection}
\end{equation}
We draw menu selections independently conditional on the context and independently of voter responses. Thus, the DGP satisfies MAR conditional on $X$. Each $e^m_0(x)$ is a menu inclusion probability, so one context can generate multiple records and the probabilities need not sum to one. The lower bound $e_{\min}=0.25$ bounds inverse propensities by four.

For each selected $(i,m)$, we independently sample $V_{im}\sim\operatorname{Unif}\{1,\ldots,N_v\}$. The selected voter orders the agents in $m$ by $u_{V_{im},j}(X_i)$. Pairwise feedback reports the preferred agent, partial-ranking feedback reports the complete order, and winner feedback reports only the highest-utility agent. All comparisons extracted from one response use the same voter. Records at the same context sample voters independently.

For a pair $j,k\in m$, the encoded margin from pairwise or ranking feedback is
\begin{equation}
 Z^m_{jk}=\operatorname{sign}\{u_{V_{im},j}(X_i)-u_{V_{im},k}(X_i)\}.
 \label{eq:syn-ranking-verdict}
\end{equation}
For winner feedback, we set $W_{im}=\argmax_{a\in m}u_{V_{im},a}(X_i)$ and define
\begin{equation}
 Z^m_{jk}=\mathbf 1\{W_{im}=j\}-\mathbf 1\{W_{im}=k\}.
 \label{eq:syn-winner-verdict}
\end{equation}
Eq.~(\ref{eq:syn-winner-verdict}) assigns zero when another menu member wins because winner-only feedback does not reveal the relative order of $j$ and $k$.

\subsection{Ground truth game and experimental parameters}
\label{subsec:syn-ground-truth}

We let $\Qcal_{jk}=\{m:j,k\in m\}$ and $g^m_{jk}(x)=\E[Z^m_{jk}\mid X=x]$. Because the DGP satisfies MAR conditional on $X$, we have $g^m_{jk}(x)=\mu^m_{0,jk}(x)$. We define the ground-truth game by averaging uniformly over the menus containing each pair:
\begin{equation}
 A_{jk}(x)=\sum_{m\in\Qcal_{jk}}\lambda(m\mid j,k)g^m_{jk}(x),
 \qquad \lambda(m\mid j,k)=\frac{1}{|\Qcal_{jk}|}.
 \label{eq:syn-menu-game}
\end{equation}
For pairwise and ranking feedback, $g^m_{jk}(x)=M_{jk}(x)$ on every containing menu, so $A(x)=M(x)$. For winner feedback, we define
\begin{equation}
 p^m_j(x)=\frac{1}{N_v}\sum_{v=1}^{N_v}
   \mathbf 1\!\left\{j=\argmax_{a\in m}u_{va}(x)\right\},
 \qquad g^m_{jk}(x)=p^m_j(x)-p^m_k(x).
 \label{eq:syn-winner-truth}
\end{equation}

Table~\ref{tab:syn-config} gives the parameters used in Figure~\ref{fig:rq1-results}. Within each replication, we share the performance fields, specialization permutation, selection direction, voter pool, and training and test contexts across the nine modality--population-size settings.
\begin{table}[tbp]
\centering
\caption{Parameters of the simulated dataset.}
\label{tab:syn-config}
\small
\setlength{\tabcolsep}{4pt}
\renewcommand{\arraystretch}{1.12}
\begin{tabularx}{\textwidth}{@{}l l X@{}}
\toprule
Parameter & Value & Meaning \\
\midrule
$K,d,p$ & $5,5,12$ & Agents, context dimensions, and performance criteria. \\
$X$ & $\operatorname{Unif}([0,1]^5)$ & Independent training and test contexts. \\
$D,\ell_f$ & $128,0.5$ & Fourier features and field length scale. \\
$n_{\mathrm{cal}}$ & $4{,}096$ & Uniform contexts for field standardization. \\
$(a_s,a_c,a_h)$ & $(1,2,0.15)$ & Shared quality, specialization, and criterion variation amplitudes. \\
$(o_0,o_1,o_2)$ & $(0,2,4)$ & Offsets of the three specialization profiles. \\
$N_v$ & $1,3,5$ & Equally weighted voters, drawn as prefixes of one pool. \\
$(g_1,\ldots,g_5)$ & $(0,1,2,0,1)$ & Preferred criterion groups in the voter pool. \\
$\epsilon,\alpha$ & $0.1,20$ & Uniform weight allocation and Dirichlet concentration. \\
Menu sizes & $2,3,3$ & Pairwise, partial ranking, and winner feedback. \\
Menu counts & $10,5,10$ & Catalogues for the three respective modalities. \\
$(e_{\min},e_{\max},\eta)$ & $(0.25,0.75,2)$ & Inclusion bounds and selection logit scale. \\
$n,n_{\mathrm{test}}$ & $20{,}000,4{,}096$ & Training and test contexts per experiment. \\
Test replications & $5$ & Seeds 70--74, each covering all nine settings. \\
\bottomrule
\end{tabularx}
\end{table}

\newpage
\section{Details of the real-world datasets}
\label{sec:real-world-datasets}
We describe the data sources, context variables, agent populations, and feedback protocols used in our application experiments. In Section~\ref{subsec:lm-arena-dataset}, we describe the human preference data from LM Arena and the context attributes we use to examine heterogeneity across user prompts and tasks. In Section~\ref{subsec:routerbench-dataset}, we explain how we combine benchmark scores with LLM judgments to construct offline pairwise feedback and complete test reference matrices for evaluating contextual policies and their performance under domain shift. In Section~\ref{subsec:rl-dataset}, we describe how we train driving agents with different reward weights and construct comparisons from simulated judges with distinct utilities. This setting extends our evaluation beyond language models to agents that trade off multiple performance objectives. We distinguish observed human judgments in LM Arena from the constructed preferences in RouterBench and the driving simulator, and provide the data-construction details needed to interpret and reproduce these experiments.

\subsection{LM Arena dataset}
\label{subsec:lm-arena-dataset}

LMArena is a crowdsourced platform in which a (human) user submits a prompt, receives responses from two anonymously presented language models, and indicates which response is better, whether the responses tie, or whether both are unsatisfactory~\citep{Chiang.2024}. The Arena Human Preference 140K dataset records these votes together with the conversation histories and associated metadata. The dataset fits our framework: the observed menu $m$ are the pairs $(j, k)$, with pairwise preference $Y_m$. The prompt and its metadata constitute the context $X$. For each context, only one menu's outcome is selected, and the others are not observed. The crowdsourced votes span 126 detected language labels and a variety of tasks, providing naturally heterogeneous evaluation contexts and human preferences.

We use the latest public release, which contains 135,634 votes collected from April 17, 2025 to July 24, 2025, covering 53 models. Our experiments retain the 16 models listed in Table~\ref{tab:selected-llm-models} to ensure minimal coverage over the pairs. For each comparison, we construct the context from the conversation and salient metadata, including language, task-category annotations (e.g., mathematics, creative writing, instruction following, and hard prompts), and whether the conversation involves code.

Tables~\ref{tab:arena-language-distribution} and~\ref{tab:arena-attribute-distribution} report the marginal distributions of the principal discrete context attributes. We aggregate the recorded language labels into five categories: English, Polish, Russian, Chinese (including both \texttt{zh} and \texttt{zh-Hant}), and Other. The code, instruction-following, and mathematics indicators are taken directly from the dataset annotations. 

\begin{table}[htbp]
\centering
\begin{minipage}[t]{0.40\textwidth}
\centering
\caption{Marginal distribution of language groups.}
\label{tab:arena-language-distribution}
\small
\begin{tabular}{@{}lrr@{}}
\toprule
\textbf{Language group} & \textbf{Count} & \textbf{Share} \\
\midrule
English & 71,175 & 52.5\% \\
Polish  & 13,813 & 10.2\% \\
Russian &  9,263 &  6.8\% \\
Chinese &  7,346 &  5.4\% \\
Other   & 34,037 & 25.1\% \\
\bottomrule
\end{tabular}
\end{minipage}
\hfill
\begin{minipage}[t]{0.55\textwidth}
\centering
\caption{Marginal distributions of binary context attributes.}
\label{tab:arena-attribute-distribution}
\small
\begin{tabular}{@{}llrr@{}}
\toprule
\textbf{Attribute} & \textbf{Value} & \textbf{Count} & \textbf{Share} \\
\midrule
Code & Yes & 39,363 & 29.0\% \\
     & No  & 96,271 & 71.0\% \\
Instruction following & Yes & 24,666 & 18.2\% \\
                      & No  & 110,968 & 81.8\% \\
Mathematics & Yes & 10,892 &  8.0\% \\
            & No  & 124,742 & 92.0\% \\
\bottomrule
\end{tabular}
\end{minipage}
\end{table}

\begin{table}[htbp]
\centering
\caption{Selected agents from the Arena Human Preference 140K dataset, with list prices in USD per $10^6$ tokens.}
\label{tab:selected-llm-models}
\small
\begin{tabular}{@{}llrr@{}}
\toprule
\textbf{Provider} & \textbf{Agent} & \textbf{Input} & \textbf{Output} \\
\midrule
\multirow{3}{*}{OpenAI}
& \texttt{o3-2025-04-16} & 2.00 & 8.00 \\
& \texttt{o4-mini-2025-04-16} & 1.10 & 4.40 \\
& \texttt{gpt-4.1-2025-04-14} & 2.00 & 8.00 \\
\midrule
\multirow{2}{*}{Anthropic}
& \texttt{claude-opus-4-20250514} & 15.00 & 75.00 \\
& \texttt{claude-sonnet-4-20250514} & 3.00 & 15.00 \\
\midrule
\multirow{2}{*}{Google}
& \texttt{gemini-2.5-pro} & 1.25 & 10.00 \\
& \texttt{gemini-2.5-flash} & 0.30 & 2.50 \\
\midrule
\multirow{2}{*}{Alibaba (Qwen)}
& \texttt{qwen3-235b-a22b-instruct-2507} & 0.09 & 0.55 \\
& \texttt{qwen3-30b-a3b} & 0.12 & 0.50 \\
\midrule
Moonshot AI & \texttt{kimi-k2-0711-preview} & 0.57 & 2.30 \\
\midrule
MiniMax & \texttt{minimax-m1} & 0.55 & 2.20 \\
\midrule
Mistral AI & \texttt{mistral-medium-2505} & 0.40 & 2.00 \\
\midrule
\multirow{2}{*}{DeepSeek}
& \texttt{deepseek-r1-0528} & 0.50 & 2.15 \\
& \texttt{deepseek-v3-0324} & 0.25 & 1.00 \\
\midrule
Meta & \texttt{llama-4-maverick-17b-128e-instruct} & 0.20 & 0.80 \\
\bottomrule
\end{tabular}
\end{table}

\subsection{RouterBench dataset}
\label{subsec:routerbench-dataset}

RouterBench evaluates LLM routers by running multiple models on a common collection of prompts and recording their responses, task-specific correctness scores, and inference costs~\citep{Hu.2024RouterBench}. We use its latest release~\citep{Li.2026llmrouterbench} and retain the $K=10$ flagship models in Table~\ref{tab:rb-models} with complete coverage across ten subsets (see Table~\ref{tab:rb-subsets}). The resulting benchmark contains $8{,}742$ prompts. The context $X$ includes the prompt, source subset, task domain, mathematics and code indicators, language group, and token length, following the LMArena context schema in Section~\ref{subsec:lm-arena-dataset}.

RouterBench provides absolute scores rather than relative feedback. A score $s_j(x)$ induces $Z_{jk}(x)=\operatorname{sign}\{s_j(x)-s_k(x)\}$ and is therefore transitive by construction. We use this verdict when the native scores distinguish a pair and elicit a preference only when both models succeed (if both fail, then we simply treat it as a both bad verdict). For each prompt, we sample $Q=20$ of the $\binom{10}{2}=45$ pairs uniformly and independently, giving the known propensity $e^m_0(x)=20/45$ and satisfying positivity. 

A sampled pair $(j,k)$ at prompt $x$ is then resolved as
\begin{equation}
Z_{jk}(x)=
\begin{cases}
\operatorname{sign}(s_j-s_k), & s_j\ne s_k \quad\text{(the benchmark orders the pair)},\\
0, & s_j=s_k=0 \quad\text{(both agents fail)},\\
\text{judge verdict}, & s_j=s_k>0 \quad\text{(both agents succeed)},
\end{cases}
\label{eq:rb-label-first}
\end{equation}
so a judge is used only when both models succeed. We draw judges from three latest open-source frontier models supplied by different providers and assign them distinct emphases---rigour, depth, or efficiency---within a common six-metric rubric. This emulates the real-world rater diversity and heterogeneity, which also might introduce intransitive feedback. Each pair is judged in both presentation orders and the verdicts are averaged to mitigate position bias.

We select $500$ prompts from the benchmark as a separate test set, unseen by the training. All $45$ pairs on the $500$ test prompts are evaluated by all three judges, yielding a complete reference payoff matrix as the ground truth. Each of the $8{,}242$ training prompts receives the $20$ sampled pairs and one randomly assigned judge. Overall, the processing yields $232{,}340$ comparisons, including $67{,}500$ test comparisons. Table~\ref{tab:rb-subsets} shows the constitution. Knowledge tasks account for $67.0\%$ of prompts, programming $17.8\%$, chat and instruction following $8.6\%$, logic $4.6\%$, and mathematics $2.1\%$; $20.7\%$ involve code and $4.9\%$ mathematics. Different to LM Arena, the routerbench is $96.0\%$ English, with some CJK. 

\begin{table}[htbp]
\centering
\caption{Composition of the RouterBench panel. ``Feedback modality'' is the benchmark's native
outcome format; the last two columns give the share of sampled pairs that the label
already orders and the share on which both agents succeed and a judge is therefore
consulted.}
\label{tab:rb-subsets}
\small
\setlength{\tabcolsep}{3pt}
\begin{tabularx}{\textwidth}{@{}p{0.18\textwidth}rrp{0.14\textwidth}Xrr@{}}
\toprule
\textbf{Subset} & \textbf{Prompts} & \textbf{Test} & \textbf{Domain} &
\textbf{Feedback modality} & \textbf{Ord.} & \textbf{Judged} \\
\midrule
MMLU-Pro & 3,000 & 50 & knowledge & Binary (correct/incorrect) & 13.1\% & 75.5\% \\
HLE & 2,158 & 50 & knowledge & Binary (correct/incorrect) & 17.1\% & 3.0\% \\
LiveCodeBench & 1,055 & 50 & programming & Binary (pass/fail) & 19.5\% & 60.4\% \\
ArenaHard & 750 & 100 & chat / IF & Ternary (win/tie/loss) & 48.7\% & 42.6\% \\
SimpleQA & 500 & 50 & knowledge & Ternary (correct/incorrect/not attempted) & 30.1\% & 19.5\% \\
SWE-bench & 500 & 50 & programming & Binary (pass/fail) & 24.9\% & 10.8\% \\
ARC-AGI & 400 & 50 & logic & Binary (exact/non-match) & 23.2\% & 9.8\% \\
GPQA & 198 & 50 & knowledge & Binary (correct/incorrect) & 22.3\% & 63.5\% \\
LiveMathBench & 121 & 30 & math & Binary (correct/incorrect) & 21.9\% & 54.1\% \\
AIME & 60 & 20 & math & Binary (correct/incorrect) & 28.4\% & 56.5\% \\
\midrule
\textbf{Total} & \textbf{8,742} & \textbf{500} & & & 22.1\% & 41.3\% \\
\bottomrule
\end{tabularx}
\end{table}

\begin{table}[htbp]
\centering
\caption{Selected agents from the RouterBench dataset, with list prices in USD per $10^6$ tokens.
\textsuperscript{$\dagger$}\texttt{gpt-5-chat} is no longer served; its price is taken
from the benchmark's own records.}
\label{tab:rb-models}
\small
\begin{tabular}{@{}llrr@{}}
\toprule
\textbf{Provider} & \textbf{Agent} & \textbf{Input} & \textbf{Output} \\
\midrule
\multirow{2}{*}{OpenAI}
& \texttt{gpt-5} (medium) & 1.25 & 10.00 \\
& \texttt{gpt-5-chat}\textsuperscript{$\dagger$} & 1.25 & 10.00 \\
\midrule
Anthropic & \texttt{claude-sonnet-4} & 3.00 & 15.00 \\
\midrule
\multirow{2}{*}{Google}
& \texttt{gemini-2.5-pro} & 1.25 & 10.00 \\
& \texttt{gemini-2.5-flash} & 0.30 & 2.50 \\
\midrule
Alibaba & \texttt{qwen3-235b-a22b-thinking-2507} & 0.23 & 2.30 \\
\midrule
\multirow{2}{*}{DeepSeek}
& \texttt{deepseek-v3-0324} & 0.25 & 1.00 \\
& \texttt{deepseek-r1-0528} & 0.50 & 2.15 \\
\midrule
Z-AI & \texttt{glm-4.6} & 0.43 & 1.75 \\
\midrule
Moonshot AI & \texttt{kimi-k2-0905} & 0.60 & 2.50 \\
\bottomrule
\end{tabular}
\end{table}

\subsection{Reinforcement-learning agent dataset}
\label{subsec:rl-dataset}

We construct a contextual agent-evaluation dataset using \texttt{mo-highway-fast-v0}, the multi-objective highway-driving environment in MO-Gymnasium~\citep{Felten.2023Toolkit}, built on HighwayEnv~\citep{Leurent.2018Highway}. An agent controls a vehicle among simulated traffic and receives separate rewards for speed, right-lane occupancy, and collision avoidance; the collision component is a nonpositive penalty. The tasks are contextual, containing five configuration features: lane count, vehicle count, traffic density, initial lane, and initial ego spacing. We sample lane and vehicle counts uniformly from $\{2,3,4,5\}$ and $\{20,\ldots,60\}$, density from $U[0.7,1.5]$, the initial lane uniformly among valid lanes, and ego spacing from $U[0.5,3]$, then fix the configuration for the episode.

The driving task is inherently multi-objecitve: for example, agents can trade speed with safety. We simulate external judges through linear utilities over an episode-performance vector $z_j=(\overline r_{\mathrm{speed}},\overline r_{\mathrm{lane}},1-\mathbf{1}\{\mathrm{collision}\})\in[0,1]^3$. Three equally weighted judges use $U_h(j)=v_h^\top z_j$, with weights approximately $(0.67,0.31,0.02)$, $(0.044, 0.0, 0.95)$, and $(0.26,0.01,0.72)$. Each offline comparison samples a pair menu $m=\{j,k\}$ uniformly from the ten unordered pairs and a judge uniformly from the three judges. Both agents are then evaluated on the same contextual task. The signed verdict $Z^m_{jk}$ is $+1$ if the judge prefers $j$ and $-1$ otherwise, with exact utility ties broken by a fair coin; hence $e^m_0(x)=1/10$ per comparison. Our target $A_{jk}(x)$ further averages over the initialization scenarios.

We train agents by the scalar reward $w_j^\top r$, where $w_j$ specifies agent $j$'s training weights on the reward vector \(r=(r_{\mathrm{speed}},r_{\mathrm{lane}},r_{\mathrm{collision}})\). We apply deep Q-learning~\citep{Mnih.2015DQN} through Stable-Baselines3~\citep{Raffin.2021SB3}. Each agent receives $20{,}000$ interaction steps with two hidden layers of 256 units. We adapt the official HighwayEnv DQN example,\footnote{\url{https://highway-env.farama.org/quickstart/}} using learning rate $5\times10^{-4}$, replay capacity $15{,}000$, batch size 32, and target updates every 50 steps. The final five agents use weights $(0.1,0.8,0.1)$, $(0.8,0.1,0.1)$, $(0.45,0.1,0.45)$, $(0.02,0,0.98)$, and $(0.2,0,0.8)$, respectively. The first three use discount factor $0.8$; the last two use $0.99$ to emphasize longer-term safety. These design choices make sure that the agents trained have a range of varying capabilities.

The resulting dataset contains 256 configurations (i.e. context) with eight initializations and has eventually $20{,}480$ offline comparisons. Validation and test set contain 32 configurations, with 32 and 64 initializations per configuration, respectively. For each validation or test configuration, we evaluate all five agents and average all three judges' verdicts to estimate the complete payoff matrix, without assigning ties. Primary data exploration shows that the contextual PMs contain cycle roughly $1/3$ of the times. 

\newpage
\section{Implementation}
\label{app:implementation}

 In Section~\ref{subsec:eval-metrics}, we define the evaluation metrics used in our experiments. In Section~\ref{subsec:training-details}, we specify the training details of our \method. In Section~\ref{subsec:baselines}, we describe the baselines.

 We summarize our method (i.e. \textbf{NashEval-debiased}) in Algorithm~\ref{alg:nasheval-debiased}.

\begin{algorithm}[p]
\caption{NashEval-debiased: contextual equilibrium learning from selective feedback}
\label{alg:nasheval-debiased}
\small
\begin{algorithmic}[1]
\Require Training log $\mathcal D_n=\{O_i\}_{i=1}^n$, menu aggregators $\{f_{jk}\}_{j<k}$,
context-independent strongly convex regularizer $\Omega$, number of folds $K_{\mathrm{CF}}$,
and policy class $\{\pi_\theta:\theta\in\Theta\}$.
\State Encode each observed $Y_{i,m}$ as verdicts $Z^m_{i,jk}$ using Eq.~(\ref{eq:verdict}).
\State Partition the training indices into folds $I_1,\ldots,I_{K_{\mathrm{CF}}}$, keeping all records from each context together.
\For{$b=1,\ldots,K_{\mathrm{CF}}$}
    \For{each menu $m\in\mathcal Q$}
        \State Fit $\widehat e^{m,(-b)}(x)$ to $S_i^m$ on all $i\notin I_b$ using binary log loss.
        \State Fit $\widehat\mu^{m,(-b)}_{jk}(x)$ to $Z^m_{i,jk}$ for each $j<k$ in $m$ using squared error on $i\notin I_b$ with $S_i^m=1$.
    \EndFor
    \For{each $i\in I_b$ and pair $j<k$}
        \State Set $\widehat{\boldsymbol\mu}^{(-b)}_{i,jk}
        =\big(\widehat\mu^{m,(-b)}_{jk}(X_i)\big)_{m\in\mathcal Q_{jk}}$.
        \State Construct the out-of-fold target from Eq.~(\ref{eq:payoff-pseudo-outcome}):
        \Statex \hspace{\algorithmicindent}$\displaystyle
        \widehat\Gamma_{i,jk}
        =f_{jk}\!\left(\widehat{\boldsymbol\mu}^{(-b)}_{i,jk}\right)
        +\sum_{\substack{m\in\mathcal Q_{jk}\\S_i^m=1}}
        \frac{\partial f_{jk}}{\partial u_m}
        \!\left(\widehat{\boldsymbol\mu}^{(-b)}_{i,jk}\right)
        \frac{Z^m_{i,jk}-\widehat\mu^{m,(-b)}_{jk}(X_i)}
             {\widehat e^{m,(-b)}(X_i)}.$
    \EndFor
\EndFor
\State Pool the targets from all folds and fit a shared payoff regressor:
\Statex \hspace{\algorithmicindent}$\displaystyle
    \widehat a\in\argmin_a\sum_{i=1}^n\sum_{j<k}
    \big\{a(X_i,j,k)-\widehat\Gamma_{i,jk}\big\}^2.$
\State Define $\widehat A^{\mathrm{DB}}_{jk}(x)=\operatorname{clip}_{[-1,1]}\{\widehat a(x,j,k)\}$ for $j<k$,
and set $\widehat A^{\mathrm{DB}}_{kj}(x)=-\widehat A^{\mathrm{DB}}_{jk}(x)$ and $\widehat A^{\mathrm{DB}}_{jj}(x)=0$.
\State Minimize the empirical gap loss to learn $\pi_{\widehat\theta}$:
\Statex \hspace{\algorithmicindent}$\displaystyle
    \widehat{\mathcal L}_n(\theta)
    =\frac{1}{n}\sum_{i=1}^n
    G_\Omega\!\left(\pi_\theta(X_i);\widehat A^{\mathrm{DB}}(X_i),X_i\right).$
\State \Return $\widehat A^{\mathrm{DB}}$ and $\pi_{\widehat\theta}$.
\Statex \textbf{Inference:} For a new context $x_\star$, return $\pi_{\widehat\theta}(x_\star)\in\Delta_K$.
\end{algorithmic}
\end{algorithm}

We evaluate the residual correction only for observed menus, so the algorithm never requires an unobserved verdict. We retain the full pseudo-outcome values during payoff regression and clip only the fitted payoff predictions. We use LightGBM for the nuisance and payoff models and the multilayer perceptron described in Section~\ref{subsec:training-details} for the policy. For experiments that compute equilibria directly, we replace policy training and prediction by a numerical equilibrium solver applied to $\widehat A^{\mathrm{DB}}(x)$ with the experiment's regularizer.

\subsection{Evaluation metrics}
\label{subsec:eval-metrics}
We evaluate a policy $\pi$ on $n_{\mathrm{test}}$ contexts $x_i$ using reference payoff matrices $A_i^{\mathrm{ref}}$, with positive entries indicating preference for the row agent over the column agent. We use the true matrices in synthetic experiments and approximated matrices in RouterBench and MO-Highway.

\paragraph{Average exploitability.}
Exploitability measures the largest advantage an opponent can obtain against the evaluated policy.
We average the unregularized best-response payoff over test contexts:
\begin{equation}
 \widehat{\mathrm{Expl}}_0(\pi)
 =\frac{1}{n_{\mathrm{test}}}\sum_{i=1}^{n_{\mathrm{test}}}
 \max_{q\in\Delta_K}q^\top A_i^{\mathrm{ref}}\pi(x_i).
 \label{eq:eval-exploitability}
\end{equation}
Lower is better, and we allow an unrestricted opponent even when we impose a cost penalty on the evaluated policy.

\paragraph{Support $F_1$.}
Support $F_1$ measures how accurately a policy recovers the reference winner set, without comparing the assigned probabilities.
We define $S_\tau(p)=\{j:p_j>\tau\}$ and average the overlap with the reference equilibrium $\pi_i^{\mathrm{ref}}$:
\begin{equation}
 \widehat F_1(\pi)
 =\frac{1}{n_{\mathrm{test}}}\sum_{i=1}^{n_{\mathrm{test}}}
 \frac{2|S_\tau(\pi(x_i))\cap S_\tau(\pi_i^{\mathrm{ref}})|}
 {|S_\tau(\pi(x_i))|+|S_\tau(\pi_i^{\mathrm{ref}})|}.
 \label{eq:eval-support-f1}
\end{equation}
Higher is better, and we use $\tau=10^{-3}$ in the synthetic experiments and the reference equilibrium under the experiment's specified regularization.

\paragraph{Head-to-head game value.}
Game value measures the evaluated policy's net preference advantage against a specified opponent policy.
We report the average payoff of $\pi$ against $q$:
\begin{equation}
 \widehat V(\pi,q)
 =\frac{1}{n_{\mathrm{test}}}\sum_{i=1}^{n_{\mathrm{test}}}
 \pi(x_i)^\top A_i^{\mathrm{ref}}q(x_i).
 \label{eq:eval-game-value}
\end{equation}
In the RQ2 tables, we take $q$ to be NashEval-debiased, so positive values favor the row method and negative values favor NashEval-debiased.

\paragraph{Average strict win rate.}
Strict win rate measures how often a sampled agent wins against an equally weighted pool of single-agent opponents.
We let $W_{i,jk}$ denote the reference probability that agent $j$ strictly defeats agent $k$ at $x_i$ and compute
\begin{equation}
 \widehat{\mathrm{WR}}(\pi)
 =\frac{1}{n_{\mathrm{test}}K}\sum_{i=1}^{n_{\mathrm{test}}}
 \sum_{j=1}^K\sum_{k=1}^K\pi_j(x_i)W_{i,jk}.
 \label{eq:eval-win-rate}
\end{equation}
We assign ties and self-comparisons zero credit and retain them in the denominator, so payoff margins alone do not determine this metric when ties occur.


\subsection{training details}
\label{subsec:training-details}
We use LightGBM to estimate the outcome regressions $\mu^m_{0,jk}$ and selection propensities $e^m_0$, and to fit the debiased payoff matrix $\widehat A^{\mathrm{DB}}$. We split training contexts into $K_{\mathrm{CF}}$ folds and keep all records from the same context in one fold. For each fold, we fit the nuisance models on the remaining folds and predict the nuisances on the held-out fold to construct its pseudo-outcomes. We then pool these out-of-fold pseudo-outcomes to train the payoff regressor. 

\paragraph{Sentence embedding.}
For contexts that include text, we use the fixed Qwen3-Embedding-4B encoder~\citep{Zhang.2025} to map each prompt to $f_{\mathrm{emb}}(x)\in\mathbb R^d$. We compare Qwen3-Embedding-0.6B, Qwen3-Embedding-4B, and EmbeddingGemma-300M~\citep{SchechterVera.2025} with embedding dimensions from 32 to 1024. In these comparisons, we observe little variation in nuisance prediction accuracy across encoders and only small gains from larger embeddings. We therefore use Qwen3-Embedding-4B with $d=64$ to limit encoding and storage costs.

\paragraph{Nuisance estimation.}
For each menu $m$ and pair $j<k$ in $m$, we fit a separate LightGBM regressor to estimate $\mu^m_{0,jk}(x)$ by minimizing squared error on records with $S^m=1$. We impose $\widehat\mu^m_{kj}(x)=-\widehat\mu^m_{jk}(x)$ and $\widehat\mu^m_{jj}(x)=0$. For each menu, we fit a LightGBM binary classifier to estimate $e^m_0(x)$ from the inclusion indicator $S^m$ using binary log loss on all training contexts. We cross-fit both nuisance estimators as described above.

\paragraph{Payoff regression.}
We fit a shared LightGBM regressor to the cross-fitted pseudo-outcomes $\Gamma_{jk}(O;\widehat\eta)$ using $(x,j,k)$ as inputs and treating $j$ and $k$ as categorical features. We minimize squared error to estimate the conditional mean of these targets and obtain $\widehat A^{\mathrm{DB}}(x)$. This regression targets the debiased payoff surface rather than directly observed entries of the ground-truth matrix $A(x)$.

\paragraph{Hyperparameter tuning.}
We tune the nuisance estimators and the shared payoff regressor by grid search with three-fold cross-validation. Tables~\ref{tab:nuisance-ht-grid} and~\ref{tab:payoff-ht-grid} give the respective search spaces. We select the number of boosting iterations by early stopping, with a maximum of 1,000 iterations. We distinguish these tuning folds from the $K_{\mathrm{CF}}$ folds used to construct out-of-fold nuisance predictions. Note that we use a slightly more complex search space for the payoff regressor than the nuisance estimators.

\begin{table}[htbp]
\centering
\caption{Hyperparameter search space for the nuisance estimators. The index $\nu\in\{\mu,e\}$ denotes the outcome or propensity model.}
\label{tab:nuisance-ht-grid}
\small
\begin{tabularx}{0.6\linewidth}{@{}Xll@{}}
\toprule
Hyperparameter & Symbol & Search space \\
\midrule
Learning rate & $\mathrm{lr}_{\nu}$ & $\{0.01,0.1\}$ \\
Maximum leaves per tree & $n_{\mathrm{leaf},\nu}$ & $\{7,15,31\}$ \\
Maximum tree depth & $d_{\max,\nu}$ & \{7, 10\} \\
Minimum observations per leaf & $n_{\min,\nu}$ & $\{30,50,100,200\}$ \\
$L_2$ penalty on leaf values & $\lambda_{2,\nu}$ & $\{0,0.1,1\}$ \\
Feature fraction per tree & $f_{\nu}$ & $\{0.6,0.8,1.0\}$ \\
Number of trees & $T_{\nu}$ & \{50, 100, 200\} \\
\bottomrule
\end{tabularx}
\end{table}

\begin{table}[htbp]
\centering
\caption{Hyperparameter search space for the shared payoff regressor.}
\label{tab:payoff-ht-grid}
\small
\begin{tabularx}{0.6\linewidth}{@{}Xll@{}}
\toprule
Hyperparameter & Symbol & Search space \\
\midrule
Learning rate & $\mathrm{lr}_{A}$ & $\{0.02,0.05,0.1\}$ \\
Maximum leaves per tree & $n_{\mathrm{leaf},A}$ & $\{15,31,62\}$ \\
Maximum tree depth & $d_{\max,A}$ & \{10, 15\} \\
Minimum observations per leaf & $n_{\min,A}$ & $\{50,100,200\}$ \\
$L_2$ penalty on leaf values & $\lambda_{2,A}$ & $\{0,0.1,1\}$ \\
Feature fraction per tree & $f_A$ & $\{0.6,0.8,1.0\}$ \\
Number of trees & $T_A$ & \{50, 100, 200, 400\} \\
\bottomrule
\end{tabularx}
\end{table}

\textbf{Equilibrium learning.} We parameterize the equilibrium predictor $\pi_\theta(x)$ as a multilayer perceptron with two hidden layers, ReLU activations, and a $K$-dimensional softmax output that maps each context to a distribution over agents (During inference or testing, the threshold for a winner is $0.001$.). We train the network to minimize the empirical orthogonal gap loss and approximate the target equilibrium $\pi^\star(x)$. We tune the hyperparameters of the predictor by grid search specified in Table~\ref{tab:gap-loss-ht-grid}. 

\begin{table}[htbp]
\centering
\caption{Hyperparameter search space for the equilibrium predictor. We fix two hidden layers and vary their common width.}
\label{tab:gap-loss-ht-grid}
\small
\begin{tabularx}{0.6\linewidth}{@{}Xl@{}}
\toprule
Hyperparameter & Search space or fixed setting \\
\midrule
Hidden-layer widths & $\{(64,64),(128,128),(256,256)\}$ \\
Learning rate & $\{10^{-4},3\times10^{-4},10^{-3}\}$ \\
Weight decay & $\{0,10^{-5},10^{-4}\}$ \\
Batch size & $\{128,256\}$ \\
Maximum training epochs & $500$ \\
Early-stopping patience & $30$ epochs \\
\bottomrule
\end{tabularx}
\end{table}

\subsection{Baselines}
\label{subsec:baselines}

\paragraph{Oracle nuisance baseline.}
For NashEval-oracle, we substitute the exact outcome functions $\mu^m_{0,jk}(x)$
and inclusion propensities $e^m_0(x)$ into the orthogonal training target:
\begin{equation}
 \Gamma^{\mathrm{ON}}_{i,jk}=
 \frac{1}{|\mathcal Q_{jk}|}\sum_{m\in\mathcal Q_{jk}}
 \left[
 \mu^m_{0,jk}(X_i)
 +\frac{S_i^m}{e^m_0(X_i)}
   \left\{Z^m_{i,jk}-\mu^m_{0,jk}(X_i)\right\}
 \right].
 \label{eq:syn-oracle-nuisance-target}
\end{equation}
We use the same training log and payoff regression as NashEval-debiased: 300 trees, seven leaves, learning rate $0.05$, and minimum leaf size 100. We clip predictions to $[-1,1]$ and impose antisymmetry and a zero diagonal to obtain $\widehat A^{\mathrm{ON}}(x)$.

\paragraph{NashEval-related policies.}
We compute NashEval-debiased, NashEval-plug-in, and NashEval-oracle from $\widehat A^{\mathrm{DB}}$, $\widehat A^{\mathrm{PI}}$, and $\widehat A^{\mathrm{ON}}$, respectively. In the synthetic experiment, we set all agent costs to zero and use the following regularizer:
\begin{equation}
 \Omega(\pi)=\frac{\rho}{2}
             \left\|\pi-\frac{1}{K}\mathbf 1_K\right\|_2^2,
 \qquad \rho=0.001.
 \label{eq:syn-regularizer}
\end{equation}

\paragraph{BT and Borda policies.}
For an estimated matrix $B(x)$, we fit BT scores separately at each context:
\begin{equation}
\begin{aligned}
 \widehat s^B(x)=\argmin_{s\in\mathbb R^K}\;
 &\frac{1}{\binom K2}\sum_{j<k}
 \left[
  \log\{1+\exp(s_j-s_k)\}
  -\frac{1+B_{jk}(x)}{2}(s_j-s_k)
 \right]
 +\frac{\kappa_{\mathrm{BT}}}{2}\|s\|_2^2,\\
 &\hspace{4cm}\kappa_{\mathrm{BT}}=10^{-4}.
\end{aligned}
\label{eq:syn-bt-projection}
\end{equation}
We use Newton iterations with line search and stop when the maximum absolute score gradient is at most $10^{-9}$. We set $B=\widehat A^{\mathrm{DB}}$ for BT-debiased and $B=\widehat A^{\mathrm{PI}}$ for BT-Plug-in, and select $\argmax_j\widehat s^B_j(x)$. For winner feedback, we use the encoded margins in Eq.~(\ref{eq:syn-winner-verdict}).

BT-Reg uses the same scores as BT-Plug-in to form
\begin{equation}
 \widehat A^{\mathrm{BT}}_{jk}(x)=
 \tanh\!\left(\frac{\widehat s^{\mathrm{PI}}_j(x)
                         -\widehat s^{\mathrm{PI}}_k(x)}{2}\right),
 \label{eq:syn-bt-game}
\end{equation}
and computes its equilibrium with the same policy regularizer and strength $\rho$ as NashEval in Eq.~(\ref{eq:syn-regularizer}).

Borda selects the largest average row margin of
$\widehat A^{\mathrm{DB}}(x)$:
$\argmax_j (K-1)^{-1}\sum_{k\ne j}\widehat A^{\mathrm{DB}}_{jk}(x)$.
With complete, equally weighted pairs and a common $L_2$ score penalty, BT scores and row sums have the same ordering. With identical tie-breaking by agent index, BT-debiased and Borda therefore yield the same policy, which we label \textbf{BT-debiased / Borda}.

\paragraph{BT-cost on RouterBench.}
\label{para:bt-cost-constrained}
We train a cost-constrained adaptation of the contextual BT model of Theorem~1 in \citet{Frick.2025}. For each prompt, we form $\widehat P^{\mathrm{BT}}_{jk}(x)=\sigma(\widehat s_j(x)-\widehat s_k(x))$ and fix the opponent to $q=\mathbf1/K$. Given a budget $C\geq\min_jc_j$, we solve
\begin{equation}
 \widehat\pi_{\mathrm{BT\text{-}cost}}(x;C)
 \in\argmax_{\pi\in\Delta_K:\ c^\top\pi\leq C}
       \pi^\top\widehat P^{\mathrm{BT}}(x)q.
 \label{eq:routerbench-bt-cost}
\end{equation}
We evaluate 31 equally spaced budgets between the minimum and maximum costs, together with every distinct model cost, for 37 budgets in total.

\paragraph{Maximal lottery.}
We compute a context-independent equilibrium from pooled training payoffs $\widehat M_{jk}$, using the regularization and cost treatment specified for each experiment. 
We average the observed training verdicts for each pair across contexts. Let $\mathcal I_{jk}$ index the training comparisons of agents $j$ and $k$, and let $Z_{i,jk}^{m_i}$ denote comparison $i$'s signed verdict, oriented in favor of $j$. We define
\begin{equation}
 \widehat M_{jk}
 =\frac{1}{|\mathcal I_{jk}|}\sum_{i\in\mathcal I_{jk}}Z_{i,jk}^{m_i},
 \quad j<k,\qquad
 \widehat M_{kj}=-\widehat M_{jk},\qquad
 \widehat M_{jj}=0.
 \label{eq:pooled-payoff}
\end{equation}

\paragraph{Robust maximal lottery.} Following \citet{Khalaf.2026}, we define
\begin{equation}
  \mathcal W(\widehat w_0,\tau)
  =\left\{w\in\Delta_{|\mathcal G|}:
  \frac{1}{2}\lVert w-\widehat w_0\rVert_1\le\tau\right\},
  \qquad
  \mathcal M_\tau
  =\left\{\sum_{g\in\mathcal G}w_g\widehat M^{(g)}:
  w\in\mathcal W(\widehat w_0,\tau)\right\}
  \label{eq:rq3b-robust-ambiguity}
\end{equation}
as the ambiguity set and choose the primary radius

\begin{equation}
  \tau_n=\min\!\left\{1,
  \sqrt{\frac{|\mathcal G|}{n}}+
  \sqrt{\frac{2}{n}\log\frac{2}{\delta}}\right\},
  \qquad \delta=0.05,
  \label{eq:rq3b-robust-radius}
\end{equation}
where $n$ is the number of training prompts. We also evaluate $\tau\in\{0,.025,.05,.1,.2,.4,.6,.8,1\}$, together with $\tau_n$, without test-based radius selection.

The robust policy is the solution of
\begin{equation}
  \widehat\pi_{\mathrm{rob}}
  \in \argmax_{\pi\in\Delta_K}\;
  \min_{w\in\mathcal W(\widehat w_0,\tau)}\min_{j\in[K]}
  \pi^\top\!\left(\sum_{g\in\mathcal G}w_g\widehat M^{(g)}\right)e_j.
  \label{eq:rq3b-robust-lottery}
\end{equation}
We then solve Eq.~(\ref{eq:rq3b-robust-lottery}) with the exact dual linear program from Theorem~17 of \citet{Khalaf.2026}.

\paragraph{Pluralistic leaderboard.}
We adapt the ranking method of \citet{Haghtalab.2026} to incomplete comparisons by treating each training prompt as one voter. We infer its scores from the 20 observed pairs $\mathcal O_x$ through
\begin{equation}
  \widehat u_x
  =\argmin_{u\in\mathbb R^K}
  \sum_{(j,k)\in\mathcal O_x}
  \bigl(u_j-u_k-Z_{jk}(x)\bigr)^2+\lVert u\rVert_2^2.
  \label{eq:rq3b-rank-completion}
\end{equation}
We sort $\widehat u_x$ in descending order and break ties with a permutation fixed by the data-split seed. This completion step is our adapter, so the ranking guarantees concern the inferred rankings rather than the original cyclic preferences.

We implement the iterated-rounding committee procedure and the geometric-checkpoint ranking construction of Algorithms~1 and~2 in \citet{Haghtalab.2026}. We use the paper's parameters $\epsilon=0.01$, $\alpha=1/2+4\epsilon$, and $\beta=1/4+2\epsilon$, and use growth factor two for the checkpoint sizes. Since our candidate set has only $K=10$ models, we enumerate the committees at each checkpoint and replace the paper's approximate multiplicative-weights subroutine with an exact empirical linear program. For candidate committees $C_1,\ldots,C_L$ of size $k$, let
\begin{equation}
  b_{a\ell}=\frac{1}{n}\sum_{x=1}^n
  \1\!\left\{a\succ_x c\ \text{for every }c\in C_\ell\right\}.
\end{equation}
We compute the stable lottery over committees by minimizing
\begin{equation}
  \min_{\delta\in\Delta_L,\,v}\;v
  \quad\text{subject to}\quad
  \sum_{\ell=1}^L\delta_\ell b_{a\ell}\le v
  \quad\forall a\in[K].
  \label{eq:rq3b-stable-committee-lp}
\end{equation}
We use $\delta$ to select committees in order to produce the ranking. At each rounding step, we select a committee with positive probability under $\delta$ that minimizes the fraction of unsatisfied prompts at threshold $\beta-\epsilon$, and repeat on the remaining unsatisfied prompts. We construct checkpoint committees of sizes $1,2,4,\ldots$ and append each model to the ranking when it first appears, then append any remaining models. Let $W_3$ be the top-three models in the ranking. We convert the ranking to a uniform lottery over its top three models for policy evaluation.

\newpage
\section{Proofs}
\label{app:proofs}
We collect the assumptions in Section~\ref{app:proof-assumptions}. We then prove identification, derive the EIF and the conditional bias bound, establish Neyman-orthogonality, and prove the empirical exploitability bound and quasi-oracle rate.

\paragraph{Notation and conventions.}
\label{para:assumption-notation}
\label{ass:avg-matrix-norm}
We write $P_0$ for the observed-data law and $P_X$ for its context marginal. We fix the finite agent and menu collections, the aggregators $f_{jk}$, and the context-independent regularizer $\Om$. All functions, fitted quantities, policy selections, and suprema below are measurable, and pointwise statements hold outside a common $P_X$-null set. We suppress pair and menu indices when they do not affect the argument, write $\eta=(\mu,e)$ and $\eta_0=(\mu_0,e_0)$, and set $\Delta\mu=\widehat\mu-\mu_0$ and $\Delta e=\widehat e-e_0$ for fitted nuisance errors.

We use $\|\cdot\|_1$ and $\|\cdot\|_2$ for the vector $\ell_1$ and Euclidean norms. For the policy norm $\|\cdot\|$, we define the dual norm and induced matrix norm by
\begin{align*}
    \|v\|_\star:=\sup_{\|u\|\le1}\langle v,u\rangle,
    \qquad
    \|B\|_\star:=\sup_{\|u\|\le1}\|Bu\|_\star.
\end{align*}
Finite-dimensional norm equivalence gives a constant $C_{\mathrm{mat}}$ such that $\|B\|_\star^2\le C_{\mathrm{mat}}\sum_{j,k}|B_{jk}|^2$. For a scalar function $h$, we write $\|h\|_{L_2(P_X)}=(\E_X[h(X)^2])^{1/2}$, so $\|\|\Delta\mu\|_2^2\|_{L_2(P_X)}=(\E_X\|\Delta\mu(X)\|_2^4)^{1/2}$. We write $L_2^0(P_0)$ for the square-integrable, mean-zero functions of $O$. All regularity constants are uniform in $n$, and $\kappa$ is fixed for the quasi-oracle rate.

\subsection{Assumptions}
\label{app:proof-assumptions}

We state the full list of assumptions used in the analysis.

\begin{assumption}[Identifiability]
\label{ass:identifiability}
\label{ass:neyman-identification}
 In order to identify the contextual payoff $A_{jk}(x)$ from the observed data, we make the following assumptions: (i)~\emph{pairwise menu coverage}: $\Qcal_{jk}\ne\emptyset$ for every pair $\{j,k\}\subseteq\M$; (ii)~\emph{missing at random (MAR)}: $S^m\perp Z^m_{jk} \mid X$ for every $m\in\Qcal_{jk}$; (iii)~\emph{positivity}: $e^m_0(x)=P(S^m=1\mid X=x)\ge\elb>0$ almost everywhere for every $m\in\Qcal_{jk}$. 
\end{assumption}
The potential-verdict formulation follows \citet{Rubin.1974}, and MAR~\citep{Rubin.1976} and positivity~\citep{Kennedy.2024review} are standard identifying conditions in missing-data and causal analysis.

\begin{assumption}[Boundedness]
\label{ass:boundedness}
\label{ass:neyman-measurability}
\label{ass:neyman-positivity}
\label{ass:avg-measurability}
\label{ass:avg-positivity}
\label{ass:avg-boundedness}
We fix $\alpha_\mu>0$ and write $I_\alpha:=(-1-\alpha_\mu,1+\alpha_\mu)$. We assume (i) generic and fitted nuisances used in the bias, differentiability, and rate bounds satisfy $\mu^m_{jk}(x)\in I_\alpha$ and $e^m(x)\ge\elb$, with $\elb$ from Assumption~\ref{ass:identifiability}, and (ii) the fitted payoff has finite entrywise $L_2(P_X)$ error. For the quasi-oracle result, we additionally require (iii) a uniform envelope $\sup_{h\in\mathcal H_n}|h(x)|\le H$, where $\mathcal H_n$ is the loss class in Assumption~\ref{ass:complexity} and $H$ is independent of $n$.
\end{assumption}

\begin{assumption}[Smoothness]
\label{ass:smoothness}
\label{ass:gap-differentiation}
\label{ass:neyman-smoothness}
\label{ass:conditional-bias}
\label{ass:avg-smoothness}
We assume (i) each $f_{jk}$ extends to a continuously differentiable function on $I_\alpha^{\Qcal_{jk}}$, with $I_\alpha$ from Assumption~\ref{ass:boundedness}, (ii) $\Om$ is continuously differentiable on a neighborhood of $\DeltaK$, and (iii) $\theta\mapsto\pi_\theta(x)$ is continuously differentiable when we differentiate the gap loss. For population gap-loss differentiation, we further require a neighborhood $U_\theta$ of each $\theta$ such that
\begin{align}
    \E_X[M_\theta(X)]<\infty,
    \qquad M_\theta(x):=\sup_{\vartheta\in U_\theta}
    \|D_\theta\pi_\vartheta(x)\|_{\mathrm{op}},
    \label{eq:policy-jacobian-envelope}
\end{align}
where $\|J\|_{\mathrm{op}}:=\sup_{\|v\|_2\le1}\|Jv\|$. For gap-loss differentiability, orthogonality, and the quantitative remainder and rate bounds, we further assume $\|\nabla f_{jk}(u)-\nabla f_{jk}(v)\|_2\le L_f\|u-v\|_2$ for all $u,v\in I_\alpha^{\Qcal_{jk}}$.
\end{assumption}
Such differentiability conditions are standard in influence-function and orthogonal-learning analyses~\citep{Kennedy.2024review,Foster.2023}. The Jacobian envelope justifies differentiation of the population gap loss under expectation, as we show in Section~\ref{app:proof-gap-loss-orthogonal}. 

\begin{assumption}[Strong convexity]
\label{ass:strong-convexity}
\label{ass:game-geometry}
\label{ass:avg-regularizer}
We assume that $\Om$ is $\kappa$-strongly convex on $\DeltaK$ with respect to $\|\cdot\|$, where $\kappa>0$.
\end{assumption}
Strongly convex regularization is common in regularized zero-sum games~\citep{Sokota.2022}, and it makes the equilibrium unique~\citep{Facchinei.2003}.

\begin{assumption}[Cross-fitting]
\label{ass:cross-fitting}
We observe i.i.d. data $O_1,\ldots,O_n\sim P_0$ and use a data-independent partition into a fixed number $K_{CF}$ of folds. Each nuisance fit uses only observations outside its evaluation fold and satisfies the stated propensity and rate conditions. We use $\ADB(\cdot;\widehat\eta)$ for the average of the fold-specific regression targets.
\end{assumption}
Cross-fitting is standard in DML~\citep{Chernozhukov.2018}.

\begin{assumption}[Nuisance and payoff-estimation rates]
\label{ass:estimation-rates}
\label{ass:empirical-bound}
\label{ass:avg-rates}
We assume $\|\|\Delta\mu\|_2\|_{L_2(P_X)}=O_p(r_\mu)$ and $\|\Delta e\|_{L_2(P_X)}=O_p(r_e)$ for deterministic rates $r_\mu,r_e\to0$, together with the moment bounds
\begin{align}
    \left\|\left\|\Delta\mu\right\|_2^2\right\|_{L_2(P_X)}
    =O_p(r_\mu^2), \qquad
    \left\|\Delta e\,\Delta\mu\right\|_{L_2(P_X)}
    =O_p(r_\mu r_e).
    \label{eq:remainder-component-rates}
\end{align}
The outcome bounds hold uniformly over agent pairs, the propensity bound over contributing menus, and the product bound over pairs and contributing menus, with the product evaluated menu-wise. We define
\begin{align}
    e_A:=\max_{j,k}\big\|\AhatDB_{jk}(\cdot;\widehat\eta)
          -\ADB_{jk}(\cdot;\widehat\eta)\big\|_{L_2(P_X)}.
    \label{eq:payoff-regression-rate}
\end{align}
For the quasi-oracle rate, we additionally require
\begin{align}
    e_A=O_p(n^{-1/4}),\qquad
    r_\mu\lesssim n^{-1/8},\qquad
    \elb^{-1}r_\mu r_e\lesssim n^{-1/4},
    \label{eq:quasi-oracle-conditions}
\end{align}
and omit the separate condition on $r_\mu$ for affine aggregators.
\end{assumption}
Product-rate conditions of this type are standard in DML~\citep{Chernozhukov.2018} and orthogonal statistical learning~\citep{Foster.2023}. For fixed $\elb>0$, we can satisfy Eq.~(\ref{eq:quasi-oracle-conditions}) with matching nuisance rates $r_\mu,r_e\lesssim n^{-1/8}$ and payoff-regression error $e_A=O_p(n^{-1/4})$, using flexible nuisance learners such as gradient-boosted trees (e.g., LightGBM) or neural networks~\citep{Chernozhukov.2018} when they attain these rates in the stated norms.

\begin{assumption}[Realizability]
\label{ass:realizability}
\label{ass:oracle-realizability}
For oracle recovery in Proposition~\ref{prop:oracle-recovery}, we assume that some $\theta_0\in\Theta$ satisfies $\pi_{\theta_0}(x)=\pi^\star(x)$ for $P_X$-almost every $x$.
\end{assumption}

\begin{assumption}[Loss-class complexity]
\label{ass:complexity}
\label{ass:quasi-oracle}
\label{ass:oracle-generalization}
For the quasi-oracle result, we assume a deterministic class $\mathcal B_n$ containing $A$ and, with probability tending to one, $\AhatDB(\cdot;\widehat\eta)$. We define
\begin{align*}
    \mathcal H_n:=\{x\mapsto G_\Om(\pi_\theta(x);B(x),x):
                     \theta\in\Theta,\ B\in\mathcal B_n\}
\end{align*}
and assume the expected absolute Rademacher-complexity bound
\begin{align}
    \mathfrak R_n(\mathcal H_n)
    :=\E_{X,\sigma}\!\left[\sup_{h\in\mathcal H_n}
       \left|\frac1n\sum_{i=1}^n\sigma_i h(X_i)\right|\right]
    \le C n^{-1/2},
    \label{eq:oracle-complexity}
\end{align}
where $C$ is independent of $n$, and $\sigma_1,\ldots,\sigma_n$ are independent uniform signs in $\{-1,1\}$, independent of the contexts.
\end{assumption}
Rademacher-complexity bounds are commonly used to control uniform deviations~\citep{Bartlett.2002}. Because the bound covers the entire class $\mathcal H_n$, it permits reuse of observations for payoff and policy fitting.

\subsection{Identification of the contextual payoff matrix}
\label{app:proof-identification}
\begin{proof}
We fix a pair $j\ne k$, a menu $m\in\Qcal_{jk}$, and a context $x$ outside the null set on which the identification assumptions may fail. Menu-specific positivity gives $P(S^m=1\mid X=x)=e^m_0(x)\geq\elb>0$, so the observed-data regression $\mu^m_{0,jk}(x)$ is well-defined. Menu-specific MAR implies conditional mean independence between $S^m$ and $Z^m_{jk}$ given $X$. Therefore,
\begin{align*}
    \mu^m_{0,jk}(x)
    &=\E[Z^m_{jk}\mid X=x,S^m=1]\\
    &=\E[Z^m_{jk}\mid X=x]\\
    &=g^m_{jk}(x).
\end{align*}
This argument only requires the conditional mean equality $\E[Z^m_{jk}\mid X,S^m]=\E[Z^m_{jk}\mid X]$ almost surely. It does not require joint independence across menus. Thus, our method handles the case where only one menu is observed per context (i.e., $\sum_{m\in\Qcal}S^m=1$).

The equality holds for every $m\in\Qcal_{jk}$. Hence, we identify each off-diagonal payoff entry by substituting the observed-data regressions into Eq.~(\ref{eq:payoff}):
\begin{equation*}
    A_{jk}(x)
    =f_{jk}\!\left(
        \big\{\mu^m_{0,jk}(x):m\in\Qcal_{jk}\big\}
      \right).
\end{equation*}
Pairwise menu coverage ensures that $\Qcal_{jk}$ is nonempty, and menu-specific positivity applies to every menu in this set. Definition~\ref{def:payoff} fixes $A_{jj}(x)=0$, so the observed-data law identifies the entire contextual payoff matrix $A(x)$. Under the regularizer conditions in Assumptions~\ref{ass:smoothness}(ii) and~\ref{ass:strong-convexity}, $\Om$ is fixed and the equilibrium is unique, so Eq.~(\ref{eq:contextual-equilibrium}) also identifies $\pi^\star(x)$.
\end{proof}

\subsection{Efficient influence function of the average payoff}
\label{app:proof-eif}

\begin{lemma}[Efficient influence function of the average payoff]
\label{lem:payoff-eif}
Under Assumptions~\ref{ass:identifiability}--\ref{ass:smoothness}, fix $j\ne k$ and define the average payoff $\psi_{jk}:=\E_0[A_{jk}(X)]$. In the nonparametric observed-data model, we obtain the efficient influence function of $\psi_{jk}$ at $P_0$ by centering the pseudo-outcome in Eq.~(\ref{eq:payoff-pseudo-outcome}) at the true nuisance $\eta_0=(\mu_0,e_0)$:
\begin{equation}
    \varphi_{jk}(O;P_0)
    =\Gamma_{jk}(O;\eta_0)-\psi_{jk}.
    \label{eq:payoff-eif-centered}
\end{equation}
In particular, $\E_0[\Gamma_{jk}(O;\eta_0)]=\psi_{jk}$ and $\E_0[\varphi_{jk}(O;P_0)]=0$.
\end{lemma}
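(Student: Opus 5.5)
We use the standard pathwise-derivative route for the nonparametric model: take a regular one-dimensional submodel $\{P_t\}$ through $P_0$ with score $s(O)$, differentiate $\psi_{jk}(P_t)$ at $t=0$, and write the result as $\E_0[\varphi_{jk}(O;P_0)s(O)]$ with the candidate $\varphi_{jk}=\Gamma_{jk}(O;\eta_0)-\psi_{jk}$. In the nonparametric model the tangent space is all of $L_2^0(P_0)$, so any mean-zero, square-integrable representer of the pathwise derivative is the EIF. Square integrability of $\Gamma_{jk}(O;\eta_0)$ follows from boundedness of $Z^m_{jk}$, the bound $e^m_0\ge\elb$ (Assumption~\ref{ass:identifiability}), and continuity of $\nabla f_{jk}$ on the compact set $[-1,1]^{\Qcal_{jk}}$ (Assumption~\ref{ass:smoothness}).

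\textbf{Step 1: factor the observed-data law.} Write the density as $p(x)\,p(\{s^m\}\mid x)\,p(\{z\}\mid x,\{s^m\})$. Accordingly, decompose the score into orthogonal pieces: a marginal part $s_X(X)$, a selection part, and an outcome part. The outcome part has conditional mean zero given $(X,\{S^m\})$.

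\textbf{Step 2: differentiate.} By the chain rule,
\[
\partial_t\psi_{jk}(P_t)=\E_0\big[A_{jk}(X)s_X(X)\big]+\E_0\Big[\sum_{m}\partial_{u_m}f_{jk}(\mu_0(X))\,\partial_t\mu^m_{t,jk}(X)\Big].
\]
The first term equals $\E_0[(A_{jk}(X)-\psi_{jk})s(O)]$, using $A_{jk}(X)=f_{jk}(\mu_0(X))$ from the identification result in Appendix~\ref{app:proof-identification}.

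For the second term, apply the standard regression-derivative identity
\[
\partial_t\mu^m_t(x)=\E_0\big[S^m\{Z^m_{jk}-\mu^m_0(X)\}\,s(O)\mid X=x\big]/e^m_0(x),
\]
which comes from differentiating $\E_t[S^mZ^m]/\E_t[S^m]$ conditional on $X$. Taking the weighted expectation over $X$ then turns this term into
\[
\E_0\Big[\sum_m \partial_{u_m}f_{jk}(\mu_0(X))\,\tfrac{S^m}{e^m_0(X)}\{Z^m_{jk}-\mu^m_0(X)\}\,s(O)\Big].
\]
Adding the two pieces gives exactly $\E_0[\varphi_{jk}s]$.

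The main obstacle is keeping this conditional-regression derivative clean when menus are selected jointly. Only the single menu's $S^m$ and $Z^m$ enter $\mu^m$, so the other menus' selection indicators should drop out. I would handle this by conditioning on $X$ and $S^m$ alone. MAR is not needed for the derivative itself, since it is a statement about the observed law; MAR only enters through identification.

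\textbf{Step 3: mean-zero property.} Conditioning on $X$, we have $\E_0[S^m(Z^m_{jk}-\mu^m_0(X))\mid X]=e^m_0(X)\cdot 0$ by the definition of $\mu^m_0$. The debiasing term therefore has conditional mean zero. This yields $\E_0[\Gamma_{jk}(O;\eta_0)\mid X]=A_{jk}(X)$, hence $\E_0[\Gamma_{jk}(O;\eta_0)]=\psi_{jk}$ and $\E_0[\varphi_{jk}]=0$.
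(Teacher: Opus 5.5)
Your proposal is correct and follows essentially the same route as the paper. Both arguments factor the observed-data density and differentiate $\psi_{jk}(P_t)$ by the chain rule. Both obtain $\dot\mu^m_0$ by applying the quotient rule to $\E_{P_t}[S^mZ^m_{jk}\mid X]/\E_{P_t}[S^m\mid X]$ with the conditional score, and both conclude from mean-zero, square-integrability, and the fact that the nonparametric tangent space is $L_2^0(P_0)$. Your concern about jointly selected menus does not need separate treatment. Using the full conditional score $s(O)-s_X(X)$ in the quotient rule, as the paper does, already gives your displayed identity without conditioning on $S^m$ alone.
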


\begin{proof}[Proof of Lemma~\ref{lem:payoff-eif}]
We consider a regular one-dimensional parametric submodel $\{P_t:|t|<\delta\}$ of the observed-data law, with $P_{t=0}=P_0$. Let the densities be $p_t$. We write $O=(X,\mathbf S,\widetilde{\mathbf Y})$, where $\mathbf S=(S^m)_{m\in\Qcal}$ and $\widetilde{\mathbf Y}=(S^mY_m)_{m\in\Qcal}$. We factor the joint density as
\begin{align*}
    p_t(o)=p_t(x)\,p_t(\mathbf s\mid x)\,
    p_t(\widetilde{\mathbf y}\mid x,\mathbf s).
\end{align*}
We define the score as the derivative of the log density at $t=0$:
\begin{align*}
    s(O)&:=\left.\frac{\mathrm d}{\mathrm dt}\log p_t(O)\right|_{t=0}\\
    &=s_X(X)+s_{\mathbf S\mid X}(\mathbf S\mid X)
      +s_{\widetilde{\mathbf Y}\mid X,\mathbf S}
        (\widetilde{\mathbf Y}\mid X,\mathbf S),
\end{align*}
where each component is the log derivative of the corresponding density factor. The score has finite second moment, and its components satisfy
\begin{align*}
    \E_0&[s_X(X)]=0,\\
    \E_0&[s_{\mathbf S\mid X}(\mathbf S\mid X)\mid X]=0,\\
    \E_0&[s_{\widetilde{\mathbf Y}\mid X,\mathbf S}
        (\widetilde{\mathbf Y}\mid X,\mathbf S)\mid X,\mathbf S]=0.
\end{align*}
Thus, $\E_0[s(O)]=0$, $s_X(X)=\E_0[s(O)\mid X]$, and $s-s_X$ is the conditional score of $O$ given $X$.

We fix $j\ne k$ and abbreviate $f=f_{jk}$. Under $P_t$, we define $\mu^m_{t,jk}(x)=\E_{P_t}[Z^m_{jk}\mid X=x,S^m=1]$ and collect these regressions in $\mu_t(x)=(\mu^m_{t,jk}(x))_{m\in\Qcal_{jk}}$. We write $\psi_t=\E_{P_t}[f(\mu_t(X))]$ and $\psi_0=\psi_{jk}$. The product and chain rules give
\begin{equation}
\begin{aligned}
    \dot\psi_0
    &:=\left.\frac{\mathrm d}{\mathrm dt}\psi_t\right|_{t=0}\\
    &=\E_0\!\left[\{f(\mu_0(X))-\psi_0\}s_X(X)\right]
      +\sum_{m\in\Qcal_{jk}}\E_0\!\left[
        \partial_m f(\mu_0(X))\dot\mu^m_0(X)\right],
\end{aligned}
\label{eq:eif-payoff-pathwise}
\end{equation}
where $\partial_m f=\partial f/\partial u_m$ and $\dot\mu^m_0=\left.\mathrm d\mu^m_{t,jk}/\mathrm dt\right|_{t=0}$.

In order to get $\dot\mu^m_0$, we compute the regression derivative using the observed-data ratio
\begin{align*}
    \mu^m_{t,jk}(x)
    =\frac{\E_{P_t}[S^mZ^m_{jk}\mid X=x]}
           {\E_{P_t}[S^m\mid X=x]}.
\end{align*}
The quotient rule and the conditional score identity yield
\begin{equation}
\begin{aligned}
    \dot\mu^m_0(x)
    &=\frac{1}{e^m_0(x)}\E_0\!\left[
       S^m\{Z^m_{jk}-\mu^m_{0,jk}(x)\}
       \{s(O)-s_X(x)\}\mid X=x\right]\\
    &=\E_0\!\left[
       \frac{S^m}{e^m_0(x)}\{Z^m_{jk}-\mu^m_{0,jk}(x)\}
       s(O)\mid X=x\right].
\end{aligned}
\label{eq:eif-regression-derivative}
\end{equation}
For the second equality, we use $\E_0[S^m\{Z^m_{jk}-\mu^m_{0,jk}(x)\}\mid X=x]=0$. Then, we substitute Eq.~(\ref{eq:eif-regression-derivative}) into Eq.~(\ref{eq:eif-payoff-pathwise}) and apply iterated expectations. Since $f(\mu_0(X))$ is a function of $X$, we obtain
\begin{equation}
\begin{aligned}
    \dot\psi_0
    &=\E_0\!\left[\{f(\mu_0(X))-\psi_0\}s(O)\right]\\
    &\quad+\sum_{m\in\Qcal_{jk}}\E_0\!\left[\partial_m f(\mu_0(X))
       \frac{S^m}{e^m_0(X)}
       \big\{Z^m_{jk}-\mu^m_{0,jk}(X)\big\}s(O)\right]\\
    &=\E_0\!\left[\{\Gamma_{jk}(O;\eta_0)-\psi_{jk}\}s(O)\right].
\end{aligned}
\label{eq:eif-payoff-score-representation}
\end{equation}
We now verify the EIF definition in Section~\ref{subsec:efficient-influence-function} for $\varphi_{jk}(O;P_0):=\Gamma_{jk}(O;\eta_0)-\psi_{jk}$. Each residual correction has conditional mean zero given $X$, and $\E_0[f(\mu_0(X))]=\psi_{jk}$, so $\E_0[\varphi_{jk}]=0$. Bounded verdicts, bounded derivatives, and positivity give $\E_0[\varphi_{jk}^2]<\infty$. Thus, $\varphi_{jk}$ belongs to $L_2^0(P_0)$, the space of square-integrable mean-zero functions, which is the closed tangent space of the nonparametric observed-data model. Since the submodel was arbitrary, Eq.~(\ref{eq:eif-payoff-score-representation}) represents every pathwise derivative exactly as required by Eq.~(\ref{eq:eif-pathwise-definition}), with $\psi=\psi_{jk}$ and $\varphi=\varphi_{jk}$. These properties establish that $\varphi_{jk}$ is the unique canonical gradient and hence the EIF.

At a generic law $P$, we obtain $\varphi_{jk}(O;P)=\Gamma_{jk}(O;\eta_P)-\psi_{jk}(P)$, as stated in Eq.~(\ref{eq:payoff-eif-background}). We add back $\psi_{jk}(P)$ and replace $\eta_P$ by a generic nuisance $\eta$ to obtain the uncentered pseudo-outcome in Eq.~(\ref{eq:payoff-pseudo-outcome}).
\end{proof}

\subsection{Neyman-orthogonality of $\Gamma_{jk}$}
\label{app:proof-neyman}

We prove conditional unbiasedness and Neyman-orthogonality under Assumptions~\ref{ass:identifiability}--\ref{ass:smoothness}.

\begin{proof}
We fix $j\ne k$, a direction $h$, and a context $x$ outside the null sets in the assumptions and the conditional expectation identities below. We write $\eta_t=\eta_0+th$ for $|t|\le t_0$. Since $\mu^m_{0,jk}(x)\in[-1,1]$ and $e^m_0(x)\ge\elb$, we can choose $t_0=t_0(x,h)>0$ small enough that $\mu^m_t(x)\in I_\alpha$ and $e^m_t(x)>0$ for every $m\in\Qcal_{jk}$. We write $f=f_{jk}$ and collect the outcome components into $u_t(x)=\{\mu_t^m(x)\}_{m\in\Qcal_{jk}}$, so $u_t(x)=u_0(x)+t h_\mu(x)$. Assumption~\ref{ass:identifiability} gives $g^m_{jk}(x)=\mu^m_{0,jk}(x)$ and hence $A_{jk}(x)=f(u_0(x))$. Define
\begin{equation}
G(t)\ :=\ \E_0\big[\Gamma_{jk}(O;\eta_t)\ \big|\ X=x\big],
\qquad |t|\le t_0 ,
\label{eq:Gdef}
\end{equation}
where the expectation remains under the fixed law $P_0$ as the nuisance argument varies. We will show $G(0)=A_{jk}(x)$ and $G'(0)=0$.

Bounded verdicts, continuity of the aggregator gradient, and positive propensities give conditional integrability along the path. Conditional on $X=x$, the nuisance factors in Eq.~(\ref{eq:payoff-pseudo-outcome}) are deterministic, so
\begin{equation}
G(t)= f\big(u_t(x)\big)
+\sum_{m\in\Qcal_{jk}}
\frac{\partial f}{\partial u_m}\big(u_t(x)\big)\,
\frac{1}{e^m_t(x)}\,
\underbrace{\E_0\Big[S^m\big\{Z^m_{jk}-\mu^m_t(x)\big\}\ \Big|\ X=x\Big]}_{=:\,T_m(t)} .
\label{eq:Gstep1}
\end{equation}
The selected residual has an explicit conditional mean. By the definition of the observed-data regression $\mu^m_{0,jk}$,
\begin{equation}
\E_0[S^m Z^m_{jk}\mid X=x]
=e^m_0(x)\E_0[Z^m_{jk}\mid X=x,S^m=1]
=e^m_0(x)\mu^m_{0,jk}(x).
\label{eq:ignorability-use}
\end{equation}
This identity does not require MAR. We use Assumption~\ref{ass:identifiability} only to identify the regression with the full-data margin. Since $\E_0[S^m\mu^m_t(x)\mid X=x]=e^m_0(x)\mu^m_t(x)$, subtracting gives
\begin{equation}
T_m(t)=e^m_0(x)\big\{\mu^m_{0,jk}(x)-\mu^m_t(x)\big\}
= -\,t\,e^m_0(x)\,h_{\mu}^m(x),
\label{eq:Tq}
\end{equation}
where the last equality uses $\mu_t=\mu_0+t h_\mu$. Substituting this expression into Eq.~(\ref{eq:Gstep1}), we obtain
\begin{equation}
G(t)= f\big(u_t(x)\big)\ -\ t\,B(t),
\qquad
B(t):=\sum_{m\in\Qcal_{jk}}
\frac{\partial f}{\partial u_m}\big(u_t(x)\big)\,
\frac{e^m_0(x)}{e^m_t(x)}\,h_{\mu}^m(x).
\label{eq:Gclosed}
\end{equation}
We use this expression as a version of the conditional expectation for all $|t|\le t_0$. At $t=0$ it gives $G(0)=f(u_0(x))=A_{jk}(x)$, proving conditional unbiasedness. Continuity of the gradient and positivity of the denominators imply that $B(t)$ is continuous at zero.

We can now differentiate the explicit expression for $G$, without interchanging a derivative and an expectation. For $t\ne0$, \eqref{eq:Gclosed} gives
\begin{equation}
\frac{G(t)-G(0)}{t}
=\frac{f\big(u_t(x)\big)-f\big(u_0(x)\big)}{t}-B(t).
\label{eq:Gprime}
\end{equation}
By Assumption~\ref{ass:smoothness}, the chain rule along the affine path $u_t(x)=u_0(x)+t h_\mu(x)$ gives
\begin{equation}
\left.\frac{\mathrm d}{\mathrm dt}f\big(u_t(x)\big)\right|_{t=0}
=\sum_{m\in\Qcal_{jk}}\frac{\partial f}{\partial u_m}\big(u_0(x)\big)\,h_{\mu}^m(x)
=\big\langle \nabla f\big(u_0(x)\big),\ h_{\mu}(x)\big\rangle .
\label{eq:chain}
\end{equation}
Since $e^m_t(x)=e^m_0(x)$ at $t=0$, the limit of the correction term is
\begin{equation}
B(0)=\sum_{m\in\Qcal_{jk}}\frac{\partial f}{\partial u_m}\big(u_0(x)\big)\,h_{\mu}^m(x)
=\big\langle \nabla f\big(u_0(x)\big),\ h_{\mu}(x)\big\rangle .
\label{eq:B0}
\end{equation}
The limits in \eqref{eq:chain} and \eqref{eq:B0} cancel in \eqref{eq:Gprime}, giving
\begin{align*}
G'(0)
=\big\langle \nabla f(u_0(x)),h_{\mu}(x)\big\rangle
-\big\langle \nabla f(u_0(x)),h_{\mu}(x)\big\rangle
=0 .
\end{align*}
Thus, for every direction $h$, Neyman-orthogonality holds for $P_X$-almost every $x$.
\end{proof}

\subsection{Exact conditional bias and second-order remainder}
\label{app:proof-bias}

We derive the exact conditional bias and prove Eq.~(\ref{eq:reminder}) under the identification, boundedness, and smoothness conditions in Assumptions~\ref{ass:identifiability}--\ref{ass:smoothness}.

\begin{lemma}[Exact conditional bias]
\label{lem:exact-bias}
Under these conditions, we fix $j\ne k$, write $f=f_{jk}$ and $\mu(x)=(\mu^m_{jk}(x))_{m\in\Qcal_{jk}}$, and consider a generic nuisance $\eta=(\mu,e)$ with $\mu^m_{jk}(x)\in I_\alpha$ and $e^m(x)>0$. We define $\Delta\mu=\mu-\mu_0$ and $\Delta e^m=e^m-e^m_0$. Then, for almost every $x$,
\begin{equation}
\E_0\big[\Gamma_{jk}(O;\eta)\mid X=x\big]-A_{jk}(x)
= -R_f(x)
+\sum_{m\in\Qcal_{jk}}\frac{\partial f}{\partial u_m}\big(\mu(x)\big)\,
\frac{\Delta e^m(x)}{e^m(x)}\,\Delta \mu^m(x),
\label{eq:exact-bias}
\end{equation}
where $R_f(x)=f(\mu_0(x))-f(\mu(x))-\big\langle\nabla f(\mu(x)),\,\mu_0(x)-\mu(x)\big\rangle$ is the first-order Taylor remainder of $f$ at $\mu(x)$. Consequently, Eq.~(\ref{eq:reminder}) holds.
\end{lemma}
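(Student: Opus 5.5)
The plan is to reuse the closed-form conditional expectation from the orthogonality proof in Appendix~\ref{app:proof-neyman}, now at a generic nuisance rather than along a path. Fix $x$ outside the relevant null sets. Conditional on $X=x$, the nuisance factors in $\Gamma_{jk}(O;\eta)$ are deterministic. Hence
\begin{align*}
\E_0[\Gamma_{jk}(O;\eta)\mid X=x]=f(\mu(x))+\sum_{m\in\Qcal_{jk}}\partial_m f(\mu(x))\frac{1}{e^m(x)}\E_0\big[S^m\{Z^m_{jk}-\mu^m(x)\}\mid X=x\big].
\end{align*}
By Eq.~(\ref{eq:ignorability-use}), the inner expectation equals $e^m_0(x)\{\mu^m_0(x)-\mu^m(x)\}=-e^m_0(x)\Delta\mu^m(x)$. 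The conditional mean is therefore
\begin{align*}
f(\mu)-\sum_m\partial_m f(\mu)\frac{e^m_0}{e^m}\Delta\mu^m .
\end{align*}

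Next I subtract $A_{jk}(x)=f(\mu_0(x))$, which holds by identification (Assumption~\ref{ass:identifiability}). I split $e^m_0/e^m=1-\Delta e^m/e^m$. The part with ratio $1$ gives $f(\mu)-f(\mu_0)-\langle\nabla f(\mu),\Delta\mu\rangle$. Since $\mu_0-\mu=-\Delta\mu$, this equals $f(\mu)-f(\mu_0)+\langle\nabla f(\mu),\mu_0-\mu\rangle=-R_f(x)$, with $R_f$ as defined in the lemma. The part with $-\Delta e^m/e^m$ gives $+\sum_m\partial_m f(\mu)\frac{\Delta e^m}{e^m}\Delta\mu^m$. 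Together these are exactly Eq.~(\ref{eq:exact-bias}). The only care needed is sign bookkeeping, and checking that both $\mu(x)$ and $\mu_0(x)$ lie in $I_\alpha$ so that $f$ is defined on the segment between them.

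For the consequence Eq.~(\ref{eq:reminder}), I bound the two terms separately. For $R_f$, I write it as $\int_0^1\langle\nabla f(\mu+s(\mu_0-\mu))-\nabla f(\mu),\mu_0-\mu\rangle\,ds$, which is valid because $I_\alpha^{\Qcal_{jk}}$ is convex and $f$ is $C^1$ there. The $L_f$-Lipschitz gradient (Assumption~\ref{ass:smoothness}) then gives $|R_f|\le\int_0^1 L_f s\|\Delta\mu\|_2^2\,ds=\frac{L_f}{2}\|\Delta\mu(x)\|_2^2$. The product term is bounded termwise by the triangle inequality. I expect no real obstacle here. The only point to be careful about is stating that the remainder is taken along the segment inside $I_\alpha$, which is where the boundedness assumption on generic nuisances enters.
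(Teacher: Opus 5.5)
Your proposal is correct and follows the paper's proof essentially step for step. Both arguments compute the selected-residual conditional mean as $-e^m_0\Delta\mu^m$, subtract $A_{jk}(x)=f(\mu_0(x))$, and split $e^m_0/e^m=1-\Delta e^m/e^m$. Both then bound $R_f$ using the $L_f$-Lipschitz gradient on the convex cube $I_\alpha^{\Qcal_{jk}}$ and finish with the triangle inequality. Your integral-form Taylor remainder just writes out the step the paper states in one line.
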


\begin{proof}
We fix $x$ and suppress it below. The selected residual has conditional mean $\E_0[S^m(Z^m_{jk}-\mu^m_{jk})\mid X=x]=-e^m_0\Delta\mu^m$. We substitute this identity into Eq.~(\ref{eq:payoff-pseudo-outcome}) and use $A_{jk}(x)=f(\mu_0)$ to obtain
\begin{align*}
    \ADB_{jk}(x;\eta)-A_{jk}(x)
    &=f(\mu)-f(\mu_0)
      -\sum_{m\in\Qcal_{jk}}\partial_m f(\mu)
       \frac{e^m_0}{e^m}\Delta\mu^m\\
    &=\underbrace{f(\mu)-f(\mu_0)-\langle\nabla f(\mu),\Delta\mu\rangle}_{-R_f(x)}
      +\sum_{m\in\Qcal_{jk}}\partial_m f(\mu)
       \left(1-\frac{e^m_0}{e^m}\right)\Delta\mu^m.
\end{align*}
The identity $1-e^m_0/e^m=\Delta e^m/e^m$ gives Eq.~(\ref{eq:exact-bias}). Since $\mu_0(x)\in[-1,1]^{\Qcal_{jk}}$ and $I_\alpha^{\Qcal_{jk}}$ is convex, the segment between $\mu(x)$ and $\mu_0(x)$ lies in $I_\alpha^{\Qcal_{jk}}$. The Lipschitz gradient bound therefore gives $|R_f(x)|\le (L_f/2)\|\Delta\mu(x)\|_2^2$, and the triangle inequality proves Eq.~(\ref{eq:reminder}). The Lipschitz gradient on the bounded cube $I_\alpha^{\Qcal_{jk}}$ also gives $G_f:=\max_{j\ne k}\sup_{u\in I_\alpha^{\Qcal_{jk}}}\|\nabla f_{jk}(u)\|_1<\infty$. If $e^m(x)\ge\elb$, as in Assumption~\ref{ass:boundedness}, then
\begin{equation}
\Big|\E_0\big[\Gamma_{jk}(O;\eta)\mid X=x\big]-A_{jk}(x)\Big|
\ \le\ \tfrac{L_f}{2}\big\|\Delta \mu(x)\big\|_2^2
+\frac{G_f}{\elb}\max_{m\in\Qcal_{jk}}
\big|\Delta e^m(x)\Delta \mu^m(x)\big| .
\label{eq:second-order}
\end{equation}
\end{proof}

Under the remainder-component conditions in Eq.~(\ref{eq:remainder-component-rates}), this bound yields conditional bias of order $O_p(r_\mu^2+r_\mu r_e)$ in $L_2(P_X)$. For an affine aggregator, $R_f\equiv0$, and Eq.~(\ref{eq:exact-bias}) vanishes if either $\mu=\mu_0$ or $e=e_0$. For a nonlinear aggregator, correct propensities alone leave the outcome-curvature term $-R_f(x)$.

\subsection{Proof of Proposition~\ref{prop:oracle-recovery}}
\label{app:proof-oracle-recovery}

\begin{proof}
Theorem~\ref{thm:DPME} gives $\ADB(x;\eta_0)=A(x)$, and Proposition~\ref{prop:gap-loss-properties} gives $G_\Om(\pi;A(x),x)\ge0$, with equality exactly when $\pi=\pi^\star(x)$. Under Assumption~\ref{ass:realizability}, we choose $\theta_0$ with $\pi_{\theta_0}(X)=\pi^\star(X)$ almost surely, so $\Lcal(\theta_0;\eta_0)=0$ is the minimum risk. Every minimizer therefore has zero expected nonnegative gap, which implies $\pi_\theta(X)=\pi^\star(X)$ almost surely. Conversely, every such policy has zero risk and attains the minimum.
\end{proof}

\subsection{Proof of Theorem~\ref{thm:gap-loss-orthogonal}}
\label{app:proof-gap-loss-orthogonal}

We first prove differentiability under Assumptions~\ref{ass:boundedness}--\ref{ass:strong-convexity}. We then use identification and the payoff remainder to prove universal Neyman-orthogonality.

\begin{proof}

\noindent\textbf{Differentiability.}
We fix a nuisance $\eta$ satisfying Assumption~\ref{ass:boundedness} and write $B(x)=\ADB(x;\eta)$. The Lipschitz aggregator gradients on the bounded domain $I_\alpha^{\Qcal_{jk}}$ and the propensity lower bound give a uniform bound on $B(x)$ through Eq.~(\ref{eq:payoff-pseudo-outcome}). For any matrix $B$, strong convexity gives the unique maximizer $q_B(p):=\argmin_{q\in\DeltaK}\{\Om(q)-\langle Bp,q\rangle\}$ in the variational gap. Danskin's theorem gives
\begin{align}
    g_B(p):=\nabla_p G_\Om(p;B,x)
    =\nabla\Om(p)-(B+B^\top)p+B^\top q_B(p).
    \label{eq:gap-general-policy-gradient}
\end{align}
This formula holds without skew-symmetry of $B$ and reduces to Eq.~(\ref{eq:gap-gradient}) when $B$ is skew-symmetric. We fix a direction $h_\theta$ and define $\ell_t(x):=G_\Om(\pi_{\theta+t h_\theta}(x);B(x),x)$. The chain rule gives
\begin{align*}
    \frac{\mathrm d}{\mathrm dt}\ell_t(x)
    =\left\langle g_{B(x)}(\pi_{\theta+t h_\theta}(x)),
      D_\theta\pi_{\theta+t h_\theta}(x)[h_\theta]\right\rangle.
\end{align*}
The uniform bound on $B(x)$ and continuity of $\nabla\Om$ on the compact simplex give $\|g_{B(x)}(p)\|_*\le C_g$ uniformly in $x$ and $p\in\DeltaK$. For sufficiently small $|t|$, the segment from $\theta$ to $\theta+t h_\theta$ lies in $U_\theta$. The mean value theorem and duality therefore give
\begin{align*}
    \left|\frac{\ell_t(X)-\ell_0(X)}{t}\right|
    \le C_g M_\theta(X)\|h_\theta\|_2.
\end{align*}
Eq.~(\ref{eq:policy-jacobian-envelope}) makes the bound integrable, so dominated convergence justifies differentiation under expectation. The chain rule at $t=0$ then gives
\begin{align}
    D_\theta\Lcal(\theta;\eta)[h_\theta]
    &=\E_X\!\left[
      \left\langle g_{B(X)}(\pi_\theta(X)),
      D_\theta\pi_\theta(X)[h_\theta]\right\rangle\right].
    \label{eq:gap-population-gradient}
\end{align}
Therefore, the loss is differentiable with respect to $\theta$.

\noindent\textbf{Neyman-orthogonality.}
We fix $\theta$, a policy direction $h_\theta$, and a bounded nuisance direction $h_\eta$. We write $B_t(x)=\ADB(x;\eta_0+t h_\eta)$, so $B_0(x)=A(x)$. Positivity and boundedness of $h_\eta$ give $e_t^m(x)\ge\elb/2$ for sufficiently small $|t|$. Since $\mu^m_{0,jk}(x)\in[-1,1]$, boundedness of $h_\eta$ also gives $\mu^m_{0,jk}(x)+th^m_\mu(x)\in I_\alpha$ when $|t|\|h_\mu\|_\infty<\alpha_\mu$. Eq.~(\ref{eq:reminder}), the Lipschitz aggregator gradients, and the finite pair and menu collections therefore give
\begin{align}
    \mathop{\mathrm{ess\,sup}}_x\|B_t(x)-A(x)\|_\star
    \le C_h t^2,
    \label{eq:gap-path-remainder}
\end{align}
where $C_h$ is independent of $x$ and $t$.

We next bound the change in the policy gradient induced by a payoff perturbation. We fix $x$, suppress it below, and use the maximizer $q_B(p)$ defined above. We add the first-order optimality inequalities for $q_B(p)$ and $q_A(p)$ and use strong convexity to obtain
\begin{align*}
    \kappa\|q_B(p)-q_A(p)\|^2
    &\le\langle(B-A)p,q_B(p)-q_A(p)\rangle,\\
    \|q_B(p)-q_A(p)\|
    &\le\kappa^{-1}\|B-A\|_\star.
\end{align*}
The second inequality uses duality and the norm normalization in Section~\ref{sec:problem}. We apply Eq.~(\ref{eq:gap-general-policy-gradient}) and $\|B^\top\|_\star=\|B\|_\star$ to obtain
\begin{align}
    \|g_B(p)-g_A(p)\|_*
    &\le 3\|B-A\|_\star
      +\|A\|_\star\|q_B(p)-q_A(p)\|\nonumber\\
    &\le (3+\kappa^{-1}\|A\|_\star)\|B-A\|_\star.
    \label{eq:gap-policy-gradient-stability}
\end{align}
The true payoff is uniformly bounded, so $C_A:=3+\kappa^{-1}\mathop{\mathrm{ess\,sup}}_x\|A(x)\|_\star<\infty$ bounds the factor on the right uniformly in $x$ and $p$.

We now combine the payoff remainder with the policy-gradient bound. Along the path $\eta_0+t h_\eta$, the same domination argument applies with propensity lower bound $\elb/2$. We apply Eq.~(\ref{eq:gap-population-gradient}) and the bounds in Eqs.~(\ref{eq:gap-path-remainder}) and~(\ref{eq:gap-policy-gradient-stability}) to obtain
\begin{align}
    &\left|D_\theta\Lcal(\theta;\eta_0+t h_\eta)[h_\theta]
       -D_\theta\Lcal(\theta;\eta_0)[h_\theta]\right|\nonumber\\
    &\qquad\le C_A C_h t^2\|h_\theta\|_2\E_X[M_\theta(X)].
    \label{eq:gap-policy-derivative-remainder}
\end{align}
We divide by $|t|$ and let $t\to0$. Hence,
\begin{align*}
    D_\eta D_\theta\Lcal(\theta;\eta_0)[h_\theta,h_\eta]
    =\lim_{t\to0}
    \frac{D_\theta\Lcal(\theta;\eta_0+t h_\eta)[h_\theta]
          -D_\theta\Lcal(\theta;\eta_0)[h_\theta]}{t}
    =0.
\end{align*}
Since $\theta$ was arbitrary, this proves the universal Neyman-orthogonality in Eq.~(\ref{eq:gap-orthogonality}).
\end{proof}

\subsection{Empirical error bound}
\label{app:proof-avg-exploit}
\label{app:empirical-error-bound}

We first establish the deterministic oracle comparison underlying Theorem~\ref{thm:avg-exploit} using boundedness, smoothness, and strong convexity in Assumptions~\ref{ass:boundedness}--\ref{ass:strong-convexity}. We then use identification, cross-fitting, and the estimation rates in Assumptions~\ref{ass:identifiability}, \ref{ass:cross-fitting}, and~\ref{ass:estimation-rates} to obtain the rate form. The comparison does not require realizability or a specified generalization rate.

\begin{proof}
Let $\widehat\theta$ be an empirical minimizer of $\widehat{\Lcal}_n$ as defined in Eq.~(\ref{eq:empirical-gap-loss}). We define the population loss and abbreviate the deviation and payoff error as
\begin{align*}
\widehat{\Lcal}(\theta)&:=\E_X[G_\Om(\pi_\theta(X);\AhatDB(X;\widehat\eta),X)],\\
\varepsilon_{\mathrm{gen}}&:=\sup_{\theta\in\Theta}|\widehat{\Lcal}(\theta)-\widehat{\Lcal}_n(\theta)|,
\qquad
\mathcal E_A:=\kappa^{-1}\E_X\|\AhatDB(X;\widehat\eta)-A(X)\|_\star^2.
\end{align*}

Fix a context $x$ at which the assumptions hold, write $A=A(x)$ and $B=\AhatDB(x;\widehat\eta)$, and let $p\in\DeltaK$ be arbitrary. For either payoff $C\in\{A,B\}$, define
\begin{align*}
\varphi_C(p,q):=\langle-Cp,p-q\rangle+\Om(p)-\Om(q),
\qquad q\in\DeltaK.
\end{align*}
By the variational expression in the gap definition \eqref{eq:gap}, $G_\Om(p;C,x)=\max_{q\in\DeltaK}\varphi_C(p,q)$. By Assumption~\ref{ass:smoothness}, $\Om$ is continuous, so $q\mapsto\varphi_C(p,q)$ is continuous on the compact simplex and attains a finite maximum. Taking $q=p$ gives $\varphi_C(p,p)=0$, so both gaps are nonnegative. The candidate $p$ need not be an equilibrium of either game.

To compare the gaps, we use strong concavity in the opponent strategy. For $q_0,q_1\in\DeltaK$ and $\alpha\in[0,1]$, put $q_\alpha=(1-\alpha)q_0+\alpha q_1$. Assumption~\ref{ass:strong-convexity} gives
\begin{align*}
\Om(q_\alpha)\le(1-\alpha)\Om(q_0)+\alpha\Om(q_1)
-\frac{\kappa}{2}\alpha(1-\alpha)\|q_1-q_0\|^2.
\end{align*}
The term $\langle-Bp,p-q\rangle$ is affine in $q$, and $\Om(p)$ is constant in $q$. Negating the displayed inequality and adding these terms therefore yields
\begin{align}
\varphi_B(p,q_\alpha)
&\ge(1-\alpha)\varphi_B(p,q_0)+\alpha\varphi_B(p,q_1)\nonumber\\
&\quad+\frac{\kappa}{2}\alpha(1-\alpha)\|q_1-q_0\|^2.
\label{eq:strong-gap-concavity}
\end{align}
Thus $q\mapsto\varphi_B(p,q)$ is $\kappa$-strongly concave with respect to the same norm used in the theorem.

Fix $\alpha\in(0,1)$. For any $q\in\DeltaK$, apply \eqref{eq:strong-gap-concavity} with $q_0=p$ and $q_1=q$. Since $(1-\alpha)p+\alpha q\in\DeltaK$ and $\varphi_B(p,p)=0$,
\begin{align*}
G_\Om(p;B,x)
&\ge\varphi_B\big(p,(1-\alpha)p+\alpha q\big)\\
&\ge\alpha\varphi_B(p,q)+\frac{\kappa}{2}\alpha(1-\alpha)\|p-q\|^2.
\end{align*}
Dividing by $\alpha$ and rearranging gives
\begin{equation}
\varphi_B(p,q)\le\frac1\alpha G_\Om(p;B,x)-\frac{\kappa(1-\alpha)}{2}\|p-q\|^2
\qquad(q\in\DeltaK).
\label{eq:strong-gap-midpoint}
\end{equation}
The midpoint choice $\alpha=1/2$ gives the coefficient $2$ on the gap; retaining $\alpha$ allows that coefficient to approach $1$.

We now relate the two payoff matrices. Write $E=B-A$. Subtracting the two variational gap objectives gives the exact identity
\begin{align*}
\varphi_A(p,q)=\varphi_B(p,q)+\langle Ep,p-q\rangle.
\end{align*}
This identity holds for arbitrary real matrices $A$ and $B$. Let $a=\|Ep\|_*\ge0$. The dual-norm inequality gives $\langle Ep,p-q\rangle\le a\|p-q\|$. Combining this with \eqref{eq:strong-gap-midpoint} yields
\begin{align*}
G_\Om(p;A,x)
&\le\frac1\alpha G_\Om(p;B,x)
+\max_{q\in\DeltaK}\left\{a\|p-q\|-\frac{\kappa(1-\alpha)}{2}\|p-q\|^2\right\}\\
&\le\frac1\alpha G_\Om(p;B,x)
+\sup_{t\ge0}\left\{at-\frac{\kappa(1-\alpha)}{2}t^2\right\}.
\end{align*}
The second inequality enlarges the set of feasible distances to all nonnegative real numbers. Completing the square,
\begin{align*}
at-\frac{\kappa(1-\alpha)}{2}t^2
=\frac{a^2}{2\kappa(1-\alpha)}
-\frac{\kappa(1-\alpha)}{2}\left(t-\frac{a}{\kappa(1-\alpha)}\right)^2.
\end{align*}
The supremum is $a^2/[2\kappa(1-\alpha)]$, attained at $t=a/[\kappa(1-\alpha)]$. Taking $\alpha=1/2$ gives
\begin{equation}
G_\Om(p;A,x)
\le2G_\Om(p;B,x)
+\frac1\kappa\|(B-A)p\|_*^2.
\label{eq:strong-gap-comparison}
\end{equation}
This argument applies to every $p\in\DeltaK$, including boundary policies. Interchanging $A$ and $B$ gives the reverse comparison with the same squared error, since $\|(A-B)p\|_*=\|(B-A)p\|_*$.

Apply \eqref{eq:strong-gap-comparison} with $p=\pi_{\widehat\theta}(X)$ and $B=\AhatDB(X;\widehat\eta)$. Under our measurability conventions, the expectation over contexts is well defined, and
\begin{equation*}
\Exploit(\pi_{\widehat\theta}) \le2\widehat{\Lcal}(\widehat\theta) +\frac1\kappa\E_X\!\left[ \big\|(\AhatDB(X;\widehat\eta)-A(X))\pi_{\widehat\theta}(X)\big\|_*^2 \right].
\end{equation*}
All integrands are nonnegative, so the inequality also holds with an infinite right-hand side if the payoff error has no finite second moment. The $L_2(P_X)$ payoff error in Assumption~\ref{ass:boundedness} and compactness of the strategy domain ensure a finite population gap. By the definition of the uniform deviation,
\begin{align*}
\widehat{\Lcal}(\widehat\theta)
\le\widehat{\Lcal}_n(\widehat\theta)+\varepsilon_{\mathrm{gen}}.
\end{align*}
This inequality applies to the data-dependent $\widehat\theta$ because the supremum defining $\varepsilon_{\mathrm{gen}}$ is over all $\theta\in\Theta$. Substituting it gives the intermediate bound \eqref{eq:empirical-gap-transfer}. Empirical minimization further allows $\widehat{\Lcal}_n(\widehat\theta)$ to be written as $\inf_{\theta\in\Theta}\widehat{\Lcal}_n(\theta)$, although the bound itself holds for any fitted policy in the class.

Thus the resulting bound is
\begin{equation}
\Exploit(\pi_{\widehat\theta}) \le2\big\{\widehat{\Lcal}_n(\widehat\theta)+\varepsilon_{\mathrm{gen}}\big\} +\frac1\kappa\E_X\!\left[ \big\|(\AhatDB(X;\widehat\eta)-A(X))\pi_{\widehat\theta}(X)\big\|_*^2 \right]. \label{eq:empirical-gap-transfer}
\end{equation}

Write $\Lcal^A(\theta):=\E_X[G_\Om(\pi_\theta(X);A(X),X)]=\Exploit(\pi_\theta)$, and let
\begin{align*}
\delta(\theta):=\frac1{2\kappa}\E_X\!\left[\big\|(\AhatDB(X;\widehat\eta)-A(X))\pi_\theta(X)\big\|_*^2\right].
\end{align*}
Applying \eqref{eq:strong-gap-comparison} in both directions and taking expectations over $X$ gives
\begin{align*}
\Lcal^A(\theta)\le2\widehat{\Lcal}(\theta)+2\delta(\theta),
\qquad
\widehat{\Lcal}(\theta)\le2\Lcal^A(\theta)+2\delta(\theta).
\end{align*}
For any comparator $\theta\in\Theta$, the definition of $\varepsilon_{\mathrm{gen}}$ and empirical optimality yield
\begin{align}
\widehat{\Lcal}(\widehat\theta)
&\le\widehat{\Lcal}_n(\widehat\theta)+\varepsilon_{\mathrm{gen}}\nonumber\\
&\le\widehat{\Lcal}_n(\theta)+\varepsilon_{\mathrm{gen}}
\le\widehat{\Lcal}(\theta)+2\varepsilon_{\mathrm{gen}}.
\label{eq:quasi-oracle-transfer}
\end{align}
Using the forward comparison at $\widehat\theta$ and the reverse comparison at $\theta$ therefore gives
\begin{equation}
\Exploit(\pi_{\widehat\theta})\le4\Lcal^A(\theta)+4\varepsilon_{\mathrm{gen}}+4\delta(\theta)+2\delta(\widehat\theta).
\label{eq:oracle-comparator}
\end{equation}
The norm normalization in Section~\ref{sec:problem} gives $\delta(\theta)\le\mathcal E_A/2$ uniformly over $\theta$. Taking the infimum gives the deterministic oracle inequality
\begin{equation}
\Exploit(\pi_{\widehat\theta})\le4\inf_{\theta\in\Theta}\Exploit(\pi_\theta)+4\varepsilon_{\mathrm{gen}}+3\mathcal E_A.
\label{eq:oracle-inequality}
\end{equation}
No population minimizer is required. If $\mathcal E_A$ or $\varepsilon_{\mathrm{gen}}$ is infinite, the inequality holds with an infinite right-hand side.

To obtain the rate form in the theorem, we use the remaining assumptions. The norm normalization in Section~\ref{sec:problem} gives $\|(\AhatDB-A)\pi_{\widehat\theta}\|_*\le\|\AhatDB-A\|_\star$. We separate the error of the second-stage regression from the bias induced by the fitted nuisances:
\begin{align*}
\AhatDB(\cdot;\widehat\eta)-A
=\big\{\AhatDB(\cdot;\widehat\eta)-\ADB(\cdot;\widehat\eta)\big\}
+\big\{\ADB(\cdot;\widehat\eta)-A\big\}.
\end{align*}
The first term has maximum entrywise $L_2(P_X)$ error $e_A$ by Eq.~(\ref{eq:payoff-regression-rate}). Let
\begin{align*}
b_n:=\max_{j,k}\|\ADB_{jk}(\cdot;\widehat\eta)-A_{jk}\|_{L_2(P_X)}.
\end{align*}
The positivity and smoothness conditions in Assumptions~\ref{ass:boundedness} and~\ref{ass:smoothness} allow us to apply \eqref{eq:second-order}. Since $\|\max_m|v_m|\|_{L_2}^2\le\sum_m\|v_m\|_{L_2}^2$ and the menu collections are fixed, the remainder-component rates in Assumption~\ref{ass:estimation-rates} give $b_n=O_p(r_\mu^2+\elb^{-1}r_\mu r_e)$. Under Assumption~\ref{ass:cross-fitting}, the triangle inequality gives the same bound for the average of the fold-specific targets. The triangle inequality also gives $\max_{j,k}\|\AhatDB_{jk}-A_{jk}\|_{L_2(P_X)}\le e_A+b_n$. Thus, writing $C_{\mathrm{mat}}$ for the norm-comparison constant defined in the notation paragraph,
\begin{align*}
\mathcal E_A\le\frac{C_{\mathrm{mat}}}{\kappa}\sum_{j,k}\|\AhatDB_{jk}-A_{jk}\|_{L_2(P_X)}^2
\le\frac{2C_{\mathrm{mat}}K^2}{\kappa}(e_A^2+b_n^2).
\end{align*}
Squaring the rate for $b_n$ yields
\begin{equation}
\mathcal E_A=\frac{1}{\kappa}\E_X\|\AhatDB-A\|_\star^2
=O_p\!\left(\frac{K^2}{\kappa}\big(e_A^2+r_\mu^4+\elb^{-2}r_\mu^2r_e^2\big)\right).
\label{eq:avg-exploit-rate}
\end{equation}
The $O_p$ constant depends on $C_{\mathrm{mat}}$, $L_f$, and $G_f$. For non-Euclidean policy norms, $C_{\mathrm{mat}}$ can itself depend on $K$, whereas the Euclidean norm gives $C_{\mathrm{mat}}=1$. For the affine aggregators of Section~\ref{sec:problem}, $L_f=0$ and $R_f\equiv0$, so the $r_\mu^4$ term is absent and the nuisance contribution is $\elb^{-2}r_\mu^2r_e^2$.

Substituting \eqref{eq:avg-exploit-rate} into \eqref{eq:oracle-inequality}, expanding $\varepsilon_{\mathrm{gen}}$ as its defining supremum, and absorbing the factor $3$ into $O_p(\cdot)$ proves exactly \eqref{eq:avg-exploit}. The stochastic remainder is nonnegative and is bounded by a stochastically bounded multiplier of the displayed scale, which includes the random regression error $e_A^2$. Neither realizability nor a rate for the generalization deviation is used in this argument.
\end{proof}
\subsection{Proof of Corollary~\ref{cor:quasi-oracle}}
\label{app:proof-quasi-oracle}

We specialize Theorem~\ref{thm:avg-exploit} under realizability and loss-class complexity in Assumptions~\ref{ass:realizability} and~\ref{ass:complexity}, the bounded loss envelope in Assumption~\ref{ass:boundedness}, and the additional rates in Eq.~(\ref{eq:quasi-oracle-conditions}).

\begin{proof}
Write $\Lcal^A(\theta):=\Exploit(\pi_\theta)$. Assumption~\ref{ass:realizability} implies $\inf_{\theta\in\Theta}\Lcal^A(\theta)=0$. We first establish the oracle rate, which does not use the nuisance or payoff convergence rates in \eqref{eq:quasi-oracle-conditions}.

We use the i.i.d. sampling in Assumption~\ref{ass:cross-fitting} and the complexity bound in Assumption~\ref{ass:complexity} to control generalization. For a measurable function $h$, write $Ph:=\E_X h(X)$ and $P_nh:=n^{-1}\sum_i h(X_i)$, and set $Z_n:=\sup_{h\in\mathcal H_n}|Ph-P_nh|$. Let $X'_1,\ldots,X'_n$ be an independent copy of the contexts, with empirical average $P'_n$. Jensen's inequality gives the first bound below. Independently swapping each pair $(X_i,X'_i)$ leaves its law unchanged, which gives the equality after introducing the signs. The triangle inequality then gives
\begin{align*}
\E Z_n
&\le\E_{X,X'}\sup_{h\in\mathcal H_n}|P'_nh-P_nh|\\
&=\E_{X,X',\sigma}\sup_{h\in\mathcal H_n}
\left|\frac1n\sum_{i=1}^n\sigma_i\{h(X'_i)-h(X_i)\}\right|\\
&\le2\mathfrak R_n(\mathcal H_n)\le2C n^{-1/2}.
\end{align*}
Markov's inequality implies $Z_n=O_p(n^{-1/2})$.

To define the oracle benchmark, let
\begin{align*}
\Lcal_{n,\mathrm{or}}(\theta):=\frac1n\sum_{i=1}^nG_\Om(\pi_\theta(X_i);A(X_i),X_i),
\qquad
\widehat\theta_{\mathrm{or}}\in\argmin_{\theta\in\Theta}\Lcal_{n,\mathrm{or}}(\theta).
\end{align*}
Because $A\in\mathcal B_n$, its deviation $\varepsilon_{\mathrm{gen,or}}:=\sup_{\theta\in\Theta}|\Lcal^A(\theta)-\Lcal_{n,\mathrm{or}}(\theta)|$ also satisfies $\varepsilon_{\mathrm{gen,or}}\le Z_n=O_p(n^{-1/2})$. For every $\theta\in\Theta$,
\begin{align*}
\Lcal^A(\widehat\theta_{\mathrm{or}})
\le\Lcal_{n,\mathrm{or}}(\widehat\theta_{\mathrm{or}})+\varepsilon_{\mathrm{gen,or}}
\le\Lcal_{n,\mathrm{or}}(\theta)+\varepsilon_{\mathrm{gen,or}}
\le\Lcal^A(\theta)+2\varepsilon_{\mathrm{gen,or}}.
\end{align*}
Taking the infimum and using realizability proves $\Exploit(\pi_{\widehat\theta_{\mathrm{or}}})=O_p(n^{-1/2})$.

For the learner using the estimated payoff, retain $\varepsilon_{\mathrm{gen}}$ and $\mathcal E_A$ as defined in the theorem proof. On the event that the fitted payoff belongs to $\mathcal B_n$, we have $\varepsilon_{\mathrm{gen}}\le Z_n$. This event has probability tending to one, so $\varepsilon_{\mathrm{gen}}=O_p(n^{-1/2})$ even when the fitted payoff uses the policy-training observations. The deterministic comparison \eqref{eq:oracle-inequality} and realizability give
\begin{align*}
\Exploit(\pi_{\widehat\theta})\le4\varepsilon_{\mathrm{gen}}+3\mathcal E_A.
\end{align*}
The matrix-norm comparison in the notation paragraph and the payoff-error decomposition used in \eqref{eq:avg-exploit-rate} give
\begin{equation}
\mathcal E_A\lesssim\frac1\kappa\sum_{j,k}\|\AhatDB_{jk}-A_{jk}\|_{L_2(P_X)}^2
=O_p\!\left(\frac{K^2}{\kappa}\{e_A^2+r_\mu^4+\elb^{-2}r_\mu^2r_e^2\}\right).
\label{eq:oracle-payoff-rate}
\end{equation}
The conditions in \eqref{eq:quasi-oracle-conditions} imply $e_A^2=O_p(n^{-1/2})$, $r_\mu^4\lesssim n^{-1/2}$, and $\elb^{-2}r_\mu^2r_e^2\lesssim n^{-1/2}$. Since $K$ and the regularity constants are fixed, $\mathcal E_A=O_p(n^{-1/2})$. Substitution into \eqref{eq:oracle-inequality} proves the same exploitability rate for $\widehat\theta$, which minimizes the estimated-payoff loss. For affine aggregators, the $r_\mu^4$ term vanishes, so the separate condition on $r_\mu$ is unnecessary. Replacing the payoff and nuisance rate conditions by the corresponding little-$o$ conditions makes $\mathcal E_A=o_p(n^{-1/2})$.
\end{proof}

\end{document}